\newcommand{\tcr}[1]{\textcolor{red}{#1}}
\renewcommand\norm[1]{\lVert#1\rVert}
\title{Generalized Mixed Linear Regression using small Batches
} 
\title{A Meta Meta-Learning Algorithm with Applications to Mixed Linear and Logistic Regression}
\title{Improved Meta-Learning Algorithm for Mixed Linear Regression}
\title{Learning Mixture Models for Federated Learning}
\title{Learning from mixture of heterogeneous batches}
\title{Linear Regression using Heterogeneous Data Sources}
\title{Linear Regression using Heterogeneous Data Batches}
\author{
Ayush Jain\thanks{UC San Diego. Email: \url{ayjain@ucsd.edu}. This work was done while the author was interning part-time at Google Research.}
\And
Rajat Sen\thanks{Google Research. Email: \url{senrajat@google.com}.}
\And
Weihao Kong\thanks{Google Research.  Email: \url{weihaokong@google.com}.}
\And
Abhimanyu Das\thanks{Google Research.  Email: \url{abhidas@google.com }.}
\And
Alon Orlitsky\thanks{UC San Diego. Email: \url{alon@ucsd.edu}.}
}
\begin{document}

\newcommand{\btotal}{m}
\newcommand{\bsize}{n}

\newcommand{\advfrac}{\beta}
\newcommand{\goodfrac}{\alpha}
\newcommand{\allbatches}{B}
\newcommand{\goodbatches}{G}
\newcommand{\advbatches}{A}

\newcommand{\Bsc}{\widetilde\allbatches}
\newcommand{\Gsc}{\goodbatches^\prime}
\newcommand{\Asc}{\advbatches^\prime}

\newcommand{\size}[1]{|{#1}|}

\newcommand{\sizegood}{\size{\goodbatches}}
\newcommand{\sizeadv}{\size{\advbatches}}
\newcommand{\sizeBsc}{\size{\Bsc}}
\newcommand{\sizeGsc}{\size{\Gsc}}
\newcommand{\sizeAsc}{\size{\Asc}}
\newcommand{\dist}{\cD}
\newcommand{\batch}{b}
\newcommand{\reals}{\mathbb{R}}
\newcommand{\domain}{{\reals}^d}

\newcommand{\sampbatch}{S^\batch}
\newcommand{\sampbatchf}{\sampbatch_1}
\newcommand{\sampbatchs}{\sampbatch_2}
\newcommand{\sampbatchx}{S^{\batch^*}}

\newtheorem{condition}{Condition}

\newcommand{\samploss}{f(x,y,w)}
\newcommand{\samplossexp}{\frac{1}{2}(x\cdot w -y)^2}
\newcommand{\sampgrad}{\nabla f(x,y,w)}
\newcommand{\sampgradexp}{(x\cdot w-y)x}

\newcommand{\clipsampgrad}{\nabla f(x,y,w,\kappa)}

\newcommand{\connum}{{C_1}}

\newcommand{\weight}{\beta}

\newcommand{\batchloss}{f(S,w)}
\newcommand{\batchlossexp}{\sum_{i\in [\bsize]}\frac{1}{\bsize} \samploss}
\newcommand{\batchgrad}{\nabla f^\batch (w)}
\newcommand{\batchgradtilde}{\nabla f^\batch (\tilde w)}
\newcommand{\batchgradexp}{\sum_{i\in [\bsize]}\frac{1}{\bsize} \sampgrad}

\newcommand{\clipcollgrad}{\nabla f(S,w,\kappa)}

\newcommand{\clipbatchgrad}{\nabla f(S^b,w,\kappa)}
\newcommand{\clipbatchgradexp}{\sum_{i\in [\bsize]}\frac{1}{\bsize} \clipsampgrad}
\newcommand{\Cov}{\text{Cov}}
\newcommand{\samplenoise}{n_i^\batch}
\newcommand{\smallfrac}{{\alpha_s}}
\newcommand{\medfrac}{{\alpha_m}}
\newcommand{\bsizemid}{{n_m}}
\newcommand{\dbs}{B_s}
\newcommand{\dbm}{B_m}

\newcommand{\cbound}{C_2}
\newcommand{\nhypcon}{C_{\npower}}
\newcommand{\npower}{p}
\newcommand{\uca}{c_1}
\newcommand{\ucb}{c_2}
\newcommand{\ucmf}{c_3}
\newcommand{\uccb}{c_4}
\newcommand{\uccc}{c_5}
\newcommand{\thresholdA}{\theta_0}
\newcommand{\thresholdB}{\theta_1}
\newcommand{\thresholdC}{\theta_2}
\newcommand{\wc}{w_C}
\newcommand{\inlab}[1]{\overset{(#1)}}

\maketitle

\begin{abstract}


In many learning applications, data are collected from multiple sources, each providing a \emph{batch} of samples that by itself is insufficient to learn its input-output relationship. A common approach assumes that the sources fall in one of several unknown subgroups, each with an unknown input distribution and input-output relationship. We consider one of this setup's most fundamental and important manifestations where the output is a noisy linear combination of the inputs, and there are $k$ subgroups, each with its own regression vector. Prior work~\cite{kong2020meta} showed that with abundant small-batches, the regression vectors can be learned with only few, $\tilde\Omega( k^{3/2})$, batches of medium-size with $\tilde\Omega(\sqrt k)$ samples each. However, the paper requires that the input distribution for all $k$ subgroups be isotropic Gaussian, and states that removing this assumption is an ``interesting and challenging problem". We propose a novel gradient-based algorithm that improves on the existing results in several ways. It extends the applicability of the algorithm by: (1) allowing the subgroups' underlying input distributions to be different, unknown, and heavy-tailed; (2) recovering all subgroups followed by a significant proportion of batches even for infinite $k$; (3) removing the separation requirement between the regression vectors; (4) reducing the number of batches and allowing smaller batch sizes.\looseness-1
\end{abstract}

\section{Introduction}








In numerous applications, including federated learning~\cite{wang2021field}, sensor networks~\cite{wax1989unique}, crowd-sourcing~\cite{steinhardt2016avoiding} and recommendation systems~\cite{wang2006unifying}, data are collected from multiple sources, each providing a \emph{batch} of samples. For instance, in movie recommendation systems, users typically rate multiple films. Since all samples in a batch are generated by the same source, they are often assumed to share the same underlying distribution.
However, the batches are frequently very small, e.g., many users provide only few ratings. 
Hence, it may be impossible to learn a different model for each batch. 

A common approach has therefore assumed~\cite{ting1999learning} that all batches share the same underlying distribution and learn this common model by pooling together the data from all batches.
While this may work well for some applications, in others, it may fail, or lack personalization.
For instance, in recommendation systems, it may not capture the characteristics of individual users.

A promising alternative that allows for personalization even with many small batches, assumes that batches can be categorized into $k$ sub-populations with similar underlying distributions. Hence in each sub-population, all underlying distributions are close to and can be represented by a single distribution.
Even when $k$ is large, our work allows the recovery of models for sub-populations  with a significant fraction of batches.
For example, in the recommendation setting, most users can be classified into a few sub-populations such that the distribution of users in the sub-population is close, for instance, those preferring certain genres.

In this paper, we focus on the canonical model of linear regression in supervised learning. 
A distribution $\dist$ of samples $(x,y)$ follows a linear regression model if, for some regression vector $w\in \reals^d$, the output is $y = w\cdot x + \eta$ where input $x$ is a random $d$ dimensional vector
and $\eta$ is a zero-mean noise. { The goal is to recover the regression vectors for all large sub-populations that follow the liner regression model.}


\subsection{Our Results}
This setting was first considered in~\cite{kong2020meta} for meta-learning applications, where they view and term batches as \emph{tasks}. \cite{kong2020meta} argue that in meta-learning applications task or batch lengths follow a long tail distribution and in the majority of the batches only a few labeled examples are available. Only a few batches have medium size labeled samples available, and almost all of them have length $\ll d$. Note that similar observations have been made in the recommender system literature where the distribution of a number of ratings per user follows a long-tailed distribution with an overwhelming number of users rating only a few items while rare tail users rating hundreds of items~\cite{grottke2015distribution}. The same has been observed for the distribution of the number of ratings per item~\cite{park2008long}.
Therefore, it is reasonable to assume that in these applications of interest, a handful of medium-size batches along with a large number, $\Omega(d)$, batches of constant size are available. 
Under this setting our main results allow recovery of all sub-populations that has a significant fraction of batches and follow a linear regression model:

Let $k \in \mathbb{N}$ be the number of distinct sub-populations. For $\alpha > 0$, let $I$ be the collection of all sub-populations that make up more than $\alpha$ fraction of the batches and satisfy a linear regression model {with an output-noise variance $\le \sigma^2$.} For $i\in I$, let $w_i$ be the regression parameter of sub-population $i$. Our goal is to estimate $w_i$'s. 
\begin{theorem}[Informal]
\label{thm:informal_main} 
Given $\tilde{\Omega}(d/\alpha^2)$ small batches of size $\ge 2$, and $\tilde{\Omega}(\min(\sqrt{k}, 1/\sqrt{\alpha})/\alpha)$ medium batches of size $\ge \tilde{\Omega}(\min(\sqrt{k}, 1/\sqrt{\alpha}))$, our algorithm runs in time $poly(d,1/\alpha,k)$ and outputs a list $L$ of size $\tilde{O}(1/\alpha)$ such that w.h.p., for each sub-population $i\in I$, there is at least one estimate in $L$ that is within a distance of $o(\sigma)$ from $w_i$ and has an expected prediction error $\sigma^2(1+o(1))$ for the sub-population $i$.
Furthermore, given $\Omega(\log L)$ samples from the sub-population $i$, we can identify such an estimate from $L$.
\end{theorem}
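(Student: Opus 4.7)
The plan is a list-decoding style algorithm with two phases. In the first phase, each medium batch is used to produce a coarse candidate by ordinary least squares, giving a pool of $\tilde O(\min(\sqrt k,1/\sqrt\alpha)/\alpha)$ vectors. Since each medium batch has size $\tilde\Omega(\min(\sqrt k,1/\sqrt\alpha))$, standard OLS concentration under bounded-moment covariates gives each such estimate a constant-probability guarantee of landing in a fixed-radius ``basin of attraction'' of the parameter $w_i$ of whichever sub-population generated it. A union bound over the at most $1/\alpha$ sub-populations in $I$ then ensures, with high probability, that every $i \in I$ admits at least one seed inside the basin of $w_i$.

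In the second phase, each candidate is refined by a robust gradient-based procedure on the $\tilde\Omega(d/\alpha^2)$ small batches. At an iterate $w$, compute a \emph{clipped} per-batch gradient $\nabla f(S^b,w,\kappa)$ on every small batch $b$ and retain only the batches whose clipped gradient norm is small; these serve as a proxy for batches drawn from the sub-population whose parameter is closest to $w$. Clipping is essential to handle heavy-tailed covariates without assuming isotropy: the threshold $\kappa$ must be chosen so that, for sub-population $i$ when $w$ is near $w_i$, the clipped gradient is almost unbiased with bounded variance for $\nabla L_i(w)$, while batches from sub-populations whose parameter is far from $w$ are either filtered out or have a large mean gradient that pushes $w$ away from them. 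The $\geq 2$ samples per small batch is what lets us cancel the output-noise variance via intra-batch pairing; with this, the aggregated clipped gradient concentrates around $\nabla L_i(w)$ to precision $o(\sigma)$, enabling geometric contraction of $\|w_t - w_i\|$ down to the statistical floor $o(\sigma)$. Pruning near-duplicate survivors yields a list $L$ of size $\tilde O(1/\alpha)$.

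The main obstacle I expect is the refinement analysis: proving that the filtering step correctly isolates sub-population $i$ when neither separation among the $w_i$ nor any common covariate structure is assumed, and that the clipped-gradient estimator simultaneously has small bias and controlled variance across arbitrary heavy-tailed covariate distributions. Propagating these per-iteration guarantees through $\mathrm{poly}\log(1/\alpha)$ rounds --- while preserving a large enough pool of ``good'' batches from sub-population $i$ at every iterate --- is the delicate part. Once refinement is in place, identification from the list is a short concentration argument: evaluate each candidate's empirical squared loss on $\Omega(\log|L|)$ fresh samples from sub-population $i$ and return the minimizer; a median-of-means bound (needed since the square loss may be heavy-tailed) suffices to separate the $o(\sigma)$-close estimate from any candidate at distance $\Omega(\sigma)$ from $w_i$, since the latter incurs prediction error $\sigma^2(1+\Omega(1))$.
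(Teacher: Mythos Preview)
Your proposal has a fundamental gap in the seeding phase and misses the key structural idea that makes the stated batch sizes sufficient.

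First, the OLS seeding cannot work: the medium batches have size $\tilde\Omega(\min(\sqrt{k},1/\sqrt{\alpha}))$, which is far smaller than $d$. Ordinary least squares on $n_m\ll d$ samples is underdetermined; there is no ``standard OLS concentration'' that puts the estimate in any basin of attraction of $w_i\in\reals^d$. The paper does not seed at all---it starts gradient descent from $\hat w^{(1)}=0$ and uses the sampled medium batch $b^*$ only as a \emph{reference} against which to test other medium batches, never as a direct estimator of $w_i$.

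Second, and more importantly, you are missing the subspace-reduction step that is the reason the medium batch size can be as small as $\tilde\Omega(\sqrt{\ell})$ with $\ell=\min\{k,\Theta(1/\alpha)\}$. In the paper, the small batches are \emph{not} filtered or used for per-batch gradient estimation at all; with only two samples each, that would be hopeless. Instead, all small batches are pooled to build a single matrix $A$ whose expectation is $\sum_i p_i\,\E_{\dist_i}[\nabla f]\,\E_{\dist_i}[\nabla f]^\intercal$ (the intra-batch pairing you mention is precisely what makes $\E[A]$ rank-$k$ and PSD). The top-$\ell$ eigenspace of $A$ then approximately contains the expected clipped gradient of $\dist_0$. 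Only after projecting onto this $\ell$-dimensional subspace do the medium batches enter: one tests each medium batch against $b^*$ by comparing their \emph{projected} clipped gradients (this is where $\tilde\Omega(\sqrt{\ell})$ samples per batch suffice), and the surviving medium batches are averaged to estimate the projected gradient. Your scheme reverses the roles---filtering size-2 batches by gradient norm---which cannot reliably separate sub-populations, and without the projection you have no mechanism for the $\sqrt{\ell}$ dependence of the medium batch size to emerge.

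Your identification step (median-of-means on empirical squared loss) is plausible and close in spirit to the paper's; the paper instead runs a pairwise-elimination tournament using gradient projections onto $w-w'$, but this is a minor stylistic difference.
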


Note that to recover regression vectors for all sub-populations $I$, our algorithm only requires $\tilde\Omega(d/\alpha+ \min(k,1/\alpha))$ samples from each sub-population and $\tilde\Omega(d/\alpha^2+ \min(k,1/\alpha)/\alpha)$ samples in total. Note that $\Omega(d)$ samples are required by any algorithm even when $k=1$.  
To the best of our knowledge, ours is the best sample complexity for recovering the linear regression models in the presence of multiple sub-populations using batch sizes smaller than $d$.

\subsection{Comparison to Prior Work}

The only work that provides a polynomial time algorithm in dimension, in the same generality as ours is~\cite{das2022efficient}.
They even allow the presence of adversarial batches.
However, they require $\tilde\Omega(d/\alpha^2)$ batches from the sub-population of size $\tilde \Omega(1/\alpha)$ each, and therefore, $\tilde\Omega(d/\alpha^3)$ samples in total, which exceeds our sample complexity by a factor of $1/\alpha^2$.
Note that the batch length in their setting is at least quadratically larger than ours.
All other works place strong assumptions on the distributions of the sub-population and still require a number of samples much larger than ours, which we discuss next.

Most of the previous works~\cite{CL13, sedghi2016provable,zhong2016mixed,yi2016solving,li2018learning,chen2019learning, diakonikolas2020small, pal2022learning} have addressed  the widely studied \emph{mixed linear regression (MLR)} model where all batches are of size 1, and adhere to the following three assumptions:
\begin{enumerate}[leftmargin=20pt]
\item
All $k$ sub-populations have $\ge \alpha$ fraction of data. This assumption implies $k\le 1/\alpha$.




\item
All $k$ distributions follow a linear regression model. 
\item All $k$ regression coefficients are well separated, namely $\|w_i-w_j\|\ge \Delta, \forall{\ i\ne j}$ .
\end{enumerate}
Even for $k=2$, solving MLR, in general, is NP-hard~\cite{yi2014alternating}. 
Hence all these works  on mixed linear regression, except~\cite{li2018learning}, also made the following assumption:
\begin{enumerate}[leftmargin=20pt]
\item[4.]
All input distributions (i.e., the distribution over $x$) are the same for every sub-population, in fact, the same isotropic Gaussian distribution. 
This implies the distribution of movies that users rate is the same across every user.
\end{enumerate}


With this additional isotropic Gaussian assumption, they provided algorithms that have runtime and sample complexity polynomial in the dimension. 
However, even with these four strong assumptions, their sample complexity is super-polynomial overall.
In particular, the sample complexity in~\cite{zhong2016mixed,chen2019learning,diakonikolas2020small} is quasi-polynomial in $k$ and~\cite{CL13, sedghi2016provable,diakonikolas2020small} require at least a quadratic scaling in $d$. In~\cite{CL13, sedghi2016provable, yi2016solving} the sample complexity scales as a large negative power of the minimum singular value of certain moment matrix of regression vectors that can be zero even when the gap between the regression vectors is large.
In addition,~\cite{zhong2016mixed,yi2016solving,chen2019learning} required zero-noise i.e $\eta=0$.
The only work we are aware of that can avoid Assumption 4 and handle different input distributions for different sub-populations under MLR
is~\cite{li2018learning}.
However, they still require all distributions to be Gaussian and $\eta =0$, and their sample size, and hence run-time is exponential, $\Omega(\exp(k^2))$ in $k$.

The work that most closely relates to ours is~\cite{kong2020meta}, which considers batch sizes $>1$. While it achieves the same dependence as us on $d,k$, and $1/\alpha$, on the length and number of medium and small batches, the sample complexity of the algorithms and the length of medium-size batches had an additional multiplicative dependence on the inverse separation parameter $1/\Delta$. It also required Assumption 4 mentioned in the section.
The follow-up work~\cite{kong2020robust} which still assumes all four assumptions can handle the presence of a small fraction $\ll 1/k^2\alpha^2$ of adversarial batches, but requires $\tilde\Omega(dk^2/\alpha^2+k^5/\alpha^4)$ samples. It also suffers from similar strong assumptions as earlier works and the sum of squares approach makes it impractical. The sum of the square approach, and stronger isotropic Gaussian assumption, allow it to achieve a better dependence on $1/\alpha$ on medium-size batch lengths, however, causing a significant increase in the number of medium-size batches required. \looseness-1

\textbf{Our improvement over prior work.} In contrast, our work avoids all four assumptions, and can recover any sufficiently large sub-populations that follow a linear regression model. In particular: (1) Even when a large number of different sub-populations are present, (e.g., $k\ge 1/\alpha$), we can still recover the regression coefficient of a sub-population with sufficient fraction of batches. (2) The $k$ distributions do not even need to follow a linear regression model. In particular, our algorithm is robust to the presence of sub-populations for which the conditional distribution of output given input is arbitrary. (3) Our work requires no assumption on the separation of regression coefficient $\Delta$, and our guarantees as well have no dependence on the separation. (4) We allow different input distributions for different sub-populations. (5) In addition to removing the four assumptions, the algorithm doesn't require all batches in a sub-population to have identical distributions, it only requires them to be close so that the expected value of gradient for a batch is close to one of the sub-population.\looseness-1

\subsection{Techniques and Organization}
We sample a medium-size batch randomly and recover the regression vector of the population that the sampled batch corresponds to. We estimate the regression vector w.h.p. if there are enough batches in the collection of medium and small-size batches from that sub-populations and the sub-population follows a linear regression model. 

The regression vector minimizes the expected squared loss for the sub-population.
Therefore, we use a gradient-descent-based approach to estimate such a vector. We start with an initial estimate (all zero) and improve this estimate by performing multiple rounds of gradient descent steps.

Our approach to estimating the gradient in each step is inspired by~\cite{kong2020meta}.
However, they used it to directly estimate regression vectors of all sub-populations simultaneously. First, using a large number of smaller batches we estimate a smaller subspace of $\reals^d$ that preserves the norm of the gradient. Next, using the sampled medium-size batch from the sub-population, we test which of the remaining medium-size batches has a projection of gradient close to the sampled batch, and use them to estimate the gradient in this smaller subspace.
The advantage of sub-space reduction is that testing and estimation of the gradient in the smaller subspace is easier, and reduces the minimum length of medium-size batches required for testing and the number of medium-size batches required for estimation.
A crucial ingredient of our algorithm is clipping, which limits the effect of other components and allows the algorithm to work for heavy-tailed distributions. 

Sampling more than $\tilde \Omega(1/\alpha)$ medium-size batches and repeating this process for all the sampled batches ensures that we recover a list containing regression vector estimates for all large subgroups.

We describe the algorithm in detail in Section~\ref{sec:alg} after having presented our main theorems in Section~\ref{sec:theory}. Then in Section~\ref{sec:sims} we compare our algorithm with the one in~\cite{kong2020meta} on simulated datasets, to show that our algorithm performs better in the setting of the latter paper as well as generalizes to settings that are outside the assumptions of~\cite{kong2020meta}.

\section{Problem Formulation and Main Results}
\label{sec:theory}
\subsection{Problem Formulation}
Consider distributions $\dist_{0},\ldots,\dist_{k-1}$ over input-output pairs $(x,y)\in \reals^d\times\reals$.
A \emph{batch} $\batch$ consists of i.i.d. samples from one of the distributions. 
Samples in different batches are independent. There are two sets of batches. Batches in $\allbatches_s$ are \emph{small} and contain at least two samples each, while batches in $\allbatches_m$ are of medium size and contain at least $n_m$ samples. Next, we describe the distributions. To aid this description and the remaining paper we first introduce some notation.





\subsection{Notation}
The \emph{$L_2$ norm} of a vector $u$ is denoted by $\|u\|$ and represents the length of the vector. The \emph{norm}, or \emph{spectral norm}, of a matrix $M$ is denoted by $\|M\|$ and is defined as the maximum value of $\|Mu\|$ for all unit vectors $u$. If $M$ is a symmetric matrix, the norm simplifies to $\|M\|= \max_{\|u\|=1} |u^\intercal Mu|$, and for a positive semidefinite matrix $M$, we have $\|M\|= \max_{\|u\|=1} u^\intercal Mu$.
The trace of a symmetric matrix $M$ is $\text{Tr}(M) := \sum_{i}M_{ii}$, the sum of the elements on the main diagonal of $M$.
We will use the symbol $S$ to denote an arbitrary collection of samples. For a batch denoted by $\batch$, we will use $\sampbatch$ to represent the set of all $\bsize^\batch$ samples in the batch.

\subsection{Data Distributions}\label{sec:datadist}

Let $\Sigma_i:= \E_{\dist_i}[xx^\intercal]$ denote the second-moment matrix of input for distribution $\dist_{i}$.

Let $I\subseteq \{0,1,..,k-1\}$ denote the collection of indices of distributions sampled in at least $\smallfrac$ and $\medfrac$ fractions of the batches in $\dbs$ and $\dbm$, respectively, and satisfy the following assumptions standard in heavy-tailed linear regression~\cite{cherapanamjeri2020optimal, das2022efficient}.

\begin{enumerate}[leftmargin=20pt]
\item(Input distribution) There are constants $C$ and $\connum$ such that for all $i\in I$,
\begin{enumerate}[leftmargin=20pt]
\item $L4$-$L2$ hypercontractivity: For all $u\in \reals^d$, $\E_{\dist_i}[(x
 \cdot u)^4]\le C(\E_{\dist_i}[(x\cdot u)^2])^2$. \label{ass:A}
\item Bounded condition number: For normalization purpose we assume $\min_{\|u\|=1} u^\intercal\Sigma_i u\ge  1$ and to bound the condition number we assume that $\|\Sigma_i\|\le\connum$. \label{ass:B}
\looseness-1
\end{enumerate}
\item(Input-output relation)
There is a $\sigma>0$ s.t. for all $i\in I$, $y = w_i\cdot x+\eta$, where $w_i\in\reals^d$ is an unknown regression vector, and $\eta$ is a noise 
independent of $x$, with zero mean $\E{}_{\dist_i}[\eta]\! =\! 0$, and $\E{}_{\dist_i}[\eta^2]\!\le \! \sigma^2$.
Note that by definition, the distribution of $\eta$ may differ for each $i$.\looseness-1
\end{enumerate}
We will recover the regression vectors $w_i$ for all $i\in I$. For $i\notin I$, we require only that the input distribution satisfies~$\|\Sigma_i\|\le \connum$, same as the second half of assumption 1(b).
The input-output relation for samples generated by $\dist_i$ for $i\notin I$ \textit{may be arbitrary}, and in particular, does not even need to follow a linear regression model, and the fraction of batches with samples from $\dist_i$ in $\dbs$ and $\dbm$ may be arbitrary.

To simplify the presentation, we make two additional assumptions. First, there is a constant $\cbound>0$ such that for all components $i\in \{0,1,..,k-1\}$, and random sample $(x,y)\sim \dist_i$, $\|x\|\le \cbound\sqrt{d}$, a.s. Second, for all $i\in I$ and a random sample $(x,y)\sim \dist_i$, the noise distribution $\eta=y-w_i\cdot x$ is symmetric around $0$. 
As discussed in Appendix~\ref{sec:assump}, these assumptions are not limiting.

\begin{remark}
To simplify the presentation, we assumed that the batches exactly follow one of the $k$ distributions. However, our techniques can be extended to more general scenarios. Let $\dist^\batch$ denote the underlying distribution of batch $\batch$.
Instead of requiring $\dist^\batch =\dist_i$ for some $i\in \{0,1,..,k-1\}$, our methods can be extended to cases when the expected value of the gradients for $\dist^b$ and $\dist_i$ are close and if $i\in I$, regression vector $w_i$ achieves a small mean square error of at most $\sigma^2$.
This is guaranteed if (1) $\|\E_{\dist^\batch}[xx^\intercal]-\Sigma_i\|$ is small, (2) for all $x\in\reals^d$,
$|\E_{\dist^\batch}[y|x] - \E_{\dist_i}[y|x]|$ is small, and
(3) if $i\in I$ then for all $x\in\reals^d$, $\E_{\dist^\batch}[(y-w_i\cdot x)^2|x]\le \sigma^2$. 
The strict identity requirement $\dist^\batch =\dist_i$ 
can therefore be replaced by these three approximation conditions.
\end{remark}

\subsection{Main Results}

\subsubsection{Estimating regression vectors}

We begin by presenting our result for estimating the regression vector of a component $\dist_i$, for any $i\in I$. This result assumes that in addition to the batch collections $\dbs$ and $\dbm$, we have an extra medium-sized batch denoted as $b^*$ which contains samples from $\dist_i$. W.l.o.g, we assume $i=0$.

\begin{theorem}\label{thm:main}
Suppose index $0$ is in set $I$ and let $b^*$ be a batch of $\ge n_m$ i.i.d. samples from $\dist_0$.
For $\delta,\epsilon\in(0,1]$, if $|\dbs| = \tilde\Omega(\frac{d}{\smallfrac^2\epsilon^4})$, $n_m=\tilde\Omega(\min\{\sqrt{k},\frac{1}{\epsilon\sqrt{\smallfrac}}\}\cdot \frac{1}{\epsilon^2})$, and $|\dbm| = \tilde \Omega(\frac{1}{\medfrac}\min\{{\sqrt k},\frac{1}{\epsilon{\sqrt \smallfrac}}\})$, then Algorithm~\ref{alg:maina} runs in polynomial time and returns estimate $\hat w$, such that with probability $\ge 1-\delta$, $\|\hat w-w_0\|\le \epsilon\sigma$.
\end{theorem}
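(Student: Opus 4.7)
The plan is to run gradient descent on the population squared loss
\[
F_0(w) := \tfrac12\,\E_{\dist_0}\bigl[(w\cdot x - y)^2\bigr],
\]
whose gradient is $\nabla F_0(w) = \Sigma_0(w-w_0)$ (the noise term vanishes by $\E[\eta]=0$ and the independence of $\eta$ from $x$), and which is therefore $1$-strongly convex and $\connum$-smooth by Assumption~1(b). If at every iteration $t$ I can produce a $\hat g_t$ satisfying $\|\hat g_t-\nabla F_0(w_t)\|\le \tau\|\nabla F_0(w_t)\|+\epsilon\sigma/4$ for a small absolute constant $\tau<1$, then a constant step-size update $w_{t+1}=w_t-\eta\hat g_t$ contracts $\|w_t-w_0\|$ by a constant factor per step, and after $T=O(\log(1/\epsilon))$ rounds starting from $w_0=0$ the iterate satisfies $\|w_T-w_0\|\le \epsilon\sigma$. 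The whole argument therefore reduces to constructing such a $\hat g_t$ at each $w_t$, using $\dbs$, $\dbm$, and the pivot batch $b^*$.

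The key observation is that the (clipped) sample-average gradient on a batch $b$ drawn from $\dist_i$ has expectation close to $\Sigma_i(w_t-w_i)$, so across sub-populations batch-gradients concentrate around at most $k$ centers, one of which is $\nabla F_0(w_t)$ and is directly accessible via $b^*$. First I would use the $\tilde\Omega(d/\smallfrac^2\epsilon^4)$ small batches to construct a subspace $V\subset\reals^d$ of dimension $r=\min(\sqrt k,1/(\epsilon\sqrt\smallfrac))$ such that $\Pi_V\nabla F_0(w_t)$ approximates $\nabla F_0(w_t)$ up to $O(\epsilon\sigma)$. This is the analogue of the method-of-moments step in \cite{kong2020meta}, but evaluated at the current iterate $w_t$: compute the empirical second moment of clipped batch-gradients across $\dbs$, subtract a diagonal correction for within-batch variance, and take the top-$r$ eigenspace. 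The $d/\smallfrac^2\epsilon^4$ count is what lets spectral perturbation separate the $r$ signal directions (of mass $\ge\smallfrac$) from the ambient $d$-dimensional noise floor, while clipping tames the fourth moment via the $L_4$--$L_2$ hypercontractivity of Assumption~1(a).

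Next, for every medium batch $b\in\dbm$ (and for $b^*$) compute the projected clipped gradient $\Pi_V g_b := \Pi_V\,\nabla f(S^b,w_t,\kappa)$ with threshold $\kappa$ tuned as in heavy-tailed regression \cite{cherapanamjeri2020optimal,das2022efficient}, and mark $b$ as ``matched'' if $\|\Pi_V(g_b-g_{b^*})\|\le O(\epsilon\sigma)$. Because $V$ has dimension only $r$, a Bernstein argument shows that a single size-$n_m$ batch estimates the projected mean gradient to $O(\epsilon\sigma)$ accuracy once $n_m=\tilde\Omega(r/\epsilon^2)=\tilde\Omega(\min(\sqrt k,1/(\epsilon\sqrt\smallfrac))/\epsilon^2)$, matching the hypothesis. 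A Chernoff bound over $\dbm$ then guarantees that $\Omega(\medfrac|\dbm|)$ genuine $\dist_0$-batches pass the matching test, and under the stated lower bound $|\dbm|=\tilde\Omega(r/\medfrac)$ the average of their projected clipped gradients estimates $\Pi_V\nabla F_0(w_t)$ within $O(\epsilon\sigma)$; combined with the projection guarantee, this delivers the $\hat g_t$ required above.

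The main obstacle, and the step that requires the most care, is controlling the coupling between the three error sources at the current iterate $w_t$: the subspace $V$, the matching step, and the clipping bias. I would handle the first by re-running subspace estimation each iteration on an independent split of $\dbs$ (absorbed into $\tilde\Omega$), so $V$ is fresh at every $w_t$; the second by observing that even if some $\dist_j$ with $j\neq 0$ happens to have $\Sigma_j(w_t-w_j)$ within the matching tolerance of $\Sigma_0(w_t-w_0)$, the ``colliding'' batches can only perturb $\hat g_t$ within that same $O(\epsilon\sigma)$ tolerance, which is already the allowed slack; and the third by choosing $\kappa$ so that per-sample gradient variance stays $O(\sigma^2+\connum\,\|w_t-w_0\|^2)$ while the clipping bias is $O(\epsilon\sigma)$, exactly in the style of \cite{das2022efficient}. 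A union bound over the $O(\log(1/\epsilon))$ iterations then yields the stated $1-\delta$ probability and completes the proof.
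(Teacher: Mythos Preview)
Your overall scaffolding---gradient descent on $F_0$, fresh subspace per round, clipping to control heavy tails, matching medium batches against the pivot $b^*$---matches the paper. But there is a real gap in how you obtain the $\sqrt{\ell}$ batch-size dependence, and it stems from a mistaken subspace dimension.

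You take the subspace $V$ to have dimension $r=\min(\sqrt{k},1/(\epsilon\sqrt{\smallfrac}))$ and then argue that a batch of size $\tilde\Omega(r/\epsilon^2)$ estimates the $r$-dimensional projected gradient well enough. The paper's subspace has dimension $\ell=\min\{k,\Theta(1/\smallfrac)\}$, \emph{not} its square root. A $\sqrt{k}$-dimensional subspace cannot in general contain the $k$ distinct expected-gradient directions $\E_{\dist_i}[\nabla f(x,y,w,\kappa)]$; the perturbation argument (Lemma~\ref{lem:redsd}) needs $\ell\ge\min\{k,\Theta(1/\smallfrac)\}$ so that either all $k$ directions fit, or the weight $p_0\gtrsim\smallfrac$ forces $\E_{\dist_0}[\nabla f]$ into the top-$\ell$ eigenspace. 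With the correct $\ell$, your ``Bernstein argument'' for the norm test $\|\Pi_V(g_b-g_{b^*})\|\le O(\epsilon\kappa)$ would require $n_m=\tilde\Omega(\ell/\epsilon^2)$ per batch, which is quadratically too large.

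The $\sqrt{\ell}$ saving comes from a different testing statistic. The paper splits each medium batch (and the pivot samples) into two independent halves and uses the \emph{inner product}
\[
\zeta^b \;=\; \big(\Pi_V(g_{b,1}-g_{b^*,1})\big)^\intercal \big(\Pi_V(g_{b,2}-g_{b^*,2})\big),
\]
whose expectation is $\|\E_{\dist_i}[\Pi_V\nabla f]-\E_{\dist_0}[\Pi_V\nabla f]\|^2$ and whose variance scales like $\ell\kappa^4/m^2$ rather than $\ell\kappa^4/m$ (Lemma~\ref{lem:gradestvar} via Theorem~\ref{th:dotprodvar}). Thresholding the median of $\tilde O(1)$ such products then separates $\dist_0$-batches from far-away components with only $m=\tilde\Omega(\sqrt{\ell}/\epsilon^2)$ samples per batch. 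After filtering, the \emph{aggregate} of $\tilde\Omega(\sqrt{\ell})$ retained batches supplies the $\tilde\Omega(\ell/\epsilon^2)$ samples needed to estimate the $\ell$-dimensional projected gradient itself. Without this inner-product trick your matching step does not meet the stated $n_m$ bound.
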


We provide a proof sketch of Theorem~\ref{thm:main} and the description of Algortihm~\ref{alg:maina} in Section~\ref{sec:alg}, and a formal proof in Appendix~\ref{app:mainproof}. Algorithm~\ref{alg:maina} can be used to estimate $w_i$ for all $i\in I$, and the requirement of a separate batch $b^*$ is not crucial. It can be obtained by repeatedly sampling a batch from $\dbm$ and running the algorithm for these sampled $b^*$. Since all the components in $I$ have $\ge \medfrac$ fraction of batches in $\dbm$, then randomly sampling $b^*$ from $\dbm$, $\tilde \Theta(1/\medfrac)$ times would ensure that, with high probability, we have $b^*$ corresponding to each component. We can then return a list of size $\tilde \Theta(1/\medfrac)$ containing estimates corresponding to each sampled $b^*$. Then, with high probability, the list will have an estimate of the regression vectors for all components. Note that in this case, returning a list is unavoidable as there is no way to assign an appropriate index to the regression vector estimates.
The following corollary follows from the above discussion and Theorem~\ref{thm:main}.

\begin{corollary}\label{cor:main}
For $\delta,\epsilon\in(0,1]$, if $|\dbs| = \tilde\Omega(\frac{d}{\smallfrac^2\epsilon^4})$, $\bsizemid=\tilde\Omega(\min\{\sqrt{k},\frac{1}{\epsilon\sqrt{\smallfrac}}\}\cdot \frac{1}{\epsilon^2})$, and $|\dbm|\ge \tilde \Omega(\frac{1}{\medfrac}\min\{{\sqrt k},\frac{1}{\epsilon{\sqrt \smallfrac}}\})$, the above modification of Algorithm~\ref{alg:maina} runs in polynomial-time and outputs a list $L$ of size $\tilde \cO(1/\medfrac)$ such that with probability $\ge 1-\delta$, the list has an accurate estimate for regression vectors $w_i$ for each $i\in I$, namely $\max_{i\in I}\min_{\hat w\in L}\|\hat w-w_i\|\le \epsilon\sigma$. 
\end{corollary}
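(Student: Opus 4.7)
The plan is to reduce Corollary~\ref{cor:main} to Theorem~\ref{thm:main} via repeated sampling. Sample $T=\tilde\Theta(1/\medfrac)$ batches $b^*_1,\dots,b^*_T$ uniformly at random from $\dbm$; for each $t$, run Algorithm~\ref{alg:maina} with $b^*_t$ as the distinguished medium-size batch (using $\dbs$ and $\dbm\setminus\{b^*_t\}$ as the remaining data) to obtain an estimate $\hat w_t$; output the list $L=\{\hat w_1,\dots,\hat w_T\}$, which has size $\tilde{\cO}(1/\medfrac)$ as required.

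First, I would handle \emph{coverage}. Since the $\medfrac$-lower-bounded fractions for distinct $i\in I$ in $\dbm$ sum to at most one, $|I|\le 1/\medfrac$. For each $i\in I$ the probability that a single uniform draw from $\dbm$ lands in $\dist_i$ is at least $\medfrac$, so choosing $T=c(1/\medfrac)\log(|I|/\delta)$ for a sufficiently large constant $c$ guarantees, via a union bound over $i\in I$, that every $i\in I$ is hit by at least one $b^*_t$ with probability at least $1-\delta/2$. Call this event $\mathcal{E}_{\mathrm{cov}}$.

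Next, I would handle \emph{per-run accuracy}. Invoke Theorem~\ref{thm:main} with failure parameter $\delta'=\delta/(2T)$ for each of the $T$ runs. Conditioning on the event that $b^*_t$ is drawn from $\dist_i$ (for some $i\in I$), the remaining data still satisfy the hypotheses of Theorem~\ref{thm:main} with respect to component $i$: the per-component fractions in $\dbm\setminus\{b^*_t\}$ differ from those in $\dbm$ by at most $1/|\dbm|$, a negligible perturbation. A union bound over the $T$ runs then yields, with probability at least $1-\delta/2$, that $\|\hat w_t-w_i\|\le\epsilon\sigma$ for every $t$ whose $b^*_t$ came from some $\dist_i$, $i\in I$. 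Intersecting with $\mathcal{E}_{\mathrm{cov}}$ delivers the desired list guarantee with probability at least $1-\delta$.

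The only step that needs some care is the mild coupling introduced by drawing $b^*_t$ from within $\dbm$ itself: strictly speaking, the batches fed to Algorithm~\ref{alg:maina} are no longer independent of $b^*_t$ after conditioning on its identity. This can be dispatched in two equivalent ways: (i) observe that conditioning on $b^*_t$'s source only shifts empirical per-component fractions by $O(1/|\dbm|)$, which is absorbed into constants; or (ii) split $\dbm$ into two equal halves at the outset, drawing the $b^*_t$'s from one half and running Algorithm~\ref{alg:maina} on the other, which eliminates any conditioning issues at the cost of a factor of $2$. The $\log(T/\delta)=\tilde{\cO}(\log(1/(\medfrac\delta)))$ overhead from the per-run confidence budget is polylogarithmic and is absorbed into the $\tilde\Omega$ hypotheses on $|\dbs|$, $\bsizemid$, and $|\dbm|$. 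Everything else is bookkeeping.
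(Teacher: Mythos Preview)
Your proposal is correct and takes essentially the same approach as the paper: the paper's own argument for the corollary is simply the paragraph preceding it, which says to sample $\tilde\Theta(1/\medfrac)$ batches from $\dbm$, run Algorithm~\ref{alg:maina} on each, and return the resulting list. Your treatment is in fact more careful than the paper's, as you explicitly handle the union bounds, the bound $|I|\le 1/\medfrac$, and the mild dependence introduced by drawing $b^*$ from $\dbm$ itself.
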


In particular, this corollary implies that for any $i\in I$, the algorithm requires only $\tilde \Omega(d/\smallfrac)$ batches of size two and $\tilde\Omega(\min\{\sqrt{k},\frac{1}{\sqrt{\smallfrac}}\})$ medium-size batches of size $\tilde\Omega(\min\{\sqrt{k},\frac{1}{\sqrt{\smallfrac}}\})$ from distribution $\dist_i$ to estimate $w_i$ within an accuracy $o(\sigma)$. Furthermore, it is easy to show that any $o(\sigma)$ accurate estimate of regression parameter $w_i$ achieves an expected prediction error of $\sigma^2(1+o(1))$ for output $y$ given input $x$ generated from this $\dist_i$.

Note that results work even for infinite $k$ and without any separation assumptions on regression vectors.
The $\min(\sqrt k,1/\sqrt \smallfrac)$ dependence is the best of both words. This dependence is reasonable for recovering components with a significant presence or if the number is few.

The total number of samples required by the algorithm  from $\dist_i$ in small size batches $\dbs$ and medium size batches $\dbm$ are only $\tilde \cO(d/\smallfrac)$ and $\tilde \cO(\min\{k,1/\smallfrac))$. Note that any estimator would require $\Omega(d)$ samples for such estimation guarantees even in the much simpler setting with just i.i.d. data. Therefore, in the high-dimensional regime, where $d\gg \tilde \cO(\min\{k,1/\smallfrac))$, the samples in the medium-size batches in themselves have $\ll d$ samples and are insufficient to learn $w_i$. Note that the total number of samples required from $\dist_i$ in $\dbs$ and $\dbm$ by the algorithm is within $\tilde \cO(1/\smallfrac)$ factor from that required in a much simpler single component setting. 


\subsubsection{Prediction using list of regression vector estimates}\label{sec:predresults}
The next theorem shows that given a list $L$ containing estimates of $w_i$ for all $i\in I$ and $\Omega(\log(1/\smallfrac))$ samples from $\dist_i$ for some $i\in I$, we can identify an estimate of regression vector achieving a small prediction error for $\dist_i$. The proof of the theorem and the algorithm is in Appendix~\ref{app:classify}.

\begin{theorem}\label{th:classif}
For any $i\in I$, $\beta>0$, and list $L$ that contains at least one $\beta$ good estimate of regression parameter of $\dist_i$, namely $\min_{w\in L}\|w-w_i\|\le \beta$. Given $\cO(\max\{1,\frac{\sigma^2}{\beta^2}\}\log \frac{L}{\delta})$ samples from $\dist_i$
Algorithm~\ref{alg:classif} identifies an estimate $w$, s.t. with probability $\ge 1-\delta$, $\|w-w_i\|=\cO(\beta)$ and it achieves an expected estimation error $\E_{\dist_i}[(\hat w\cdot x -y)^2]\le \sigma^2 + \cO(\beta^2)$.
\end{theorem}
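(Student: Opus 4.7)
My plan is to select $\hat w\in L$ via a median-of-means (MoM) tournament on pairwise squared-loss differences: for every pair $(w,w')\in L\times L$ I form a MoM estimator $\hat D(w,w')$ of the population risk difference $R(w)-R(w')$, where $R(w):=\E_{\dist_i}[(w\cdot x-y)^2]$, and output $\hat w:=\arg\min_{w\in L}\max_{w'\in L}\hat D(w,w')$. I first reduce both conclusions of the theorem to a single risk-gap bound: since $\eta$ is mean-zero and independent of $x$, $R(w)=\|w-w_i\|_{\Sigma_i}^2+\E[\eta^2]$, and Assumption~1(b) ($I\preceq\Sigma_i\preceq\connum I$) gives $\|w-w_i\|^2\le R(w)-\E[\eta^2]\le \connum\|w-w_i\|^2$, so any $\hat w\in L$ with $R(\hat w)\le R(w^+)+\cO(\beta^2)$---where $w^+$ is the hypothesized $\beta$-good element---immediately yields both $\|\hat w-w_i\|=\cO(\beta)$ and the prediction-error bound $\sigma^2+\cO(\beta^2)$.

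The key technical step---and the reason the sample rate avoids any dependence on $\E[\eta^4]$, which may be infinite---is that the heavy-tail-carrying $\eta^2$ cancels in the pairwise loss difference
\[
f_{w,w'}(x,y):=(w\cdot x-y)^2-(w'\cdot x-y)^2=((w-w_i)\cdot x)^2-((w'-w_i)\cdot x)^2-2\eta(w-w')\cdot x.
\]
L4-L2 hypercontractivity on the two quadratic terms and independence of $\eta,x$ on the cross term yield $\mathrm{Var}(f_{w,w'})=\cO(\|w-w_i\|^4+\|w'-w_i\|^4+\sigma^2\|w-w'\|^2)$. I partition the samples into $K=\Theta(\log(|L|/\delta))$ equal groups of size $m=\Theta(\max\{1,\sigma^2/\beta^2\})$, form per-group averages $\bar f_j(w,w')$, and take $\hat D(w,w'):=\mathrm{median}_j\,\bar f_j(w,w')$; the standard MoM bound $|\hat D(w,w')-(R(w)-R(w'))|=\cO(\sqrt{\mathrm{Var}(f_{w,w'})/m})$ then holds with per-pair failure probability $\le \delta/|L|^2$, and a union bound controls all $\cO(|L|^2)$ tests simultaneously.

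A case split on $\|w'-w_i\|$ then verifies $U(w^+):=\max_{w'}\hat D(w^+,w')=\cO(\beta^2)$: for ``close'' $w'$ (with $\|w'-w_i\|=\cO(\beta)$) the variance is $\cO(\beta^4+\sigma^2\beta^2)$, so the MoM error is $\cO(\beta^2)$; for ``far'' $w'$ the negative expected value $R(w^+)-R(w')\le -\|w'-w_i\|^2/2$ dominates the MoM error (a short calculation using the variance formula and the choice of $m$), yielding $\hat D(w^+,w')\le -\|w'-w_i\|^2/4\le 0$. Symmetrically, for any $w$ with $\|w-w_i\|\ge C'\beta$ for a sufficiently large constant $C'$, the same estimate applied to the pair $(w,w^+)$ gives $\hat D(w,w^+)\ge \|w-w_i\|^2/4\gg\cO(\beta^2)$, so $U(w)>U(w^+)$; hence $\arg\min_w U(w)$ satisfies $\|\hat w-w_i\|=\cO(\beta)$. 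The main obstacle is exactly this two-regime concentration: without the $\eta^2$ cancellation $\mathrm{Var}((w\cdot x-y)^2)$ could be infinite and a naive ERM over $L$ would fail, and delivering the stated sample rate $\cO(\max\{1,\sigma^2/\beta^2\}\log(|L|/\delta))$ requires carefully balancing the MoM bias against the risk gap in both the close and far regimes.
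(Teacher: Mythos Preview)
Your proposal is correct and achieves the stated sample rate, but it takes a genuinely different route from the paper's proof.

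\textbf{Test statistic.} You compare two candidates $w,w'$ via the \emph{squared-loss difference} $f_{w,w'}(x,y)=(w\cdot x-y)^2-(w'\cdot x-y)^2$, whose expectation is $R(w)-R(w')$ and in which the heavy-tailed $\eta^2$ term cancels. The paper's Algorithm~\ref{alg:classif} instead uses the \emph{gradient-based} statistic $a_j=\frac{1}{|S_j|}\sum_{(x,y)\in S_j}(x\cdot w-y)\,x\cdot(w-w')$, whose expectation is $(w-w_i)^\intercal\Sigma_i(w-w')$; the threshold $\|w-w'\|^2/4$ then distinguishes ``$w$ close to $w_i$'' from ``$w'$ close to $w_i$''. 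Both statistics share the key feature that $\eta$ appears only linearly, so only $\E[\eta^2]\le\sigma^2$ is needed and no fourth-moment assumption on the noise enters.

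\textbf{Selection procedure.} You run a full min--max tournament over all $O(|L|^2)$ ordered pairs and union-bound over all of them. The paper runs a \emph{sequential elimination}: repeatedly pick any pair with $\|w-w'\|\ge 12\connum\beta$, test, discard the loser, and stop once all survivors lie within $12\connum\beta$ of each other; the union bound is over at most $|L|-1$ comparisons (Theorem~\ref{thm:clasthapp}). Both procedures yield the same $\Theta(\log(|L|/\delta))$ number of MoM blocks, hence identical sample complexity up to constants.

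\textbf{What each buys.} Your risk-gap reduction $R(w)-\E[\eta^2]=\|w-w_i\|_{\Sigma_i}^2\in[\|w-w_i\|^2,\connum\|w-w_i\|^2]$ makes the final two conclusions immediate and the analysis reads as a standard Scheff\'e-style tournament. The paper's gradient test is asymmetric in $(w,w')$ and the elimination loop needs only $|L|-1$ tests; its variance analysis (bounding the three pieces of $(x\cdot w-y)x\cdot(w-w')$ via Cauchy--Schwarz and $L4$--$L2$) is somewhat more delicate but yields explicit constants. Your two-regime (close/far) concentration argument is the correct analogue of the paper's case analysis on whether $w=w^*$ or $w'=w^*$.
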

Combining the above theorem and Theorem~\ref{thm:main}, we get
\begin{theorem}
For $\delta,\epsilon\in(0,1]$, suppose that $|\dbs| = \tilde\Omega(\frac{d}{\smallfrac^2\epsilon^4})$, $\bsizemid=\tilde\Omega(\min\{\sqrt{k},\frac{1}{\epsilon\sqrt{\smallfrac}}\}\cdot \frac{1}{\epsilon^2})$, and $|\dbm|\ge \tilde \Omega(\frac{1}{\medfrac}\min\{{\sqrt k},\frac{1}{\epsilon{\sqrt \smallfrac}}\})$.
Then, there exists a polynomial-time algorithm that, with probability $\ge 1-\delta$, outputs a list $L$ of size $\tilde \cO(1/\medfrac)$ containing estimates of $w_i$'s for $i\in I$.
Further, given $|S|\ge \Omega(\frac{1}{\epsilon^2}\log \frac{1}{\delta\medfrac})$ samples from $\dist_i$, for any $i\in I$, Algorithm~\ref{alg:classif} returns $\hat w\in L$ that with probability $\ge 1-\delta$ satisfies $\|w_i-\hat w\| \le  \cO(\epsilon\sigma)$ and achieves an expected estimation error $\E_{\dist_i}[(\hat w\cdot x -y)^2]\le \sigma^2 + \cO(\epsilon^2\sigma^2)$
\end{theorem}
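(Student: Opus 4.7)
The plan is to deduce the theorem by composing Corollary~\ref{cor:main} with Theorem~\ref{th:classif} and applying a union bound, so the proof is essentially a parameter-matching exercise rather than a new argument.

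First, I would invoke Corollary~\ref{cor:main} with accuracy parameter $\epsilon$ and failure probability $\delta/2$. The hypotheses on $|\dbs|$, $\bsizemid$, and $|\dbm|$ in the current theorem coincide with those of the corollary up to the constants absorbed by $\tilde\Omega$, so the corollary provides a polynomial-time algorithm that, with probability at least $1-\delta/2$, produces a list $L$ with $|L|=\tilde O(1/\medfrac)$ satisfying
\[
\max_{i\in I}\min_{\hat w\in L}\|\hat w - w_i\|\le \epsilon\sigma.
\]
Call this event $\mathcal{E}_1$.

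Second, conditional on $\mathcal{E}_1$ and on a fixed $i\in I$, I would apply Theorem~\ref{th:classif} to the sample set $S$ with $\beta = \epsilon\sigma$ and failure probability $\delta/2$. On $\mathcal{E}_1$ the list $L$ contains a $\beta$-good estimate of $w_i$, so the hypothesis of Theorem~\ref{th:classif} is met. Its sample requirement is $\cO(\max\{1,\sigma^2/\beta^2\}\log(|L|/\delta))$; substituting $\beta=\epsilon\sigma$, using $\epsilon\le 1$, and noting that $|L|=\tilde O(1/\medfrac)$ means $\log|L| = \cO(\log(1/\medfrac))$ up to lower-order terms, this simplifies to $\cO(\epsilon^{-2}\log(1/(\delta\medfrac)))$, which matches the hypothesis on $|S|$. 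Theorem~\ref{th:classif} then outputs an $\hat w\in L$ with $\|\hat w-w_i\|=\cO(\beta)=\cO(\epsilon\sigma)$ and expected prediction error at most $\sigma^2+\cO(\beta^2)=\sigma^2+\cO(\epsilon^2\sigma^2)$. Call this event $\mathcal{E}_2$.

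A union bound over $\mathcal{E}_1$ and $\mathcal{E}_2$ gives the full statement with probability at least $1-\delta$. The only step requiring any care is the reconciliation of logarithmic factors in the sample size for $|S|$, since $|L|$ is polylogarithmically larger than $1/\medfrac$; one must confirm that the polylog factors hidden in $\tilde O(1/\medfrac)$ in Corollary~\ref{cor:main} depend only on $1/\medfrac$, $1/\delta$, and parameters already absorbed into the $\Omega$ of the stated sample bound. This is a small bookkeeping check; aside from it, the proof is an immediate composition.
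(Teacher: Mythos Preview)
Your proposal is correct and matches the paper's approach exactly: the paper states only ``Combining the above theorem and Theorem~\ref{thm:main}, we get'' with no further argument, and your composition of Corollary~\ref{cor:main} (the list version of Theorem~\ref{thm:main}) with Theorem~\ref{th:classif} via a union bound is precisely what is intended. The parameter-matching you outline, including the sample-size check for $|S|$, is the right bookkeeping.
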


When $\epsilon = o(1)$, the corollary implies that for $|\dbs| = \tilde\Omega(\frac{d}{\smallfrac^2})$, $\bsizemid=\tilde\Omega(\min\{\sqrt{k},\frac{1}{\sqrt{\smallfrac}}\})$, and $|\dbm|\ge \tilde \Omega(\frac{1}{\medfrac}\min\{{\sqrt k},\frac{1}{{\sqrt \smallfrac}}\})$, Algorithm~\ref{alg:maina} can be used to obtain  a list $L$ of size $\tilde \cO(1/\smallfrac)$ . Given this list, and $|S|\ge \Omega(\log \frac{1}{\smallfrac\delta})$ samples from $\dist_i$ for any $i\in I$, Algorithm~\ref{alg:classif} returns $\hat w\in L$ that achieves an expected estimation error $\E_{\dist_i}[(\hat w\cdot x -y)^2]\le \sigma^2(1+o(1))$.

\section{Algorithm for recovering regression vectors}
\label{sec:alg}
This section provides an overview and pseudo-code of Algorithm~\ref{alg:maina}, along with an outline of the proof that achieves the guarantee stated in Theorem~\ref{thm:main}.
As per the theorem, we assume that index $0$ belongs to $I$, and we have a batch $b^*$ containing $\bsizemid$ samples from the distribution $\dist_0$. Note that $\dist_0$ satisfies the conditions mentioned in Section~\ref{sec:datadist} and that $\dbs$ and $\dbm$ each have $\ge |\dbs|\smallfrac$ and $\ge |\dbm|\medfrac$ batches with i.i.d. samples from $\dist_0$. However, the identity of these batches is unknown.

{\bf Gradient Descent.~} Note that $w_0$ minimizes the expected square loss for distribution $\dist_0$. Our algorithm aims to estimate $w_0$ by taking a gradient descent approach. It performs a total of $R$ gradient descent steps. Let $\hat w^{(r)}$ denote the algorithm's estimate of $w_0$ at the beginning of step $r$. Without loss of generality, we assume that the algorithm starts with an initial estimate of $\hat w^{(1)} = 0$. At step $r$, the algorithm produces an estimate $\Delta^{(r)}$ of the gradient of the expected square loss for distribution $\dist_0$ at its current estimate $\hat w^{(r)}$. We refer to this estimate as the expected gradient for $\dist_0$ at $\hat w^{(r)}$, or simply the expected gradient. The algorithm then updates its current estimate for the next round as $\hat w^{(r+1)} = \hat w^{(r)} - \Delta^{(r)}/\connum$.

The main challenge the algorithm faces is the accurate estimation of the expected gradients in each step.  
 Accurately estimating the expected gradients at each step would require $\Omega(d/\epsilon^2)$ i.i.d. samples from $\dist_0$. However, our algorithm only has access to a medium-size batch $b^*$ that is guaranteed to have samples from $\dist_0$ and this batch contains far fewer samples. And for batches in $\dbs$ and $\dbm$, the algorithm doesn't know which of the batches has samples from $\dist_0$. Despite these challenges, we demonstrate an efficient method to estimate the expected gradients accurately.

\begin{algorithm}[!th]
   \caption{\textsc{MainAlgorithm}}
   \label{alg:maina}
\begin{algorithmic}[1]
\small
 \STATE {\bfseries Input:} Collections of batches $\dbs$ and $\dbm$, $\smallfrac$, $k$, a medium size batch $b^*$ of 
 i.i.d. samples from $\dist_0$, distribution parameters (upper bounds) $\sigma$, $C$, $\connum$, an upper bound $M$ on $\|w_0\|$,
 $\epsilon$ and $\delta$,
\STATE {\bfseries Output:} Estimate of $w_0$
\STATE $R\gets\Theta(\connum\log\frac{M}{\sigma})$, $\epsilon_1 \gets \Theta(1)$, $\epsilon_2\gets \Theta\mleft(\frac{1}{\connum\sqrt{C+1}}(\epsilon_1+\frac{1}{\sqrt{\connum}})\mright)$, $\ell\gets \min\{k, \frac{1}{2\smallfrac\epsilon_2^2}\}$ $\delta'\gets\frac\delta{5R}$
 \STATE{Partition the collection of batches $\dbs$ into $R$ disjoint same size random parts $\{\dbs^{(r)}\}_{r\in [R]}$.}
 \STATE Similarly partition $\dbm$ into $R$ disjoint same size random parts $\{\dbm^{(r)}\}_{r\in [R]}$. 
 \STATE{Divide samples $S^{b^*}$ into $2R$ disjoint same size random parts $\{S_{1}^{b^*,(r)}\}_{r\in [R]}$ and $\{S_{2}^{b^*,(r)}\}_{r\in [R]}$.}
 \STATE{Initilize $\hat w^{(1)}\gets 0$}
 \FOR{$r$ from 1 to $R$}
 \STATE $\kappa^{(r)} \gets \textsc{ClipEst}(S_{1}^{b^*,(r)},\hat w^{(r)},\epsilon_1, \delta',\sigma,C,\connum)$
 \STATE{$P^{(r)}\gets \textsc{GradSubEst}(\dbs^{(r)}, \kappa^{(r)}, \hat w^{(r)}, \ell$ })
 \STATE{$\Delta^{(r)}\gets \textsc{GradEst}(\dbm^{(r)}, S_{2}^{b^*,(r)}, \kappa^{(r)},  \hat w^{(r)}, P^{(r)},\epsilon_2, \delta') $ }
 \STATE{$\hat w^{(r+1)}\gets \hat w^{(r)} -\frac{1}{\connum}\Delta^{(r)} $ }
\ENDFOR
\STATE $\hat w\gets \hat w^{(R+1)}$ and Return $\hat w$
\end{algorithmic}
\end{algorithm}

The algorithm randomly divides sets $\dbs$ and $\dbm$ into $R$ disjoint equal subsets, denoted as $\{\dbs^{(r)}\}_{r=1}^R$ and $\{\dbm^{(r)}\}_{r=1}^R$, respectively. The samples in batch $b^*$ are divided into two collections of equal disjoint parts, denoted as $\{S_1^{b^*,(r)}\}_{r=1}^R$ and $\{S_2^{b^*,(r)}\}_{r=1}^R$. At each iteration $r$, the algorithm uses the collections of medium and small batches $\dbs^{(r)}$ and $\dbm^{(r)}$, respectively, along with the two collections of i.i.d. samples $S_1^{b^*,(r)}$ and $S_2^{b^*,(r)}$ from $\dist_0$ to estimate the gradient at point $w^{(r)}$. While this division may not be necessary for practical implementation, this ensures independence between the stationary point $\hat w^{(r)}$ and the gradient estimate which facilitates our theoretical analysis and only incurs a logarithmic factor in sample complexity. 

Next, we describe how the algorithm estimates the gradient and the guarantees of this estimation. Due to space limitations, we provide a brief summary here, and a more detailed description, along with formal proofs, can be found in the appendix. We start by introducing a clipping operation on the gradients, which plays a crucial role in the estimation process.

{\bf Clipping.}
Recall the squared loss of samples $(x,y)$ on point $w$ is $(w\cdot x-y)^2/2$ and its gradient is $(x\cdot w-y)x$. Instead of directly working with the gradient of the squared loss, we work with its clipped version. Given a \emph{clipping parameter} $\kappa>0$, the \emph{clipped gradient} for a sample $(x,y)$ evaluated at point $w$ is defined as
\begin{align}
\textstyle\nabla f(x,y,w,\kappa)\ed \frac{(x\cdot w-y)}{|x\cdot w-y|\vee \kappa} \kappa x.\nonumber
\end{align}
For a collection of samples $S$, the \emph{clipped gradient} $\clipbatchgrad$ is the average of the clipped gradients of all samples in $S$, i.e., $\clipcollgrad\ed \frac{1}{|S|}\sum_{(x,y)\in S}\clipsampgrad $. 

The clipping parameter $\kappa$ controls the level of clipping and for $\kappa=\infty$, the clipped and the unclipped gradients are the same. The clipping step is necessary to make our gradient estimate more robust, by limiting the influence of the components other than $\dist_0$ (in lemma~\ref{lem:clipgradprop}), and as a bonus, we also obtain better tail bounds for the clipped gradients.
Theorem~\ref{th:mainclipbias} shows that for $\kappa\ge \Omega(\sqrt{{\E{}_{\dist_0}[(y-x \cdot w)^2]}})$, the difference between the expected clipped gradient and the expected gradient $\mleft\|\E{}_{\dist_0}[(\clipsampgrad]- \E{}_{\dist_0}[(x\cdot w-y)x]\mright\|$ is small. Therefore, the ideal value of $\kappa$ at point $w$ is $\Theta(\sqrt{\E{}_{\dist_0}[(y-x \cdot w)^2]})$.

For the estimate $\hat w^{(r)}$ at step $r$, the choice of the clipping parameter is represented by $\kappa^{(r)}$. To estimate a value for $\kappa^{(r)}$ that is close to its ideal value, the algorithm employs the subroutine \textsc{ClipEst} (presented as Algorithm~\ref{alg:clip} in the appendix). The subroutine estimates the expected value of $(y-x \cdot \hat w^{(r)})^2$ by using i.i.d. samples $S_1^{b^*,(r)}$ from the distribution $\dist_0$. According to Theorem~\ref{thm:mainclip} in Appendix~\ref{app:clip}, the subroutine w.h.p. obtains $\kappa^{(r)}$ that is close to the ideal value. This ensures that the difference between the expectation of clipped and unclipped gradients is small, and thus, estimating the expectation of clipped gradients can replace estimating the actual gradients.


{\bf Subspace Estimation.}
The algorithm proceeds by using subroutine~\textsc{GradSubEst} (presented as Algorithm~\ref{alg:subspacea} in Appendix~\ref{app:subspace}) with $\widehat B= \dbs^{(r)}$, $w=\hat w^{(r)}$, and $\kappa=\kappa^{(r)}$ to estimate a smaller subspace $P^{(r)}$ of $\reals^d$, 
The expected projection of the clipped gradient on $P^{(r)}$ is  nearly the same as the expected value of the clipped gradient, hence to estimate the expected gradient, it suffices to estimate the expected projection of the clipped gradient on $P^{(r)}$, which now requires fewer samples since $P^{(r)}$ is a lower dimensional subspace.
The subroutine constructs a matrix $A$ such that $\E[A] = \sum_{i}p_i \E_{\dist_i}[\nabla f(x,y,w,\kappa)]\E_{\dist_i}[\nabla f(x,y,w,\kappa)]^\intercal$, where $p_i$ denotes the fraction of batches in $\widehat B$ that have samples from $\dist_i$. Since $\dbs^{(r)}$ are obtained by randomly partitioning $\dbs$, w.h.p. $p_0\approx \smallfrac$. It is crucial for the success of the subroutine that the expected contribution of every batch in the above expression is a PSD matrix.
The clipping helps in bounding the contribution of other components and statistical noise. 

The subroutine returns the projection matrix $P^{(r)}$ for the subspace spanned by the top $\ell$ singular vectors of $A$, where $\ell = \min\{k,\Theta(1/\smallfrac)\}$. It is worth noting that when $1/\smallfrac$ is much smaller than $k$ (thinking of the extreme case $k=\infty$), our algorithm still only requires estimating the top $\ell = 1/\smallfrac$ dimensional subspace, since those infinitely many components can create at most $(1/\smallfrac-1)$ directions with weight greater than $\smallfrac$, therefore the direction of $\dist_0$ must appear in the top $\Theta(1/\smallfrac)$ subspace. Theorem~\ref{th:mainsubest} in Appendix~\ref{app:subspace} characterizes the guarantees for this subroutine.
Informally, if $\widehat B\ge \tilde\Omega(d/\alpha^2)$, then \whp, the expected value of the projection of the clipped gradient on this subspace is nearly the same as the expected value of the clipped gradient, namely $\|\E_{\dist_0}[P^{(r)}\nabla f(x,y,w,\kappa)] - \E{}_{\dist_0}[\nabla f(x,y,w,\kappa)]\|$ is small.

We note that our construction of matrix $A$ for the subroutine is inspired by a similar construction in~\cite{kong2020meta}, where they used it for directly estimating regression vectors. Our results generalize the applicability of the procedure to provide meaningful guarantees even when the number of components $k=\infty$. Additionally, Lemma~\ref{lem:redsd} improves matrix perturbation bounds in Lemma 5.1 of~\cite{kong2020meta}, which is crucial for applying this procedure for heavy-tailed distributions and reducing the number of required batches.

\begin{algorithm}[!th]
   \caption{\textsc{GradEst}}
   \label{alg:Grada}
\begin{algorithmic}[1]
\small
\STATE {\bfseries Input:}  A collection of medium batches $\widehat B$, a collection of samples $S^*$ from $\dist_0$, $\kappa$, $w$, projection matrix $P$ for subspace of $\reals^d$, parameter $\epsilon$, $\delta'$
\STATE {\bfseries Output:} An estimate of clipped gradient at point $w$.
\STATE $T_1\gets \Theta(\log \frac{|\widehat B|}{\delta'})$ and $T_2\gets \Theta(\log \frac{1}{\delta'}) $
\STATE For each $\batch$ divide $S^\batch$ into two equal random parts $S_1^\batch$ and $S_2^\batch$
\STATE For each $\batch$ further divide $S_1^\batch$ into $2T_1$ equal random parts, and denote them as $\{S_{1,j}^\batch\}_{j\in [2T_1]}$
\STATE Divide $S^*$ into $2T_1$ equal random parts, and denote them as $\{S_{j}^*\}_{j\in [2T_1]}$
\STATE $\zeta^b_{j}:= \big( \nabla f(S_{1,j}^\batch,w,\kappa) - \nabla f(S_{j}^{*},w,\kappa) \big)^\intercal P^\intercal P\big( \nabla f(S_{1,T_1+j}^\batch,w,\kappa) - \nabla f(S_{T_1+j}^{*},w,\kappa) \big)$
\STATE{Let $\widetilde B \gets \mleft\{\batch\in \widehat B:  median\{\zeta^b_{j}:j\in[T_1]\} \le \epsilon^2\kappa^2\connum\mright\}$}
\STATE For each $\batch$ divide $S_{2}^\batch$ into $T_2$ equal parts randomly, and denote them as $\{S_{2,j}^\batch\}_{j\in [T_2]}$
\STATE{For $i\in [T_2]$, let $\Delta_i \gets \frac{1}{|\widetilde B|}\sum_{b\in \widetilde B}P\nabla f(S_{2,i}^\batch,w,\kappa)$.}
\STATE{Let $\xi_i \gets median\{j\in [T_2]:\|\Delta_i -\Delta_j \|\}$}
\STATE{Let $i^* \gets \arg\min\{i\in[T_2]:\xi_i\}$ and $\Delta \gets \Delta_{i^*}$}
\STATE{Return $\Delta$}
\end{algorithmic}
\end{algorithm}


{\bf Estimating expectation of clipped gradient projection.} The last subroutine, called \textsc{GradEst}, estimates the expected projection of the clipped gradient using medium-size batches $\dbm^{(r)}$ and i.i.d. samples $S_2^{b^*,(r)}$ from $\dist_0$.
First, \textsc{GradEst} divides each batch in $\dbm^{(r)}$ into two equal parts and uses the first half of the samples in each batch $\batch$ and samples $S_2^{b^*,(r)}$ to test whether the expected projection of clipped gradient for the distribution batch $\batch$ was sampled from and $\dist_0$ are close or not. With high probability, the algorithm retains all the batches from $\dist_0$ and rejects batches from all distributions for which the difference between the two expectations is large.
This test requires $\tilde\Omega(\sqrt{\ell})$ samples in each batch, where $\ell$ is the dimension of the projected clipped gradient.

After identifying the relevant batches, \textsc{GradEst} estimates the projection of the clipped gradients using the second half of the samples in these batches. Since the projections of the clipped gradients lie in an $\ell$ dimensional subspace, $\Omega(\ell)$ samples suffice for the estimation. To obtain high-probability guarantees, the procedure uses the median of means approach for both testing and estimation.

The guarantees of the subroutine are described in Theorem~\ref{th:maingradest}, which implies that the estimate $\Delta^{(r)}$ of the gradient satisfies $\|\Delta^{(r)} - \E_{\dist_0}[P^{(r)}\nabla f(x,y,w,\kappa)]\|$ is small.

\paragraph{Estimation guarantees for expected gradient.} 
Using the triangle inequality, we have:
\begin{align*}
&\|\Delta^{(r)} - \E{}_{\dist_0}[(x\cdot w-y)x]\|\le \| \E{}_{\dist_0}[\nabla f(x,y,w,\kappa)]- \E{}_{\dist_0}[(x\cdot w-y)x]\|\\
&+\|\E{}_{\dist_0}[\nabla f(x,y,w,\kappa)]- \E{}_{\dist_0}[P^{(r)}\nabla f(x,y,w,\kappa)]\|+\|\Delta^{(r)} - \E{}_{\dist_0}[P^{(r)}\nabla f(x,y,w,\kappa)]\|.    
\end{align*}
As previously argued, all three terms on the right side of the inequality are small, hence $\Delta^{(r)} $ provides an accurate estimate of the gradient. Moreover, Lemma~\ref{lem:mainrounds} shows that with an accurate estimation of expected gradients, gradient descent reaches an $\epsilon$-accurate estimation of $w_0$ after $\cO(\log \frac{\|w_0\|}{\sigma})$ steps. Therefore, setting $R =\Omega(\log \frac{\|w_0\|}{\sigma})$ suffices. This completes the description and proof sketch of Theorem~\ref{thm:main}. A more formal proof can be found in Appendix~\ref{app:mainproof}.

As mentioned before, given a new batch of only logarithmically many samples from subgroup $i \in I$, we can identify the weight vector $\hat{w}$ in the list $L$ that is close to $w_i$. In the interest of space, we include the algorithm for selecting the appropriate weight vector from the list in Appendix~\ref{app:classify} along with a discussion about how the algorithm (Algorithm~\ref{alg:classif}) achieves the guarantees in Theorem~\ref{th:classif}.

\section{Empirical Results}
\label{sec:sims}
{\bf Setup.} We have sets $\dbs$ and $\dbm$ of small and medium size batches and $k$ distributions $\dist_i$ for $i\in \{0,1,\dots,k-1\}$. For a subset of indices $I\subseteq \{0,1,\dots,k-1\}$, both $\dbs$ and $\dbm$ have a fraction of $\alpha$ batches that contain i.i.d. samples from $\dist_i$ for each $i\in I$. And for each $i\in \{0,1,\dots,k-1\}\setminus I$ in the remaining set of indices, $\dbs$ and $\dbm$ have $(1-|I|/16)/(k-|I|)$ fraction of batches, that have i.i.d samples from $\dist_i$. In all figures, the output noise is distributed as $\cN(0,1)$.
All small batches have $2$ samples each, while medium-size batches have $\bsizemid$ samples each, which we vary from $4$ to $32$, as shown in the plots. We fix data dimension $d=100$, $\alpha=1/16$, number of small batches to $|\dbs| = \min\{8dk^2,8d/\alpha^2\}$ and the number of medium batches to $|\dbm| = 256$.  In all the plots, we average over 10 runs and report the standard error.

{\bf Evaluation.} Our objective is to recover a small list containing good estimates for the regression vectors of $\dist_i$ for each $i\in I$.
We compare our proposed algorithm's performance with that of the algorithm in \cite{kong2020meta}. Given a new batch, we can choose the weight vector from the returned list, $L$ that achieves the best error\footnote{This simple approach showed better empirical performance than Algorithm~\ref{alg:classif}, whose theoretical guarantees we described in Section~\ref{sec:predresults}}. Then the MSE of the chosen weight is reported on another new batch drawn from the same distribution. The size of the new batch can be either 4 or 8 as marked in the plot. More details about our setup can be found in Appendix~\ref{app:sim}. 
\begin{figure}[!htb]
     \vspace{-.4cm}
   \begin{minipage}{0.48\textwidth}
     \centering
\includegraphics[width=.85\linewidth]{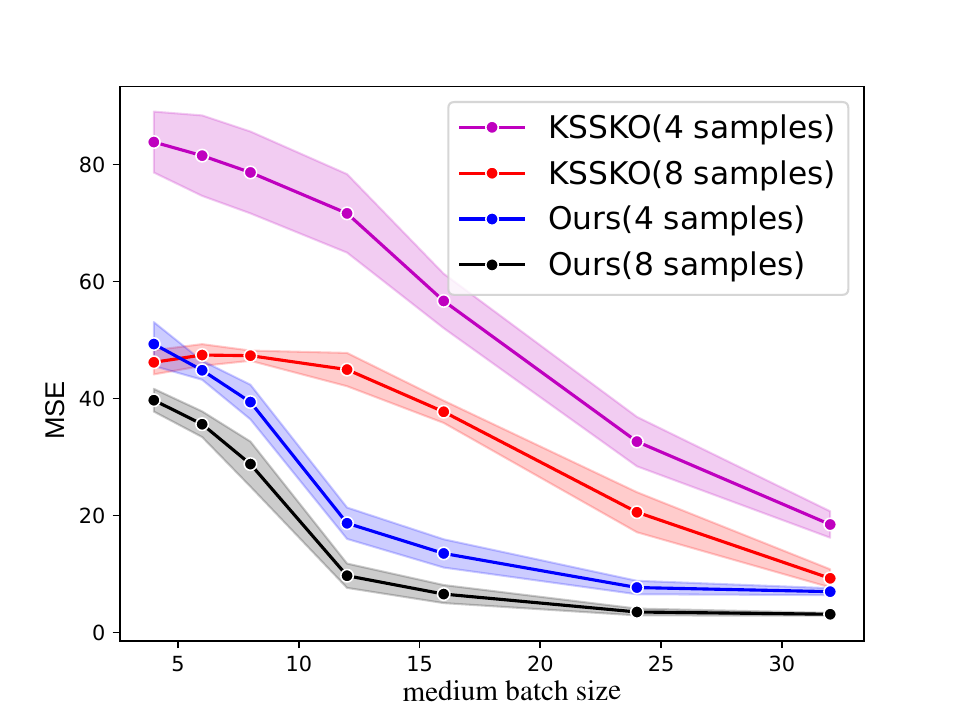}
     \caption{Same input dist., $k=16$, large minimum distance between regression vectors.}\label{Fig:1}
   \end{minipage}\hfill
    \begin{minipage}{0.48\textwidth}
     \centering     \includegraphics[width=.85\linewidth]{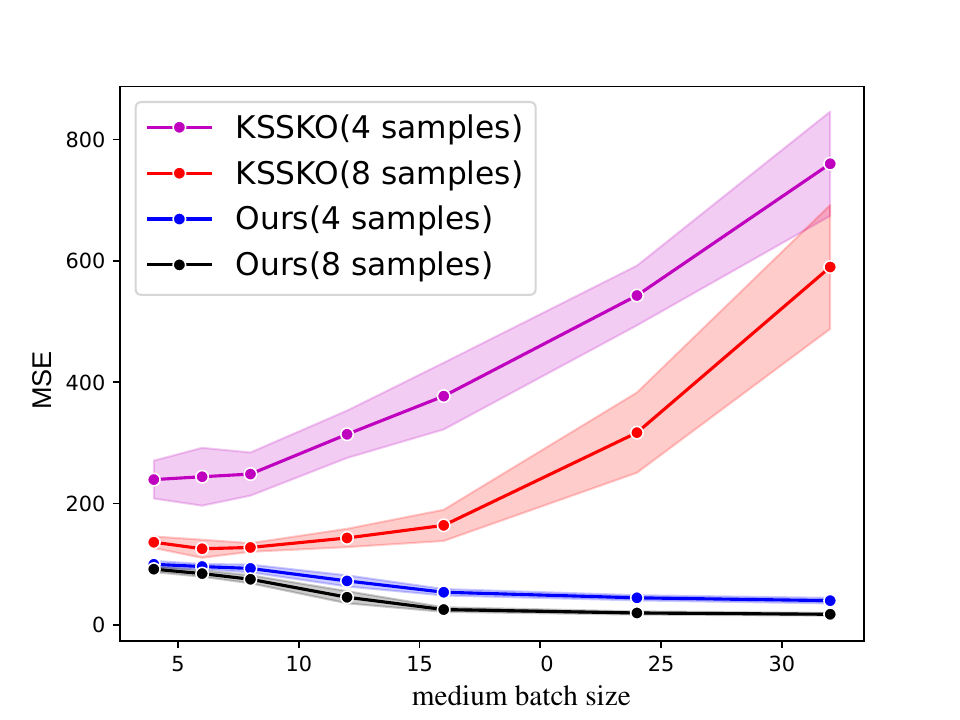}
     \caption{Different input dist., $k=16$, large minimum distance between regression vectors.}\label{Fig:3}
   \end{minipage}
\end{figure}
\vspace{-0.4cm}

{\bf Setting in~\cite{kong2020meta}.} We first compare our algorithm with the one in~\cite{kong2020meta} in the same setting as the latter paper i.e. with more restrictive assumptions. The results are displayed in Figure~\ref{Fig:1}, where  $I =\{0,1,\dots,15\}$ and all 16 distributions have been used to generate $1/16$ fraction of the batches. All the $\cD_i$'s are equal to $\cN(0, I)$, and the minimum distance between the regression vectors is comparable to their norm. It can be seen that even in the original setting of~\cite{kong2020meta} our algorithm significantly outperforms the other at all the different medium batch sizes plotted on the x-axis.

{\bf Input distributions.} Our algorithm can handle different input distributions for different subgroups. We test this in our next experiment presented in Figure~\ref{Fig:3}. Specifically, for each $i$, we randomly generate a covariance matrix $\Sigma_i$ such that its eigenvalues are uniformly distributed in $[1,\connum]$, and the input distribution for $\dist_i$ is chosen as $\cN(0,\Sigma_i)$. We set $\connum=4$. It can be seen that~\cite{kong2020meta} completely fails in this case, while our algorithm retains its good performance.

In the interest of space, we provide additional results in Appendix~\ref{app:sim} which include even more general settings: (i) when the minimum distance between regression vectors can be much smaller than their norm (ii) when the number of subgroups $k$ can be very large but the task is to recover the regression weights for the subgroups that appear in a sufficient fraction of the batches. In both these cases, our algorithm performs much better than the baseline.

\section{Conclusion}
We study the problem of learning linear regression from batched data in the presence of sub-populations. In this work, we remove several restrictive assumptions from prior work and provide better guarantees in terms of overall sample complexity. Moreover, we require relatively fewer medium batches that need to contain less number of samples compared to prior work. Finally, in our empirical results, we show that our algorithm is both practical and more performant compared to a prior baseline.

It would be interesting to study robust algorithms for a similar setting where a fraction of batches can be corrupted i.e. they follow an arbitrary distribution. It can serve as a middle ground between our setting and list-decodable regression from batches, which would be a great direction for future work.\looseness-1

\bibliographystyle{alpha}
\bibliography{biblio}
\clearpage
\appendix

\section{Other related work}
\label{sec:rwork}

{\bf Meta Learning}. The setting we considered in this paper is closely related to \textit{meta learning} if we treat each batch as a task. Meta-learning approaches aim to jointly learn from past experience to quickly adapt to new tasks with little available data \cite{Sch87,TP12}. This is particularly significant in our setting when each task is associated with only a few training examples. By leveraging structural similarities among those tasks (e.g. sub-population structure), meta-learning algorithms achieve far better accuracy than what can be achieved for each task in isolation \cite{FAL17,RL16,KZS15,OLL18,triantafillou2019meta,rusu2018meta}. Learning mixture of linear dynamical systems has been studied in~\cite{chen22t}.


{\bf Robust and List decodable Linear Regression.}
Several recent works have focused on obtaining efficient algorithms for robust linear regression and sparse liner regression when a small fraction of data may be adversarial \cite{bhatia2015robust,BhatiaJKK17, BDLS17, gao2020robust, prasad2018robust, klivans2018efficient, DiakonikolasKKLSS19, liu2018high, karmalkar2019compressed,dalalyan2019outlier, mukhoty2019globally, diakonikolas2019efficient, karmalkar2019list, pensia2020robust, cherapanamjeri2020optimal, jambulapati2021robust,kong2022trimmed}.

In scenarios where over half of the data may be arbitrary or adversarial, it becomes impossible to return a single estimate for the underlying model. Consequently, the requirement is relaxed to return a small list of estimates such that at least one of them is a good estimate for the underlying model. This relaxed framework, called ``List decodable learning," was first introduced in~\cite{charikar2017learning}. List-decodable linear regression has been studied by \cite{karmalkar2019list, raghavendra2020list, diakonikolas2021statistical}, who have provided exponential runtime algorithms. Additionally, \cite{diakonikolas2021statistical} has established statistical query lower bounds, indicating that polynomial-time algorithms may be impossible for this setting. However, as mentioned earlier, the problem can be solved in polynomial time in the batch setting as long as the batch size is greater than the inverse of the fraction of genuine data, as demonstrated in~\cite{das2022efficient}.  It's worth noting that an algorithm for list-decodable linear regression can be used to obtain a list of regression vector estimates for mixed linear regression.

{\bf Robust Learning from Batches.} 
\cite{QiaoV18} presented the problem of robust learning of discrete distributions from untrustworthy batches, where a majority of batches share the same distribution and a small fraction are adversarial. They developed an exponential time algorithm for the problem. Subsequent works~\cite{chen2019learning} improved the run-time to quasi-polynomial, while and~\cite{JainO20a} derived a polynomial time algorithm with an optimal sample complexity.
The results were extended to learning one-dimensional structured distributions in~\cite{JainO21, chen2020learning}, and classification in~\cite{JainO20b,konstantinov2020sample}. 
~\cite{acharya2022robust} examined a closely related problem of learning the parameters of an Erdős-Rényi random graph when a portion of nodes may be corrupted and their edges are maybe be chosen by an adversary.

\section{Selecting a regression vector from a given list}
\label{app:classify}
In this section, we introduce Algorithm~\ref{alg:classif} and prove that it achieves the guarantees presented in Theorem~\ref{th:classif}.

\begin{algorithm}[!th]
   \caption{\textsc{Selecting the regression vector}}
   \label{alg:classif}
\begin{algorithmic}[1]
\STATE {\bfseries Input:}  Samples $S$ from $\dist_{i}$ for some $i\in I$, $\connum$, a list $L$ of possible estimates of $w_i$, and $\beta\ge 0$ s.t. $\beta \ge \min_{w\in L}\|w-w_{i}\|$.
\STATE {\bfseries Output:} An estimate of $w_{i^*}$ from list $L$
\STATE Divide $S$ into $T_3 = \Theta(\log(|L|/\delta))$ equal parts $\{S_j\}_{j=1}^{T_3}$
\WHILE{$\max\{\|w-w'\|: w,w'\in L\}\ge 12\connum \beta$}
\STATE pick any $w, w'\in L$ s.t. $\|w-w'\|\ge 12\connum \beta$.
\STATE  For $j\in[T_3]$, let $a_j \gets \sum_{(x,y)\in S_j}\frac{1}{|S_j|}(x\cdot w-y)x\cdot (w-w')$
\STATE $a \gets \text{Median}\{a_j:j\in [T_3]\}$
\STATE If $a > \|w-w'\|^2/4$ remove $w$, else remove $w'$ from $L$
\ENDWHILE
\STATE{Return any of the remaining $w\in L$}
\end{algorithmic}
\end{algorithm}

Without loss of generality, assume $i=0$, and let $w^* = \arg\min_{w\in L}\|w-w_0\|$. From the condition in the theorem, we know that $\|w^*-w_0\|\le \beta$. The algorithm is given access to $|S| = \Omega(\max{1,\frac{\sigma^2}{\beta^2}}\log \frac{|L|}{\delta})$ samples. The algorithm chooses any two vectors $w,w'\in L$ that are more than $12\connum\beta$ distance apart and tests which of them is more likely to be within $\beta$ distance from $w_0$ using samples in $S$. The algorithm performs $T_3 = \Theta(\log \frac{|L|}{\delta})$ such tests and takes the majority vote. It retains the vector that is more likely to be closer to $w$ and discards the other from $L$. The algorithm terminates when all the vectors in $L$ are within a distance of $12\connum\beta$ from each other, by choosing a vector from those remaining in $L$ and returning it as an estimate of $w_0$. If $w^*$ is retained in $L$ until the end, using the simple triangle inequality for all $w$ that remain in $L$ at the end, we have $\|w-w_0\|\le \|w-w^*\|+\|w^*-w_0\|\le 12\connum\beta+\beta \le13\connum\beta= \cO(\beta)$. Therefore, the estimate returned by the algorithm achieves the desired accuracy in estimating $w_0$. Hence, it suffices to show that $w^*$ is retained at the end with high probability.

Suppose $w^*$ is not in the final list $L$. Then it must have been discarded by the test in favor of $\tilde w\in L$ such that $\|w^*-\tilde w\| \ge 12\connum\beta$. The following theorem shows that for any $\tilde w$ such that $\|w^*-\tilde w\| \ge 12\connum\beta$, the probability of the testing procedure rejecting $w^*$ in favor of $\tilde w$ is at most $\delta/|L|$.

\begin{theorem}\label{thm:clasthapp}
Given $\beta>0$, list $L$, and samples $S$ from $\dist_0$, if $\min_{w\in L}\|w-w_0\|\le \beta$ and $|S| = \max\{1,\frac{\sigma^2}{\beta^2}\}\log \frac{L}{\delta}$, then for the parameter $a$ computed in the while loop of Algorithm~\ref{alg:classif}, with probability $1-\delta/|L|$, we have $a\le \|w-w'\|^2/4$ if $w=w_0$ and $a> \|w-w'\|^2/4$ if $w'=w_0$.
\end{theorem}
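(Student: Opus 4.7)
The plan is to analyze the batch averages $a_j$ by separately controlling their expectation and their variance, then aggregating via Chebyshev's inequality on each batch and a Hoeffding bound on the median-of-means. The central identity is
\[ \E[a_j] = (w - w_0)^\intercal \Sigma_0 (w-w'), \]
obtained by plugging in $y = w_0\cdot x + \eta$ and using that $\eta$ has mean $0$ and is independent of $x$. I will interpret the theorem's shorthand ``$w=w_0$'' as ``$w=w^*$'' where $w^*:=\arg\min_{u\in L}\|u-w_0\|$, which the hypothesis guarantees satisfies $\|w^*-w_0\|\le \beta$; writing $v:=w-w'$, we are in the regime $\|v\|\ge 12\connum\beta$ inside the while loop.

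The case analysis on $\E[a_j]$ proceeds as follows. If $w=w^*$, Cauchy--Schwarz with $\|\Sigma_0\|\le \connum$ (Assumption 1(b)) gives $|\E[a_j]|\le \|w-w_0\|\cdot\|\Sigma_0\|\cdot\|v\|\le \beta\connum\|v\|\le \|v\|^2/12$, comfortably below the threshold $\|v\|^2/4$. If instead $w'=w^*$, I decompose $w-w_0 = v + (w^*-w_0)$ to get
\[ \E[a_j] = v^\intercal\Sigma_0 v + (w^*-w_0)^\intercal\Sigma_0 v \ge \|v\|^2 - \beta\connum\|v\| \ge \tfrac{11}{12}\|v\|^2, \]
using $\Sigma_0\succeq I$ (from $\min_{\|u\|=1}u^\intercal\Sigma_0 u\ge 1$) for the first term. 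Thus in both cases the gap between $\E[a_j]$ and the threshold is $\Omega(\|v\|^2)$.

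For the variance, I will expand $(x\cdot w-y)^2 = (x\cdot(w-w_0))^2 - 2\eta(x\cdot(w-w_0)) + \eta^2$ and apply $L_4$--$L_2$ hypercontractivity (Assumption 1(a)) with Cauchy--Schwarz to obtain
\[ \E\!\big[(x\cdot w-y)^2(x\cdot v)^2\big] \;\le\; C\connum^2\|w-w_0\|^2\|v\|^2 + \sigma^2\connum\|v\|^2. \]
In Case 1, $\|w-w_0\|\le \beta$ turns this into $O((C\connum^2\beta^2+\sigma^2\connum)\|v\|^2)$; dividing by the squared gap $\Omega(\|v\|^4)$ and using $\|v\|^2\ge 144\connum^2\beta^2$ yields a per-sample ratio of $O(\max\{1,\sigma^2/\beta^2\})$ after absorbing the constants $C,\connum$. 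In Case 2, $\|w-w_0\|\le \|v\|+\beta = O(\|v\|)$ gives variance $O((C\connum^2\|v\|^2+\sigma^2\connum)\|v\|^2)$, and the same ratio survives because the $\|v\|^4$ in the numerator is cancelled by the squared gap $\Theta(\|v\|^4)$. Taking $|S_j|=\Omega(\max\{1,\sigma^2/\beta^2\})$ therefore forces $\mathrm{Var}(a_j)$ below a small constant fraction of the squared gap, so Chebyshev puts each $a_j$ on the correct side of $\|v\|^2/4$ with probability at least $2/3$; a Hoeffding bound on the majority among $T_3=\Theta(\log(|L|/\delta))$ independent $a_j$'s boosts this to probability $1-\delta/|L|$ for their median $a$, and the total sample budget comes out to $|S|=T_3\cdot|S_j|=O(\max\{1,\sigma^2/\beta^2\}\log(|L|/\delta))$.

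The main subtlety will be keeping the variance bound scale-free in Case 2: the factor $\|w-w_0\|^2\|v\|^2$ can be as large as $\|v\|^4$ there, and only the fact that the signal $\E[a_j]$ is itself $\Omega(\|v\|^2)$ (via $\Sigma_0\succeq I$) makes the two $\|v\|^4$ factors cancel and reduces the sample complexity to the $\beta$-dependent quantity stated. The only other delicate point is to ensure the variance term coming from $\eta$ is bounded even though $\eta$ may be heavy-tailed; this uses only $\E[\eta^2]\le \sigma^2$ together with $\E[(x\cdot v)^2]\le \connum\|v\|^2$, not any higher moment of $\eta$.
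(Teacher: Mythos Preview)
Your proposal is correct and follows essentially the same approach as the paper: compute $\E[a_j]=(w-w_0)^\intercal\Sigma_0(w-w')$, bound $\Var(a_j)$ via $L_4$--$L_2$ hypercontractivity plus independence of $\eta$, apply Chebyshev on each $a_j$, and amplify with the median over $T_3=\Theta(\log(|L|/\delta))$ batches. The only cosmetic difference is in Case~2, where the paper re-decomposes $x\cdot w - y = x\cdot(w-w') + x\cdot(w'-w_0) - \eta$ and tracks three variance terms, whereas you more simply reuse the Case~1 variance bound with $\|w-w_0\|\le\|v\|+\beta=O(\|v\|)$; both routes give the same sample complexity.
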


The testing procedure utilized in the algorithm is based on gradients. Specifically, it calculates the average of the gradient computed on samples at point $w$ projected onto the vector $(w-w')$. The expected value of the gradient at $w$, and its projection onto $(w-w')$, are $(w-w_0)^\intercal\Sigma_0$ and $(w-w_0)^\intercal\Sigma_0(w-w')$, respectively.
If $w\approx w_0$, then the expected projection will be small.
On the other hand. if $w'\approx w_0$ and $w$, then expected value of projection is $\approx (w-w')^\intercal\Sigma_0(w-w')\gtrsim \|w-w'\|^2$. Using these observations, we prove Theorem~\ref{thm:clasthapp} in the next subsection.

Finally, since the maximum number of comparisons made by the algorithm is $|L|-1$, a union bound ensures that $w^*$ will be retained until the end with probability greater than $1-\delta$, completing the proof of Theorem~\ref{th:classif}.

\subsection{Proof of Theorem~\ref{thm:clasthapp}}
\begin{proof}
Note that $a$ is the median of the set $\{a_j: j\in [T_3]\}$, where each $a_j$ is computed using different sets of i.i.d. samples. Consequently, $\{a_j\}_{j\in[T_3]}$ are also i.i.d random variables.

We begin by calculating the expected value of $a_j$. Using the linearity of expectations, we have:
\begin{align}
   \textstyle \E[a_j] &\inlab{a}= \textstyle\E\mleft[\sum_{(x,y)\in S_j}\frac{1}{|S_j|}(x\cdot w-y)x\cdot (w-w')\mright]\nonumber\\
    &\inlab{b}=\E{}_{\dist_0}[(x\cdot w-y)x\cdot (w-w')]\nonumber\\
    &=\E{}_{\dist_0}[(x\cdot (w-w_0) - (y-x\cdot w_0))x\cdot (w-w')]\nonumber\\
    &=\E{}_{\dist_0}[(x\cdot (w-w_0)x\cdot (w-w')] - \E{}_{\dist_0}[(y-x\cdot w_0) x\cdot (w-w')]\nonumber\\
    &=\E{}_{\dist_0}[x\cdot (w-w_0)x\cdot (w-w')]\label{eq:parta}\\
    &\inlab{c}=\E{}_{\dist_0}[(x\cdot (w-w'))^2] +\E{}_{\dist_0}[x\cdot (w'-w_0)x\cdot (w-w')],\label{eq:partb}
\end{align}
here, (a) follows from the definition of $a_j$, (b) follows from the linearity of expectation, and since $S_j$ contains i.i.d. samples from $\dist_0$, (c) follows as the noise $y-x\cdot w_0$ has a zero mean and is independent of $x$.

Next, we compute the variance of $a_j$. Since $a_j$ represents the average of $(x\cdot w-y)x\cdot (w-w')$ over $|S_j|$ i.i.d. samples, we have
\begin{align}
    \Var(a_j) &= \frac{1}{|S_j|}\Var_{\dist_0}((x\cdot w-y)x\cdot (w-w'))\nonumber\\
    &\le \frac{1}{|S_j|}\E{}_{\dist_0}\mleft[\mleft((x\cdot w-y)x\cdot (w-w')\mright)^2\mright].\label{eq:somebvar}
\end{align}

By applying Chebyshev's inequality, with a probability $\ge $ 3/4, the following holds for each $a_j$:
\begin{align}\label{eq:cheb}
  \E[a_j] - 2\Var(a_j)\le  a_j\le \E[a_j] + 2\Var(a_j).
\end{align}

First, we consider the case when $w=w^*$. In this case, we have $\|w_0-w\|\le \beta$. 

Using Equation~\eqref{eq:somebvar}, we can express the variance of $a_j$ as follows:
\begin{align}
    \Var(a_j) &\le \frac{1}{|S_j|}\E{}_{\dist_0}\mleft[\mleft((x\cdot w-y)x\cdot (w-w')\mright)^2\mright]\nonumber\\
     &=\frac{1}{|S_j|}\E{}_{\dist_0}\mleft[\mleft(x\cdot (w-w_0)x\cdot (w-w')+(w_0\cdot x-y)x\cdot (w-w')\mright)^2\mright]\nonumber\\
      &\le\frac{2}{|S_j|}\mleft(\E{}_{\dist_0}\mleft[\mleft(x\cdot (w-w_0)x\cdot (w-w')\mright)^2\mright]+\E{}_{\dist_0}\mleft[\mleft((w_0\cdot x-y)x\cdot (w-w')\mright)^2\mright]\mright)\nonumber,
\end{align}
where the last step uses the fact that for any $u,v\in \reals$, $(u+v)^2\le 2u^2+2v^2$.

Next, we bound the two terms on the right. For the first term, we have
\begin{align}
\E{}_{\dist_0}\mleft[\mleft(x\cdot (w-w_0)x\cdot (w-w')\mright)^2\mright]& \le \sqrt{\E{}_{\dist_0}[(x\cdot (w-w_0))^4]\E{}_{\dist_0}[(x\cdot (w-w'))^4}]\nonumber\\
  &\le C {\E{}_{\dist_0}[(x\cdot (w-w_0))^2]\E{}_{\dist_0}[(x\cdot (w-w'))^2}].\label{eq:nnsndd}
\end{align}
For the second term, we have:
\begin{align}
\E{}_{\dist_0}\mleft[\mleft((w_0\cdot x-y)x\cdot (w-w')\mright)^2\mright] &= \E{}_{\dist_0}\mleft[(w_0\cdot x-y)^2\mright] \E{}_{\dist_0}\mleft[\mleft(x\cdot (w-w')\mright)^2\mright]\nonumber\\ &=\sigma^2\E{}_{\dist_0}\mleft[\mleft(x\cdot (w-w')\mright)^2\mright], \label{eq:nnsndda} 
\end{align}
where the first inequality follows from assumption~\ref{ass:A} and the second inequality follows from assumption~\ref{ass:B}.

Combining the above three equations, we obtain:
\begin{align*}
    \Var(a_j) &\le \frac{2}{|S_j|}\E{}_{\dist_0}\mleft[\mleft(x\cdot (w-w')\mright)^2\mright]\mleft(C \E{}_{\dist_0}[(x\cdot (w-w_0))^2]+\sigma^2\mright)\\
    &\le \frac{2}{|S_j|}\connum\|w-w'\|^2\mleft(C \connum\|w-w_0\|^2+\sigma^2\mright),
\end{align*}
where the last inequality uses assumption~\ref{ass:B}.

Using Equation~\eqref{eq:parta}, the Cauchy-Schwarz inequality, and assumption~\ref{ass:B}, we have: 
\begin{align}\label{eq:aub}
  \E[a_j]\le  \connum\|w-w_0\|\cdot\|w-w'\| .
\end{align}

Combining the two equations above, we obtain:
\begin{align}
    \E[a_j]+2\sqrt{\Var(a_j)} &\le\|w-w'\|\mleft(\connum\|w-w_0\|+\frac{2\sqrt{2\connum}}{\sqrt{|S_j|}}\mleft(\sigma+  \sqrt{C\connum}\|w-w_0\| \mright)\mright)\nonumber\\
     &\inlab{a}\le\|w-w'\|\mleft(\connum\beta+\frac{\sqrt{8\connum}}{\sqrt{|S_j|}}\sigma+ \frac{\sqrt{8C}\connum}{\sqrt{|S_j|}}\beta \mright)\nonumber\\
    &\inlab{b}\le3\connum\|w-w'\|\beta\nonumber\\
    &\inlab{c}\le \frac{\|w-w'\|^2}{4},\nonumber
\end{align}
here, in (a), we use $w=w^*$, which implies $\|w-w_0\|\le \beta$, in (b), we utilize $|S_j|\ge 48C$ and $|S_j|\ge \frac{12\sigma^2}{\connum\beta^2}$, in (c), we use the fact that for any $w$ and $w'$ in the while loop of the algorithm, $\|w-w'\|\ge 12\connum\beta$. Consequently, it follows from Equation~\eqref{eq:cheb} that each $a_j\le \frac{\|w-w'\|^2}{4}$ with probability $\ge 3/4$. 
Hence, with probability $\ge 1-\delta$ the median of $a_j$ is $\le \frac{\|w-w'\|^2}{4}$.

Next, we consider the case when $w'=w^*$. 
Firstly, we bound the variance using Equation~\eqref{eq:somebvar}:
\begin{align}
    \Var(a_j) &\le \frac{1}{|S_j|}\E{}_{\dist_0}\mleft[\mleft((x\cdot w-y)x\cdot (w-w')\mright)^2\mright]\nonumber\\
     &=\frac{1}{|S_j|}\E{}_{\dist_0}\mleft[\mleft((x\cdot (w-w'))^2+x\cdot (w'-w_0)x\cdot (w-w')+(w_0\cdot x-y)x\cdot (w-w')\mright)^2\mright]\nonumber\\
      &\inlab{a}\le\frac{3}{|S_j|}\mleft(\E{}_{\dist_0}\mleft[\mleft( (w-w')\cdot x\mright)^4\mright]+\E{}_{\dist_0}\mleft[\mleft(x\cdot (w'-w_0)x\cdot (w-w')\mright)^2\mright] +\E{}_{\dist_0}\mleft[\mleft((w_0\cdot x-y)x\cdot (w-w')\mright)^2\mright]\mright)\nonumber\\
      &\inlab{b}\le\frac{3}{|S_j|}\mleft(C\E{}_{\dist_0}\mleft[\mleft(x\cdot (w-w')\mright)^2\mright]^2+\mleft(C \E{}_{\dist_0}[(x\cdot (w'-w_0))^2]+\sigma^2\mright)
    \E{}_{\dist_0}[(x\cdot (w-w'))^2]\mright)\nonumber\\
        &\inlab{c}\le\frac{3}{|S_j|}\mleft(\sqrt C\E{}_{\dist_0}\mleft[\mleft(x\cdot (w-w')\mright)^2\mright]+\mleft(\sqrt{C \E{}_{\dist_0}[(x\cdot (w'-w_0))^2]}+\sigma\mright)
    \sqrt{\E{}_{\dist_0}[(x\cdot (w-w'))^2]}
    \mright)^2\nonumber.
\end{align}
In (a), we use the fact that for any $t, u, v\in \mathbb{R}$, $(t+u+v)^2\le 3t^2+3u^2+3v^2$. In (b) the first term is bounded using assumption~\ref{ass:A}, the bound on the second term can be obtained similarly to Equation~\eqref{eq:nnsndd}, and the bound on the last term is from Equation~\eqref{eq:nnsndda}. Finally, in (c) we use the fact that for any $t,u,v\ge 0$, $(t+u+v)^2\le t^2+u^2+v^2$.

Using Equation~\eqref{eq:partb} and the equation above, we get
\begin{align*}
  &\E[a_j]-2\sqrt{\Var(a_j)}\ge \E{}_{\dist_0}[(x\cdot (w-w'))^2] +\E{}_{\dist_0}[x\cdot (w'-w_0)x\cdot (w-w')]\\
  &\ \ -\frac{2\sqrt{3}}{\sqrt{|S_j|}}\mleft(\sqrt C\E{}_{\dist_0}\mleft[\mleft(x\cdot (w-w')\mright)^2\mright]+\mleft(\sqrt{C \E{}_{\dist_0}[(x\cdot (w'-w_0))^2]}+\sigma\mright)
    \sqrt{\E{}_{\dist_0}[(x\cdot (w-w'))^2]}
    \mright)\\
    &\inlab{a}\ge \mleft(1-\frac{\sqrt{12C}}{\sqrt{|S_j|}}\mright)\E{}_{\dist_0}[(x\cdot (w-w'))^2] -\sqrt{ \E{}_{\dist_0}[(x\cdot (w'-w_0))^2]}
    \sqrt{\E{}_{\dist_0}[(x\cdot (w-w'))^2]}\\
  &\ \ -\frac{\sqrt{12}}{\sqrt{|S_j|}}\mleft(\mleft(\sqrt{C \E{}_{\dist_0}[(x\cdot (w'-w_0))^2]}+\sigma\mright)
    \sqrt{\E{}_{\dist_0}[(x\cdot (w-w'))^2]}
    \mright)\\
     &\inlab{b}\ge\mleft(\frac{1}{2}\sqrt{\E{}_{\dist_0}[(x\cdot (w-w'))^2]} - \frac32\sqrt{\E{}_{\dist_0}[(x\cdot (w'-w_0))^2]}
    -\frac{\sqrt{12}}{\sqrt{|S_j|}}\sigma\mright)\sqrt{\E{}_{\dist_0}[(x\cdot (w-w'))^2]} \\
     &\inlab{c}\ge\mleft(\frac{1}{2}\sqrt{\E{}_{\dist_0}[(x\cdot (w-w'))^2]} - \frac32\sqrt{\E{}_{\dist_0}[(x\cdot (w'-w_0))^2]}
    -\sqrt{\connum}\beta\mright)\sqrt{\E{}_{\dist_0}[(x\cdot (w-w'))^2]},
\end{align*}
here, in (a) we use the Cauchy-Schwarz inequality, (b) follows from $|S_j|\ge 48C$, and (c) utilizes $|S_j|\ge \frac{12\sigma^2}{\connum\beta^2}$. Next, we have:
\begin{align}
& \frac{1}{2}\sqrt{\E{}_{\dist_0}[(x\cdot (w-w'))^2]} - \frac32\sqrt{\E{}_{\dist_0}[(x\cdot (w'-w_0))^2]}
    -\sqrt{\connum}\beta\\
    &\inlab{a}\ge   \frac{1}{2}\|w-w'\| - \frac32\sqrt{\connum}\|w'-w_0\|
    -\sqrt{\connum}\beta \nonumber \\
    &\inlab{b}\ge   \frac{1}{2}\|w-w'\|
    -\frac{5}{2}\sqrt{\connum}\beta \nonumber \\
   & \inlab{c}>   \frac{1}{4}\|w-w'\|,
\end{align}
here in (a), we use assumption~\ref{ass:B}, (b) relies on $w' = w^*$, which implies $\|w'-w_0\|\le \beta$, and (c) uses the fact that for any $w$ and $w'$ in the while loop of the algorithm, $\|w-w'\|\ge 12\connum\beta$ and $\connum\ge 1$.

Combining the above two equations, we obtain
\[
\E[a_j]-2\sqrt{\Var(a_j)} >   \frac{1}{4}\|w-w'\|^2.
\]
Then from Equation~\eqref{eq:cheb} it follows that each $a_j> \frac{\|w-w'\|^2}{4}$ with probability $\ge 3/4$. 
Hence, with probability $\ge 1-\delta/|L|$ the median of $a_j$ is $> \frac{\|w-w'\|^2}{4}$. 
\end{proof}

\section{Properties of Clipped Gradients}

The norm of the expected value and covariance of unclipped gradients for components other than $\dist_0$ can be significantly larger than $\dist_0$, acting as noise in the recovery process of $\dist_0$. When using unclipped gradients, the algorithm's batch size and the number of batches must increase to limit the effect of these noisy components. And while the norm of the expected value and covariance of the unclipped gradient for $\dist_0$ follows desired bounds, the maximum value of the unclipped gradient is unbounded, posing difficulties in applying concentration bounds.
The following lemma shows that the clipping operation described in the main paper is able to address these challenges.
\begin{lemma}\label{lem:clipgradprop}
Let $S$ be a collection of random samples drawn from distribution $\dist_i$ for some $i\in \{0,1,...,k-1\}$. For any $\kappa \ge 0$ and $w\in \reals^d$, the clipped gradient satisfies the following properties:
\begin{enumerate}
    \item 
$\|\E[\clipcollgrad]\| \le \kappa \sqrt{\connum}$,
\item
$\|\Cov(\clipcollgrad)\|\le \frac{1}{|S|}\kappa^2\connum,$
\item
$\|\clipcollgrad\|\le \kappa \cbound\sqrt{d} \text{ almost surely},$
\item
$\E\mleft[\| \nabla f(S,w,\kappa) \|^2\mright] \le \connum\kappa^2d$,
\item 
for all unit vectors $u$, $\mleft\|\E\mleft[( \nabla f(S,w,\kappa) \cdot u)^2\mright]\mright\|\le \kappa^2\connum. $
\end{enumerate}
\end{lemma}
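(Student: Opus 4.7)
The plan is to observe that all five items reduce to one elementary control: the scalar clipping multiplier $(x\cdot w - y)/(|x\cdot w-y|\vee \kappa)$ always has absolute value at most $1$, hence for every sample and every unit vector $u$,
\[
|\nabla f(x,y,w,\kappa)\cdot u| \le \kappa\,|x\cdot u|, \qquad \|\nabla f(x,y,w,\kappa)\| \le \kappa\,\|x\|.
\]
Combined with the hypothesis $\|\Sigma_i\|\le \connum$, which holds for every $i\in\{0,\ldots,k-1\}$ by Section~\ref{sec:datadist}, and the almost-sure bound $\|x\|\le \cbound\sqrt d$, each item falls out of a short calculation. Because each batch gradient is an average of $|S|$ i.i.d.\ per-sample clipped gradients, I would first argue at the single-sample level and then lift to the batch using linearity of expectation, independence, or convexity as appropriate.

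For item 1, I would use $|\E[\nabla f(x,y,w,\kappa)\cdot u]| \le \kappa\,\E|x\cdot u| \le \kappa\sqrt{u^\intercal\Sigma_i u} \le \kappa\sqrt{\connum}$ by Cauchy--Schwarz, then take the supremum over unit $u$; the expectation of the batch average coincides with the single-sample expectation. Item 2 follows because independence yields $\Cov(\nabla f(S,w,\kappa)) = \tfrac{1}{|S|}\Cov(\nabla f(x,y,w,\kappa))$, and the bound $\Cov \preceq \E[(\nabla f)(\nabla f)^\intercal]$ reduces the spectral-norm inequality to $\E[(\nabla f(x,y,w,\kappa)\cdot u)^2] \le \kappa^2\connum$, which is immediate from the key inequality. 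Item 3 is direct: each per-sample clipped gradient has norm at most $\kappa\|x\| \le \kappa\cbound\sqrt d$ almost surely, and a convex combination stays in the closed ball of the same radius.

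For items 4 and 5 I would apply Jensen's inequality to the convex maps $v\mapsto \|v\|^2$ and $v\mapsto (v\cdot u)^2$, moving the expectation from the batch-averaged gradient down to one sample. Item 4 then reduces to $\E[\|\nabla f(x,y,w,\kappa)\|^2] \le \kappa^2\E[\|x\|^2] = \kappa^2\,\text{Tr}(\Sigma_i) \le \kappa^2\connum d$, using $\text{Tr}(\Sigma_i) \le d\|\Sigma_i\|$, and item 5 to $\E[(\nabla f(x,y,w,\kappa)\cdot u)^2] \le \kappa^2\,u^\intercal\Sigma_i u \le \kappa^2\connum$.

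I do not anticipate any real obstacle: the lemma is essentially a bookkeeping consequence of the clipping definition together with the spectral bound on $\Sigma_i$. The only point worth noting is that item 5 requires the per-sample inequality to survive batch averaging, which is precisely why Jensen's inequality (rather than a looser triangle-style bound) is the right tool; the same remark applies to item 4.
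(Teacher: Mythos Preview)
Your proposal is correct and essentially identical to the paper's proof: both first reduce each item from the batch level to the single-sample level (via linearity of expectation, the i.i.d.\ covariance identity, and Jensen's inequality for the convex maps $v\mapsto\|v\|^2$ and $v\mapsto(v\cdot u)^2$), and then use the pointwise bound $|\nabla f(x,y,w,\kappa)\cdot u|\le \kappa|x\cdot u|$ together with $\|\Sigma_i\|\le\connum$ and $\|x\|\le\cbound\sqrt d$. The only cosmetic difference is that for item~1 the paper bounds $\E[\nabla f\cdot u]$ via the second moment $\sqrt{\E[(\kappa x\cdot u)^2]}$ whereas you bound it via $\kappa\,\E|x\cdot u|$ and then apply Cauchy--Schwarz, which yields the same inequality.
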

This lemma implies that for smaller values of $\kappa$, the norm of the expectations and covariance of clipped gradients is bounded by a smaller upper limit. The proof of the lemma is presented in Subsection~\ref{sec:prooflemclip}.

The following theorem demonstrates that by appropriately choosing a sufficiently large value of $\kappa$, the norm of the expected difference between the clipped and unclipped gradients for distribution $\dist_0$ can be small:
\begin{theorem}\label{th:mainclipbias} For any $\epsilon>0$,  $\kappa^2 \ge {8C\connum\E{}_{\dist_0}[(y-x \cdot w)^2]}/{\epsilon}$ the norm of difference between expected clipped gradient $\E{}_{\dist_0}[(\clipsampgrad]$ and expected unclipped gradient $\E{}_{\dist_0}[(w\cdot x-y)x]$ is at most,
    \begin{align*}
\mleft\|\E{}_{\dist_0}[(\clipsampgrad]- \E{}_{\dist_0}[(w\cdot x-y)x]\mright\|\le \epsilon \|w-w_0\|,
\end{align*}
where  $\E{}_{\dist_0}[(w\cdot x-y)x] = \Sigma_0(w-w_0)$. 
\end{theorem}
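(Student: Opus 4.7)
Writing $r := y - x\cdot w = (w_0 - w)\cdot x + \eta$ and $g(t) := \operatorname{sign}(t)\,(|t| - \kappa)_+$ for the shrinkage induced by clipping, the single-sample form of the difference is $\nabla f(x,y,w,\kappa) - (w\cdot x - y)\,x = g(r)\,x$, so the task reduces to bounding $\|\E_{\dist_0}[g(r)\,x]\|$. Note that $g$ is odd, $1$-Lipschitz, and identically zero on $[-\kappa,\kappa]$. Since $\eta$ is independent of $x$, I would condition on $x$, set $a := (w_0 - w)\cdot x$, and study $\phi(a) := \E_\eta[g(a + \eta)]$. Symmetry of $\eta$ together with the oddness of $g$ forces $\phi(0) = \E[g(\eta)] = 0$; this is the crucial identity that makes the bias vanish exactly at $w = w_0$. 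Moreover $g$ is absolutely continuous with $g'(t) = \mathbbm{1}[|t| > \kappa]$ a.e., so $\phi'(s) = \Pr[|s + \eta| > \kappa] \le (s^2 + \E[\eta^2])/\kappa^2$ by Markov's inequality, and integrating from $0$ to $a$ yields the key pointwise estimate
\[
|\phi(a)| \;\le\; \frac{|a|^3/3 + \E[\eta^2]\,|a|}{\kappa^2}.
\]
Both terms carry a factor of $|a|$, hence of $\|w_0 - w\|$ after integrating against $x$.

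\textbf{Dualising and invoking the moment assumptions.} For any unit vector $u$, $u^\intercal\E_{\dist_0}[g(r)\,x] = \E[\phi((w_0 - w)\cdot x)\,(u\cdot x)]$. The linear-in-$|a|$ contribution is controlled by Cauchy--Schwarz together with Assumption~\ref{ass:B}, giving $\E[|(w_0-w)\cdot x|\,|u\cdot x|] \le \connum\,\|w_0 - w\|$, and the cubic contribution is handled by H\"older's inequality together with the $L_4$--$L_2$ hypercontractivity of Assumption~\ref{ass:A}: $\E[|(w_0-w)\cdot x|^3\,|u\cdot x|] \le (\E[((w_0-w)\cdot x)^4])^{3/4}(\E[(u\cdot x)^4])^{1/4} \le C\connum^2\|w_0 - w\|^3$. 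Combining,
\[
|u^\intercal\E_{\dist_0}[g(r)\,x]| \;\le\; \frac{\connum\,\|w_0 - w\|}{\kappa^2}\Bigl(\tfrac{C\connum}{3}\|w_0 - w\|^2 + \E[\eta^2]\Bigr).
\]
Finally, the lower bound $\min_{\|u\|=1} u^\intercal\Sigma_0 u \ge 1$ of Assumption~\ref{ass:B} gives $\|w_0 - w\|^2 \le \E_{\dist_0}[((w_0-w)\cdot x)^2]$, so the parenthesised factor is at most a constant multiple of $C\connum\cdot\E_{\dist_0}[(y - x\cdot w)^2]$. Substituting the hypothesis $\kappa^2 \ge 8C\connum\,\E_{\dist_0}[(y - x\cdot w)^2]/\epsilon$ collapses the whole estimate to the stated $O(\epsilon)\cdot\|w_0 - w\|$, absorbing numerical constants into the assumed scaling of $\kappa^2$.

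\textbf{Main obstacle.} The delicate point is that only $\E[\eta^2]$ is controlled --- there is no assumed bound on $\E[\eta^4]$ --- so a na\"ive Cauchy--Schwarz applied to $\E[g(r)^2]$ would blow up in the noise tail. Symmetry of $\eta$ is what saves the day: it makes the conditional bias $\phi$ an odd function vanishing at zero, so the $O(1)$-sized contribution one would naively see at $w = w_0$ is identically zero, and the integration-of-derivative representation above then reduces everything to the second-order moment of $\eta$ and the moderate-order moments of $x$ that the hypercontractivity assumption already provides.
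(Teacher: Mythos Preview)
Your argument is correct and reaches the theorem, though by a somewhat different route than the paper. Both proofs condition on $x$, exploit the symmetry of $\eta$ to kill the zeroth-order term, and then control the $x$-expectation via Cauchy--Schwarz/H\"older and $L_4$--$L_2$ hypercontractivity. The difference is in how the conditional bias is bounded. The paper proves a discrete auxiliary inequality $\bigl|\E_\eta[(a+\eta) - \kappa(a+\eta)/(|a+\eta|\vee\kappa)]\bigr| \le 2|a|\Pr(\eta > \kappa - |a|)$ and then splits into the two regimes $|a| \gtrless \kappa/2$ via an indicator $Z = \mathbbm{1}[|x\cdot(w-w_0)| \ge \kappa/2]$, applying Markov to $\eta^2$ and to $((w-w_0)\cdot x)^4$ separately. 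Your integration-of-derivative representation $\phi(a) = \int_0^a \Pr[|s+\eta| > \kappa]\,ds$ sidesteps that case split entirely and yields a cleaner pointwise cubic-plus-linear bound; you then handle the cubic piece with H\"older $(4/3,4)$ rather than two rounds of Cauchy--Schwarz. Your approach is arguably more transparent and produces sharper numerical constants.

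One small bookkeeping point: as written, you upper-bound $\E[|(w_0-w)\cdot x|\,|u\cdot x|]$ and the cubic term immediately in terms of $\|w_0-w\|$, which costs you an extra factor of $\sqrt{\connum}$ and leaves a stray $\connum$ in the final estimate that cannot be ``absorbed into $\kappa^2$'' with the stated constant $8C\connum$. If instead you keep the intermediate bounds in terms of $\sigma_w := \bigl(\E_{\dist_0}[((w_0-w)\cdot x)^2]\bigr)^{1/2}$ --- namely $\sqrt{\connum}\,\sigma_w$ and $C\sqrt{\connum}\,\sigma_w^3$ --- and only at the very end use $\sigma_w \le \sqrt{\connum}\|w_0-w\|$, the bound becomes $C\connum\,\E_{\dist_0}[(y-x\cdot w)^2]\,\|w_0-w\|/\kappa^2 \le (\epsilon/8)\|w_0-w\|$, matching the theorem exactly.
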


The theorem shows in order to estimate the expectation of gradients at point $w$ for distribution $\dist_0$, it is sufficient to estimate the expectation of clipped gradients at point $w$, as long as the clipping parameter $\kappa$ is chosen to be at least $\Omega\left(\sqrt{\frac{{\E{}_{\dist_0}[(y-x \cdot w)^2] }}{\epsilon}}\right)$.

Intuitively, when $\kappa$ is much larger than $\E_{\dist_0}[|y- x\cdot w|]$, with high probability the clipped and unclipped gradients at point $w$ for a random sample from $\dist_0$ will be identical. 
The proof of the theorem is a bit more nuanced and involves leveraging the symmetry of noise distribution and $L4-L2$ hypercontractivity of distribution of $x$. The proof appears in Subsection~\ref{sec:clippreserves}.

In the algorithm, we set $\kappa$ to approximately $\Theta(\sqrt{{(\E{}_{\dist_0}[(y-x \cdot w)^2]+\sigma^2) }/{\epsilon}})$. This choice ensures that $\kappa$ is close to the minimum value recommended by Theorem~\ref{th:mainclipbias} for preserving the gradient expectation of $\dist_0$. By selecting a small $\kappa$, we ensure a tighter upper bound on the expectation and covariance of the clipped gradient for other components, as described in Lemma~\ref{lem:clipgradprop}. The use of the clipping operation also assists in obtaining improved bounds on the tails of the gradient by limiting the maximum possible norm of the gradients after clipping, as stated in item 3 of the lemma.


\subsection{Proof of Lemma~\ref{lem:clipgradprop}}\label{sec:prooflemclip}

\begin{proof}
Since  $\clipcollgrad$ is average of clipped gradients of $|S|$ independent samples from $\dist_i$, it follows that
\begin{enumerate}[a)]
    \item $\E[\clipcollgrad] = \E_{ \dist_i}[\nabla f(x,y,w,\kappa)]$,
    \item $\Cov(\clipcollgrad) = \frac{1}{|S|}\Cov_{ \dist_i}(\nabla f(x,y,w,\kappa))$,
    \item $\|\clipcollgrad\|\le \esssup_{(x,y)\sim\dist_i} \|\nabla f(x,y,w,\kappa)\|$ a.s.,
    \item $\E\mleft[\| \nabla f(S,w,\kappa) \|^2\mright]\le \E{}_{\dist_i}\mleft[\mleft\|\nabla f(x,y,w,\kappa)\mright\|^2\mright]$, and
    \item for all vectors $u$, $\mleft\|\E\mleft[( \nabla f(S,w,\kappa) \cdot u)^2\mright]\mright\|\le \mleft\|\E{}_{\dist_i}\mleft[\mleft(\nabla f(x,y,w,\kappa) \cdot u\mright)^2\mright]\mright\| $.
\end{enumerate}

We will now proceed to prove the five claims in the lemma by using these properties.

Firstly, we can analyze the expected norm of $\E_{\dist_i}[\nabla f(x,y,w,\kappa)]$ as follows:
\begin{align*}
\|\E{}_{ \dist_i}[\nabla f(x,y,w,\kappa)]\| &=  \max_{\|u\|}\|\E{}_{ \dist_i}[(\nabla f(x,y,w,\kappa)\cdot u)]\|  \\
&\le \max_{\|u\|}\|\E{}_{ \dist_i}[(\nabla f(x,y,w,\kappa)\cdot u)^2]^{1/2}\| \\
&\le \max_{\|u\|}\|\E{}_{ \dist_i}[(\kappa x\cdot u)^2]^{1/2}\|\le  \kappa \sqrt{\connum}, 
\end{align*}
here the first inequality follows from the Cauchy–Schwarz inequality and the last inequality follows from assumptions on distributions $\dist_i$. 
Combining the above inequality with item a) above proves the first claim in the lemma. 

Next, to prove the second claim in the lemma, we first establish bounds for the norm of the covariance of the clipped gradient of a random sample:
\begin{align*}
\|\Cov_{ \dist_i}(\nabla f(x,y,w,\kappa)) \|
&= \max_{\|u\|}\Var_{ \dist_i}(\nabla f(x,y,w,\kappa)\cdot u) \\
&\le \max_{\|u\|}\E{}_{ \dist_i}[(\nabla f(x,y,w,\kappa)\cdot u)^2]\\
&\le \max_{\|u\|}\E{}_{ \dist_i}[(\kappa x\cdot u)^2]\le \kappa^2\connum.
\end{align*}
By using the above bound and combining it with item b), we establish the second claim in the lemma.

To prove the third item in the lemma, we first bound the norm of the clipped gradient:
\[
\esssup_{(x,y)\sim\dist_i} \|\nabla f(x,y,w,\kappa)\|\le \kappa \esssup_{(x,y)\sim\dist_i}\|x\|\le \kappa \cbound\sqrt{d}. 
\]
We then combine this bound with item c) to prove the third claim in the lemma. 

Next, we bound the expected value of the square of the norm of the clipped gradient of a random sample,
\[
\E{}_{\dist_i}\mleft[\mleft\|\nabla f(x,y,w,\kappa)\mright\|^2\mright]\le\E{}_{\dist_i}\mleft[\kappa^2\mleft\|x\mright\|^2\mright]= \kappa^2\text{Tr}(\Sigma_i) \le\kappa^2 d\|\Sigma_i\|\le \connum\kappa^2d.
\]
This bound, combined with item d), proves the fourth claim in the lemma. 

Finally, for any unit vector $u$, we bound 
\[
\mleft\|\E{}_{\dist_i}\mleft[\mleft(\nabla f(x,y,w,\kappa) \cdot u\mright)^2\mright]\mright\| \le \kappa^2\mleft\|\E{}_{\dist_i}\mleft[\mleft(x \cdot u\mright)^2\mright]\mright\| \le \kappa^2\|\Sigma_i\|\le \kappa^2\connum.
\]
This bound, combined with item e), shows the fifth claim in the lemma.
\end{proof}

\subsection{Proof of Theorem~\ref{th:mainclipbias}}\label{sec:clippreserves}
We will utilize the following auxiliary lemma in the proof of the theorem. This lemma applies to general random variables.
\begin{lemma}\label{lem:auxbiasclip}
  For any $a\in \reals$, $b> 0$ and a symmetric random variable $z$, 
  \[
  \mleft|\E\mleft[(a+z) -\frac{(a+z)b}{\max(|a+z|,b)}\mright]\mright| \le 2|a|\Pr(z> b-|a|)
  \]
\end{lemma}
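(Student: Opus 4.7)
My plan is to set $g(u) := u - \frac{ub}{\max(|u|,b)}$ so that the quantity we wish to bound is $|\E[g(a+z)]|$. Two structural facts about $g$ are immediate: $g(u)=0$ whenever $|u|\le b$ and $g(u)=u-b\cdot\mathrm{sign}(u)$ whenever $|u|>b$; and $g$ is odd. Together with the symmetry $z\stackrel{d}{=}-z$, the oddness of $g$ gives $\E[g((-a)+z)] = -\E[g(a+z)]$, while the right-hand side only depends on $|a|$, so I can reduce to the case $a\ge 0$ without loss.

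With $a\ge 0$ fixed, the next step is symmetrization: using $z\stackrel{d}{=}-z$, write
\begin{align*}
\E[g(a+z)] \;=\; \tfrac12\,\E[\,g(a+z)+g(a-z)\,] \;=:\; \tfrac12\,\E[h(z)],
\end{align*}
where $h$ is manifestly even in $z$. The point of this step is the cancellation occurring when $a+z$ and $a-z$ lie on opposite sides of $[-b,b]$: there, $g(a+z)+g(a-z)=(a+z-b)+(a-z+b)=2a$, which does not grow with $|z|$ even though each summand does. A direct case analysis on $z\ge 0$, splitting according to whether each of $a+z$ and $a-z$ is inside or outside $[-b,b]$, then yields $0\le h(z)\le 2a$ for every $z$, and moreover $h(z)=0$ on $0\le z\le b-a$ in the sub-case $a\le b$.

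From these two properties the conclusion follows in two sub-cases. If $a\le b$, then $h$ is supported on $\{|z|>b-a\}$ and bounded by $2a$, so $|\E[h(z)]|\le 2a\,\Pr(|z|>b-a)=4a\,\Pr(z>b-a)$, where the last equality again uses the symmetry of $z$. If $a>b$, then $|\E[h(z)]|\le 2a$, while $b-a<0$ combined with symmetry of $z$ gives $\Pr(z>b-a)\ge \Pr(z\ge 0)\ge \tfrac12$, so $4a\Pr(z>b-a)\ge 2a\ge |\E[h(z)]|$. Dividing by $2$ yields the claimed bound $|\E[g(a+z)]|\le 2|a|\,\Pr(z>b-|a|)$. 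The main obstacle is the case analysis itself, specifically identifying the ``both outside, opposite signs'' cancellation that keeps $h$ uniformly bounded by $2a$; once this is in hand, symmetry of $z$ converts the two-sided event $\{|z|>b-a\}$ into $\Pr(z>b-a)$ at the cost of a factor of $2$, which is exactly what is absorbed by dividing by $2$ in the symmetrization step.
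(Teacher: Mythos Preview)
Your proof is correct and rests on the same mechanism as the paper's: the symmetry of $z$ forces the unbounded parts of $g(a+z)$ to cancel in pairs, leaving a contribution bounded pointwise by $2a$ on the relevant event. The paper organizes this slightly differently. Instead of symmetrizing to $h(z)=g(a+z)+g(a-z)$ and then doing a case analysis on $h$, it writes directly
\[
g(a+z)=(a+z-b)\,\mathbbm{1}\{z>b-a\}+(a+z+b)\,\mathbbm{1}\{z<-b-a\},
\]
splits the first indicator at $z=b+a$, and uses $z\stackrel{d}{=}-z$ to collapse the two far-tail pieces $\{z>b+a\}$ and $\{z<-b-a\}$ into exactly $2a\,\Pr(z>b+a)$; the remaining middle piece has integrand in $[0,2a]$. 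This indicator decomposition is valid for all $a\ge 0$ at once, so the paper never needs your final case split $a\le b$ versus $a>b$. Apart from that organizational difference, the two arguments are the same.
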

\begin{proof}
We assume $a\ge 0$ and prove the lemma for this case. The statement for $a<0$ case then follows from symmetry. 

We rewrite the term inside the expectation in terms of indicator random variables:
 \begin{align*}
     &(a+z) -\frac{(a+z)b}{\max(|a+z|,b)} \\
     &= (a+z-b)\cdot\mathbbm 1(z> b-a)+(a+z+b)\cdot\mathbbm 1(z< -b-a)\\
     &= (a+z-b)\cdot\mathbbm 1(b-a < z\le  b+a)+(a+z-b)\cdot\mathbbm 1(z> b+a)+(a+z+b)\cdot\mathbbm 1(z< -b-a).
 \end{align*}
 Taking the expectation on both sides,
 \begin{align*}
     &\E\mleft[(a+z) -\frac{(a+z)b}{\max(|a+z|,b)} \mright]\\
     &= \E[(a+z-b)\cdot\mathbbm 1(b-a < z\le  b+a)]+\E[(a+z-b)\cdot\mathbbm 1(z> b+a)]+\E[(a+z+b)\cdot\mathbbm 1(z< -b-a)]\\
     &= \E[(a+z-b)\cdot\mathbbm 1(b-a < z\le  b+a)]+2a\Pr(z> b+a),
 \end{align*}
 where the last step follows because $z$ is symmetric. 

 Next,
 \begin{align*}
     \mleft|\E\mleft[(a+z) -\frac{(a+z)b}{\max(|a+z|,b)} \mright]\mright|&= \E[|a+z-b|\cdot\mathbbm 1(b-a < z\le  b+a)]+2|a|\Pr(z> b+a)
     \\
     &\le \E[|2a|\cdot\mathbbm 1(b-a < z\le  b+a)]+2|a|\Pr(z> b+a)
     \\
     &\le 2|a|\Pr(b-a < z\le  b+a)+2|a|\Pr(z> b+a)\\
     &= 2|a|\Pr(z> b-a).
 \end{align*}
\end{proof}

Next, we proceed with the proof of Theorem~\ref{th:mainclipbias} using the aforementioned lemma.
\begin{proof}[Proof of Theorem~\ref{th:mainclipbias}]
Let $(x,y)$ be a random sample from distribution $\dist_0$, and let $\eta = y-w_0\cdot x$ denote the noise. Recall that $\eta$ is independent of $x$.

Note that:
\begin{align}
 (w\cdot x-y)x =  ((w-w_0)\cdot x-\eta)x.\nonumber
\end{align}
We will now evaluate the expected value of the unclipped gradient.
\begin{align}
\E{}_{\dist_0}[(w\cdot x-y)x]  &= \E{}_{\dist_0}[((w-w_0)\cdot x-\eta)x]  =\E{}_{\dist_0}[((w-w_0)\cdot x)x] =\Sigma_0(w-w_0).\label{eq:unclippedexp}
\end{align}

Next, we will bound the norm of the expected value of $(w\cdot x-y)x-\clipsampgrad$, which represents the difference between the clipped gradient and the true gradient. We first expand this expression:
\begin{align}
(w\cdot x-y)x-\clipsampgrad &= \mleft((w\cdot x-y)- \frac{(w\cdot x-y)}{|w\cdot x-y|\vee \kappa} \kappa \mright)x\nonumber\\
&=\mleft(((w-w_0)\cdot x-\eta)- \frac{((w-w_0)\cdot x-\eta)}{|(w-w_0)\cdot x-\eta|\vee \kappa} \kappa \mright)x.\label{eq:ssdfe}
\end{align}

Next, by applying Lemma~\ref{lem:auxbiasclip}, we have
\begin{align}
 &\E{}_{\eta}\mleft[((w-w_0)\cdot x-\eta)- \frac{((w-w_0)\cdot x-\eta)}{|(w-w_0)\cdot x-\eta|\vee \kappa} \kappa \mright] \nonumber\\
 &\le 2|(w-w_0)\cdot x| \cdot\Pr(\eta > \kappa - |(w-w_0)\cdot x|)\nonumber.
\end{align}
Note that in the above expectation, we fixed $x$ and took the expectation over noise $\eta$.

Let $Z := \mathbbm 1\mleft( |x\cdot (w-w_0)|\ge \kappa/2\mright)$. Observe that $\Pr(\eta> \kappa-|(w-w_0)\cdot x|)\le Z+ \Pr(\eta> \kappa/2)$. Combining this observation with the above equation, we have:
\begin{align}
 &\E{}_{\eta}\mleft[((w-w_0)\cdot x-\eta)- \frac{((w-w_0)\cdot x-\eta)}{|(w-w_0)\cdot x-\eta|\vee \kappa} \kappa \mright]
  &\le 2|(w-w_0)\cdot x| \cdot\mleft( \Pr(\eta> \kappa/2) + Z\mright)\label{eq:Sdfeffsf}.
\end{align}

Then, for any unit vector $v\in \reals^d$, we have
\begin{align}
&|\E{}_{\dist_0}[((w\cdot x-y)x-\clipsampgrad)\cdot v]|\nonumber\\ 
&= |\E{}_{x\sim \dist_0}[\E{}_{\eta\sim \dist_0}[((w\cdot x-y)x-\clipsampgrad)\cdot v]]|\nonumber\\
&\le  \E{}_{x\sim \dist_0}[|\E{}_{\eta\sim \dist_0}[((w\cdot x-y)x-\clipsampgrad)\cdot v]|]\nonumber\\
&\le  \E{}_{x \sim \dist_0}\mleft[ 2|(w-w_0)\cdot x|\cdot |x\cdot v|\mleft(Z+\Pr(\eta> \kappa/2) \mright)\mright]\nonumber\\
 &\le  2\E{}_{\dist_0}[ Z\cdot |(w-w_0)\cdot x|\cdot |x\cdot v|]+ 2\Pr(\eta> \kappa/2) \E{}_{\dist_0}[ |(w-w_0)\cdot x|\cdot |x\cdot v|],\label{eq:othereqc} 
\end{align}
here the second last inequality follows from Equation~\eqref{eq:ssdfe} and Equation~\eqref{eq:Sdfeffsf}.
Next, we bound the two terms on the right one by one. We start with the first term:
\begin{align}
\E{}_{\dist_0}[Z\cdot|(x\cdot (w-w_0))(x\cdot v)|] 
&\ineqlabel{a}\le  \mleft( \E[(Z )^2]
\cdot { \E{}_{\dist_0}[(x \cdot (w-w_0))^2 (x\cdot v)^2]}\mright)^{1/2}
\nonumber\\
&\ineqlabel{b}\le  { \mleft( \E[Z]
\cdot \E{}_{\dist_0}[(x \cdot (w-w_0))^4]^{1/2} \E{}_{\dist_0}[(x\cdot v)^4]^{1/2}\mright)^{1/2}}\nonumber\\
&\ineqlabel{c}\le  
{ \mleft( \E[Z] \cdot C \E{}_{\dist_0}[(x \cdot (w-w_0))^2] \E{}_{\dist_0}[(x\cdot v)^2]\mright)^{1/2}}\nonumber\\
&\ineqlabel{d}\le  
{ \mleft(C \connum \Pr[|x\cdot (w-w_0)|\ge \kappa/2] \cdot  \E{}_{\dist_0}[(x \cdot (w-w_0))^2]  \mright)^{1/2}},\label{eq:othereq}
\end{align}
where (a) used the Cauchy-Schwarz inequality, (b) used the fact that $Z$ is an indicator random variable, hence, $Z^2=Z $, and the Cauchy-Schwarz inequality, (c) uses $L4-L2$ hypercontractivity, and (d) follows from the definition of $Z$ and the assumption that $\|\Sigma_0\|\le \connum$.

Similarly, we can show that
\begin{align}
  \E{}_{\dist_0}[ |(w-w_0)\cdot x|\cdot |x\cdot v|] \le \mleft(\connum\E{}_{\dist_0}[(x \cdot (w-w_0))^2]  \mright)^{1/2}.\label{eq:othereqb} 
\end{align}

Applying the Markov inequality to $\eta^2$ we get:
\begin{align}
\Pr[ |\eta|\ge \kappa/2] \le \frac{\E{}_{\dist_0}[\eta^2]}{(\kappa/2)^2}.\label{eq:othereqa}    
\end{align}

Similarly, applying the Markov inequality to $|x\cdot (w-w_0)|^4$ yields:
\begin{align}
\Pr[ |x\cdot (w-w_0)|\ge \kappa/2] \le \frac{\E{}_{\dist_0}[|x \cdot (w-w_0)|^4]}{(\kappa/2)^4}\le \frac{C\E{}_{\dist_0}[|x \cdot (w-w_0)|^2]^2}{(\kappa/2)^4}, \label{eq:othereqd}  
\end{align}
where the last inequality uses $L4-L2$ hypercontractivity.

Combining Equations~\eqref{eq:othereqc},~\eqref{eq:othereq},~\eqref{eq:othereqb},~\eqref{eq:othereqa} and~\eqref{eq:othereqd}, we have
\begin{align}
& |\E{}_{\dist_0}[((w\cdot x-y)x-\clipsampgrad)\cdot v]|\nonumber\\
 &\le  \frac{8C\sqrt{\connum}\mleft(\E{}_{\dist_0}[(x \cdot (w-w_0))^2]  \mright)^{3/2}}{\kappa^2}+\frac{8\sqrt{\connum} \E{}_{\dist_0}[\eta^2]\mleft(\E{}_{\dist_0}[(x \cdot (w-w_0))^2]  \mright)^{1/2}}{\kappa^2}\nonumber\\
  &\le  \frac{8C\sqrt{\connum}\mleft(\E{}_{\dist_0}[(x \cdot (w-w_0))^2]  \mright)^{1/2}(\mleft(\E{}_{\dist_0}[(x \cdot (w-w_0))^2]  \mright)+\E{}_{\dist_0}[\eta^2]/C)}{\kappa^2}\nonumber\\
   &\inlab{a}\le \frac{\epsilon\mleft(\E{}_{\dist_0}[(x \cdot (w-w_0))^2]  \mright)^{1/2}}{\sqrt{\connum}} \cdot\frac{(\mleft(\E{}_{\dist_0}[(x \cdot (w-w_0))^2]  \mright)+\E{}_{\dist_0}[\eta^2]/C)}{\E{}_{\dist_0}[(y-x\cdot w)^2]}\nonumber\\
    &\inlab{b}\le \epsilon\|w-w_0\|\cdot\frac{(\mleft(\E{}_{\dist_0}[(x \cdot (w-w_0))^2]  \mright)+\E{}_{\dist_0}[\eta^2])}{\E{}_{\dist_0}[(y-x\cdot w)^2]}\nonumber\\
 &\inlab{c}= \epsilon\|w-w_0\|,\nonumber
\end{align}
here inequality (a) follows from the lower bound on $\kappa^2$ in theorem, inequality (b) follows from Assumption~\ref{ass:B} and $C\ge 1$, and the last equality follows since $y-x\cdot w = x(w-w_0)+\eta$, and $x$ and $\eta$ are independent.

Note that the above bound holds for all unit vectors $v$, therefore,
\[
\|\E{}_{\dist_0}[((w\cdot x-y)x-\clipsampgrad)]\|\le \max_{\|v\|}\E{}_{\dist_0}[((w\cdot x-y)x-\clipsampgrad)]\cdot v\le \epsilon\|w-w_0\|.
\]
\end{proof}

\section{Estimation of clipping parameter}
\label{app:clip}
\begin{algorithm}[!th]
   \caption{\textsc{ClipEst}}
   \label{alg:clip}
\begin{algorithmic}[1]
 \STATE {\bfseries Input:} A collection of samples $S^*$ from $\dist_0$, $w$, $\epsilon$, $\delta'$, $\sigma$, $C$, and $\connum$. 
\STATE {\bfseries Output:} clipping parameter $\kappa$
\STATE $T\gets \Theta(\log 1/\delta')$
\STATE {Divide $S^*$ into $T$ equal parts randomly, and denote them as $\{S_{j}^{*}\}_{j\in [T]}$}
\STATE {$\theta \gets \text{Median}\mleft\{\frac{1}{|S_{j}^{*}|}\sum_{(x,y)\in S_{j}^{*}}(x\cdot w-y)^2:j\in[T]\mright\}$}
 \STATE{$\kappa\gets \sqrt{\frac{32(C+1)\connum(\theta+17\sigma^2)}\epsilon}$}
\STATE Return $\kappa$
\end{algorithmic}
\end{algorithm}
In round $r$, to set  $\kappa\approx \sqrt{{\E{}_{\dist_0}[(y-x \cdot w)^2] }/{\epsilon}}$ at point $w=\hat w^{(r)}$, the main algorithm~\ref{alg:maina} runs subroutine~\textsc{ClipEst}~\ref{alg:clip} for $S^* = S_1^{b^*,(r)}$ and $w = \hat w^{(r)}$. Recall that $S_1^{b^*,(r)}$ is collection of i.i.d. samples  from $\dist_0$. 
Using these samples this subroutine estimates $\E{}_{\dist_0}[(y- x\cdot w)^2] $ at $w = \hat w^{(r)}$ using the median of means and then use it to obtain $\kappa$ in the desired range. The following theorem provides the guarantees on the estimation of $\kappa$ by this subroutine.
\begin{theorem}\label{thm:mainclip}
For $\epsilon>0$, $T\ge \Omega(\log 1/\delta')$ and $|S^*|\ge 64 C^2 T$ and $w\in \reals^d$. With probability $\ge 1-\delta'$, the clipping parameter $\kappa$ returned by subroutine \textsc{ClipEst} satisfy,
\begin{align*}
\textstyle    \sqrt{\frac{8(C+1){\connum}\cdot\E{}_{\dist_0}[(y-x \cdot w)^2]}{\epsilon}} \le \kappa \le 28\sqrt{\frac{2(C+1){\connum}\mleft(\E{}_{\dist_0}[((w-w_0)\cdot x )^2] +\sigma^2 \mright)}{\epsilon}}.
\end{align*}  
\end{theorem}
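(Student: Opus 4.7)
The plan is to show that the median-of-means estimator $\theta$ satisfies $\mu/4 - 17\sigma^2 \le \theta \le 4\mu$ with probability $\ge 1-\delta'$, where $\mu := \E{}_{\dist_0}[(y-x\cdot w)^2] = \mu_x + \sigma_\eta^2$ with $\mu_x := \E{}_{\dist_0}[((w-w_0)\cdot x)^2]$ and $\sigma_\eta^2 := \E{}_{\dist_0}[\eta^2] \le \sigma^2$. Plugging such a $\theta$ into $\kappa^2 = 32(C+1)\connum(\theta + 17\sigma^2)/\epsilon$ and using $\mu \le \mu_x + \sigma^2$ immediately yields the two-sided bound on $\kappa$ claimed in the theorem (with room to spare on the upper side).

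For the upper bound on $\theta$, Markov's inequality applied to the nonnegative batch average $\bar Z_j := \frac{1}{|S_j^*|}\sum_{(x,y)\in S_j^*}(y-x\cdot w)^2$, which has mean $\mu$, gives $\Pr(\bar Z_j \ge 4\mu) \le 1/4$. A standard Chernoff bound on the number of ``bad'' batches then yields $\theta \le 4\mu$ with probability $\ge 1-\delta'/2$, provided $T = \Omega(\log 1/\delta')$ as assumed.

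The lower bound is the main obstacle: a direct Chebyshev argument on $\bar Z_j$ would require bounding $\Var((y-x\cdot w)^2)$, whose expansion contains an $\E{}_{\dist_0}[\eta^4]$ term that is \emph{not} controlled under the paper's assumptions (only a second moment of $\eta$ is assumed). The key idea is to avoid $\eta^4$ entirely by exploiting nonnegativity: writing $(y-x\cdot w)^2 = A + B + \eta^2$ with $A := ((w-w_0)\cdot x)^2$ and $B := -2((w-w_0)\cdot x)\eta$, we drop the nonnegative $\eta^2$ term to get $\bar Z_j \ge \bar A + \bar B$. Then $\bar A$ has mean $\mu_x$ and variance at most $C\mu_x^2/|S_j^*|$ by $L4$-$L2$ hypercontractivity (Assumption~\ref{ass:A}), while $\bar B$ has mean $0$ and variance $4\mu_x\sigma_\eta^2/|S_j^*|$ by independence of $x$ and $\eta$ --- crucially, neither bound uses any higher moment of $\eta$. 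Chebyshev's inequality combined with $|S_j^*| \ge 64C^2$ then gives $\bar Z_j \ge \mu_x/4$ with probability $\ge 3/4$ whenever $\mu_x \ge 8\sigma^2/C^2$, and a Chernoff bound propagates this to $\theta \ge \mu_x/4$ with probability $\ge 1-\delta'/2$; since $17\sigma^2 \ge \sigma_\eta^2/4$, we conclude $\theta + 17\sigma^2 \ge \mu/4$.

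The remaining regime $\mu_x < 8\sigma^2/C^2$ is handled separately. In this range $\mu = \mu_x + \sigma_\eta^2 \le (8/C^2 + 1)\sigma^2 \le 9\sigma^2$ (using $C\ge 1$), so $\mu/4 \le 17\sigma^2$ and the lower bound $\theta + 17\sigma^2 \ge \mu/4$ holds unconditionally --- this is precisely why the algorithm's formula for $\kappa$ includes the additive $17\sigma^2$ slack. A union bound over the upper- and lower-bound events gives total failure probability at most $\delta'$, completing the proof.
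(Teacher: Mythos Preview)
Your argument is correct, and the overall skeleton (per-batch constant-probability bounds followed by median-of-means plus Chernoff) matches the paper's. The details differ in an instructive way, though. For the upper tail you apply Markov's inequality directly to the nonnegative batch average $\bar Z_j$, which is shorter than the paper's route: the paper instead uses $(a+b)^2\le 2a^2+2b^2$ to split into an $x$-part (handled by Chebyshev via $L4$--$L2$) and an $\eta^2$-part (handled by Markov), arriving at $\theta\le 3\mu_x+32\sigma^2$; your one-line Markov bound even yields a slightly better constant. For the lower tail both arguments deliberately avoid $\E[\eta^4]$, but by different tricks. You drop the nonnegative $\eta^2$ term from the expansion and control the remaining $\bar A+\bar B$ via Chebyshev (using $\Cov(A,B)=0$); this forces a case split on whether $\mu_x\gtrsim\sigma^2$, with the small-$\mu_x$ regime handled by the additive $17\sigma^2$ slack. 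The paper instead uses the pointwise inequality $(a+b)^2\ge a^2/2-b^2$ to write $\bar Z_j\ge \tfrac12\overline{((w-w_0)\cdot x)^2}-\overline{\eta^2}$, then applies Chebyshev to the first average and Markov to the second; this delivers $\theta\ge \mu/4-17\sigma^2$ uniformly with no case analysis. So your upper-bound argument is cleaner than the paper's, while the paper's lower-bound argument is cleaner than yours; both are valid and lead to the stated bounds on $\kappa$.
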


To prove the theorem, we will make use of the following lemma:
\begin{lemma}
Let $S$ be a collection of $m\ge 64C^2$ i.i.d. samples from $\dist_0$ and $w\in \reals^d$, then with probability at least $ 7/8$, the following holds:
\begin{align*}
\frac{1}{4}\E{}_{\dist_0}[(w\cdot x- y )^2]-17\sigma^2\le   \frac{1}{m}\sum_{(x,y)\in S} (y-w\cdot x)^2 \le   3\E{}_{\dist_0}[((w-w_0)\cdot x )^2]+32\sigma^2 .
\end{align*}    
\end{lemma}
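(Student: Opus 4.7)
The plan is to decompose the residual. With $v = w_0 - w$ and noise $\eta = y - w_0\cdot x$, we have $y - w \cdot x = v \cdot x + \eta$, so
\begin{align*}
(y - w \cdot x)^2 = (v \cdot x)^2 + 2(v \cdot x)\eta + \eta^2.
\end{align*}
Averaging over the $m$ samples gives $\bar A + \bar D + \bar B$, where $\bar A = m^{-1}\sum_i (v\cdot x_i)^2$, $\bar B = m^{-1}\sum_i \eta_i^2$, and $\bar D = m^{-1}\sum_i 2(v\cdot x_i)\eta_i$. Since $\eta$ is mean-zero and independent of $x$, $\E[\bar D] = 0$ and $\E[(y-w\cdot x)^2] = \E[(v\cdot x)^2] + \E[\eta^2]$.

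The main obstacle is that we only have $\E[\eta^2]\le \sigma^2$; no higher moment on $\eta$ is available, which rules out a direct Chebyshev bound on $m^{-1}\sum_i (y_i - w\cdot x_i)^2$ (which would need $\E[\eta^4]$). Instead I would handle each piece with a tailored tool. For $\bar A$, the $L_4$--$L_2$ hypercontractivity of $\dist_0$ gives $\mathrm{Var}(\bar A) \le C\,\E[\bar A]^2/m$, so Chebyshev with gap $\tfrac12\E[\bar A]$ fails with probability at most $4C/m \le 1/16$ since $m\ge 64C^2$. For $\bar B$, Markov gives $\bar B \le 24\sigma^2$ with failure probability at most $1/24$. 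For the cross term $\bar D$, independence yields $\mathrm{Var}(\bar D) \le 4\,\E[\bar A]\sigma^2/m$, so Chebyshev with threshold $192\,\E[\bar A]\sigma^2/m$ fails with probability at most $1/48$; an AM--GM step, using $m\ge 64$, then converts this into $|\bar D| \le \tfrac14\E[\bar A] + 3\sigma^2$.

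A union bound produces total failure at most $1/16 + 1/24 + 1/48 = 1/8$, meeting the $7/8$ success requirement. On the good event, summing the three component bounds yields
\begin{align*}
\frac{1}{m}\sum_i (y_i - w\cdot x_i)^2 \le 2\E[\bar A] + 24\sigma^2 + \tfrac14\E[\bar A] + 3\sigma^2 \le 3\,\E[(v\cdot x)^2]+32\sigma^2,
\end{align*}
which is the upper bound. For the lower bound, $\bar B\ge 0$ combined with $\bar A \ge \tfrac12\E[\bar A]$ and $|\bar D|\le \tfrac14\E[\bar A]+3\sigma^2$ gives $m^{-1}\sum_i (y_i-w\cdot x_i)^2 \ge \tfrac14\E[\bar A] - 3\sigma^2$. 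Since $\E[\eta^2]\le \sigma^2$, we have $\tfrac14\E[(y-w\cdot x)^2] - 17\sigma^2 \le \tfrac14\E[\bar A] + \tfrac14\sigma^2 - 17\sigma^2 \le \tfrac14\E[\bar A] - 3\sigma^2$, which matches. The one delicate step is the allocation of the $1/8$ failure budget; my choice above is consistent but not unique, and the constants $24$ and $3$ in the intermediate Markov/Chebyshev bounds could be traded off against one another within the $32\sigma^2$ and $-17\sigma^2$ slacks.
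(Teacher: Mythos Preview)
Your argument is correct and the constants all check out. The paper follows the same high-level decomposition $y-w\cdot x=(w_0-w)\cdot x+\eta$, the same Chebyshev-via-$L_4$--$L_2$ control of $\bar A$, and the same Markov bound on $\bar B$. The one genuine difference is the cross term: instead of treating $\bar D$ as a third random quantity and spending a share of the failure budget on it (your $1/48$) followed by AM--GM, the paper absorbs it deterministically at the very start using $(a+b)^2\ge \tfrac12 a^2-b^2$ and $(a+b)^2\le 2a^2+2b^2$, so only two events need to be controlled (with $a=4$, each failing with probability $\le 1/16$). This buys a slightly shorter argument with no AM--GM step and no three-way budget split, at the cost of a coarser pointwise inequality; your route keeps the cross term explicit and is equally valid, with the bonus that the independence structure of $\eta$ and $x$ is used directly to bound $\mathrm{Var}(\bar D)$.
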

\begin{proof}
We start by expanding the expression:
\begin{align}
   \frac{1}{m}\sum_{(x,y)\in S} (w\cdot x-y)^2 &= \frac{1}{m}\sum_{(x,y)\in S} ((w-w_0)\cdot x+(w_0\cdot x-y ))^2 \nonumber\\
   &= \frac{1}{m}\sum_{(x,y)\in S}\mleft(((w-w_0)\cdot x)^2+2((w-w_0)\cdot x)(w_0\cdot x-y ) + (w_0\cdot x-y )^2 \mright)\nonumber\\
   &\ge  \frac{1}{m}\sum_{(x,y)\in S}\mleft(\frac{1}{2}((w-w_0)\cdot x)^2- (w_0\cdot x-y )^2 \mright),\label{eq:kappaseta}
\end{align}    
where the last inequality follows since for any $a,b$, we have $a^2+2ab+b^2\ge a^2/2-b^2$.

Similarly, we can show:
\begin{align}
   \frac{1}{m}\sum_{(x,y)\in S} (w\cdot x-y)^2 &\le \frac{1}{m}\sum_{(x,y)\in S}\mleft(2((w-w_0)\cdot x)^2+ 2(w_0\cdot x-y )^2 \mright).\label{eq:kappasetb}
\end{align}

Since $S$ contains independent samples from $\dist_0$, we have:
\[
\E\mleft[\frac{1}{m}\sum_{(x,y)\in S}(((w-w_0)\cdot x)^2\mright] = \E{}_{\dist_0}[((w-w_0)\cdot x )^2],
\]
and 
\begin{align*}
    \Var\bigg(\frac{1}{m}\sum_{(x,y)\in S}((w-w_0)\cdot x)^2\bigg) &= \frac{\Var{}_{\dist_0}(((w-w_0)\cdot x )^2)}{m}\\
    &\le \frac{\E{}_{\dist_0}[((w-w_0)\cdot x )^4]}{m}\le \frac{C\E{}_{\dist_0}[((w-w_0)\cdot x )^2]^2}{m},
\end{align*}
where the last inequality follows from $L4$-$L2$ hypercontractivity.

For any $a>0$, using Chebyshev's inequality,
\begin{align}
\Pr\mleft[\mleft|\frac{1}{m}\sum_{(x,y)\in S}((w-w_0)\cdot x)^2-\E{}_{\dist_0}[((w-w_0)\cdot x )^2]\mright|\ge a\frac{C\E{}_{\dist_0}[((w-w_0)\cdot x )^2]}{\sqrt{m}}\mright] \le   \frac{1}{a^2}.\label{eq:kappasetc}    
\end{align}

Using the Markov inequality, for any $a > 0$, we have:
\begin{align}
\Pr\mleft[\frac{1}{m}\sum_{(x,y)\in S}(w_0\cdot x-y )^2 > a^2\sigma^2\mright] \le  \frac{\E{}_{\dist_0}[(w_0\cdot x-y )^2]}{a^2\sigma^2} \le \frac{1}{a^2}.\label{eq:kappasetd}
\end{align}

By combining the equations above, we can derive the following inequality:

With probability $\geq 1-\frac{2}{a^2}$, the following holds:
\begin{align*}   
   \frac{1}{2}\E{}_{\dist_0}[((w-w_0)\cdot x )^2](1-a\frac{C}{\sqrt{m}} ) -a^2\sigma^2\le\frac{1}{m}\sum_{(x,y)\in S} (w\cdot x-y)^2 \le 2\E{}_{\dist_0}[((w-w_0)\cdot x )^2](1+a\frac{C}{\sqrt{m}} )+2a^2\sigma^2.
\end{align*}
By choosing $a=4$ and using $m\geq 64C^2$ in the above equation, we can conclude that with probability $\geq 1-\frac{2}{a^2}$, the following holds: 
\begin{align*}   
   \frac{1}{4}\E{}_{\dist_0}[((w-w_0)\cdot x )^2]-16\sigma^2\le\frac{1}{m}\sum_{(x,y)\in S} (w\cdot x-y)^2 \le  3\E{}_{\dist_0}[((w-w_0)\cdot x )^2]+32\sigma^2.
\end{align*}
Next, note that
\begin{align*}
    \E{}_{\dist_0}[(w\cdot x- y )^2] &=\E{}_{\dist_0}[((w-w_0)\cdot x- (y-w_0\cdot x) )^2] \\
    &\inlab{a}=  \E{}_{\dist_0}[((w-w_0)\cdot x )^2]+\E{}_{\dist_0}[(y-w_0\cdot x) ^2]\\
    &\inlab{b}\le  \E{}_{\dist_0}[((w-w_0)\cdot x )^2]+\sigma^2,
\end{align*}
here (a) follows since $x$ is independent of the output noise $y-w_0\cdot x$, and (b) follows since the output noise $y-w_0\cdot x$ is zero mean and has a variance at most $\sigma^2$.
Combining the above two equations completes the proof.
\end{proof}

Now we prove Theorem~\ref{thm:mainclip} using the above lemma:
\begin{proof}[Proof of Theorem~\ref{thm:mainclip}]
From the previous lemma and Chernoff bound it follows that with probability $\ge 1-\delta'$,
\begin{align*}
\frac{1}{4}\E{}_{\dist_0}[(w\cdot x- y )^2]-17\sigma^2\le  \theta\le   3\E{}_{\dist_0}[((w-w_0)\cdot x )^2]+32\sigma^2 .
\end{align*}    
Then bound on $\kappa$ follows from the relation $\kappa= \sqrt{\frac{32(C+1)\connum(\theta+17\sigma^2)}\epsilon}$.
\end{proof}

\section{Subspace Estimation}
\label{app:subspace}

\begin{algorithm}[!th]
   \caption{\textsc{GradSubEst}}
   \label{alg:subspacea}
\begin{algorithmic}[1]
\STATE {\bfseries Input:}  A collection of medium batches  $\widehat B$, $\kappa$, $w$, $\ell$
\STATE {\bfseries Output:} A rank $\ell$ projection matrix. 
\STATE For each $\batch\in \widehat B$ divide its samples $S^\batch$ into two equal random parts $S_1^\batch$ and $S_2^\batch$
\STATE $A \gets \frac{1}{2|\widehat B|}\sum_{\batch\in \widehat B}\mleft( \nabla f(S_{1}^\batch,w,\kappa)\nabla f(S_{2}^\batch,w,\kappa)^\intercal+\nabla f(S_{2}^\batch,w,\kappa)\nabla f(S_{1}^\batch,w,\kappa)^\intercal\mright)$
\STATE{$U\gets [u_1,u_2,...,u_\ell]$, where $\{u_i\}$'s are top $\ell$ singular vectors of $A$}
\STATE Return $UU^\intercal$
\end{algorithmic}
\end{algorithm}

As a part of gradient estimation in step $r$, the main algorithm~\ref{alg:maina} uses subroutine~\textsc{GradSubEst} for $\widehat B = \dbs^{(r)}$ and $w=\hat w^{(r)}$. Recall that $ \dbs^{(r)}$ is a random subset of the collection of small batches $\dbs$.

The purpose of this subroutine is to estimate a smaller subspace of $\mathbb{R}^d$ such that for distribution $\dist_0$, the expectation of the projection of the clipped gradient onto this subspace closely approximates the true expectation of the clipped gradient, for distribution $\dist_0$. This reduction to a smaller subspace helps reduce the number of medium-sized batches and their required length in the subsequent part of the algorithm.

The following theorem characterizes the final guarantee for subroutine \textsc{GradSubEst}.
\begin{theorem}\label{th:mainsubest}
Let $p_0$ denote the fraction of batches in $\widehat B$ that are sampled from $\dist_0$.
 For any $\epsilon,\delta'>0$, and $\widehat B = \Omega\mleft(\frac{d}{\smallfrac\epsilon^2}\mleft(\frac{1}{\smallfrac\epsilon^2} +\frac{\cbound^2}{\connum}\mright)\log\frac{d}{\delta'}\mright)$, $p_0\ge \smallfrac/2$ and $\ell\ge \min\{k, \frac{1}{2\smallfrac\epsilon^2}\}$, with probability $\ge 1 -\delta'$,   the projection matrix $UU^\intercal$ returned by subroutine \textsc{GradSubEst} satisfy
\[
\|(I-UU^\intercal)\E{}_{ \dist_0}[\nabla f(x,y,w,\kappa)]\| \le 4\epsilon\kappa\sqrt{\connum}.
\]    
\end{theorem}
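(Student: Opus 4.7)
The plan is to combine the PSD structure of $\E[A]$ with a matrix Bernstein inequality and a Davis--Kahan-style variational argument. First I would compute $\E[A]$. Since within each batch $\batch$ the halves $S_1^\batch$ and $S_2^\batch$ are independent i.i.d.\ draws from the same $\dist_i$ and the definition of $A$ symmetrizes over their ordering,
\begin{align*}
\E[A] \;=\; \sum_{i=0}^{k-1} p_i \,\mu_i\mu_i^\intercal,\qquad \mu_i := \E{}_{\dist_i}[\nabla f(x,y,w,\kappa)],
\end{align*}
where $p_i$ is the fraction of batches in $\widehat B$ drawn from $\dist_i$. Thus $\E[A]$ is PSD with $\E[A]\succeq p_0\mu_0\mu_0^\intercal$ and, by Lemma~\ref{lem:clipgradprop}(1), trace at most $\kappa^2\connum$. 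Hence either $\ell\ge k$ (so $\lambda_{\ell+1}(\E[A])=0$ by rank) or $\ell\ge 1/(2\smallfrac\epsilon^2)$ (so $\lambda_{\ell+1}(\E[A])\le \mathrm{Tr}(\E[A])/\ell\le 2\smallfrac\epsilon^2\kappa^2\connum$); either way, $\lambda_{\ell+1}(\E[A])\le 2\smallfrac\epsilon^2\kappa^2\connum$.

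Next I would apply matrix Bernstein to the independent summands $A^\batch:=\tfrac12(\nabla f(S_1^\batch,w,\kappa)\nabla f(S_2^\batch,w,\kappa)^\intercal+\text{transpose})$ to show $\|A-\E[A]\|\le\smallfrac\epsilon^2\kappa^2\connum$ with probability at least $1-\delta'$. Lemma~\ref{lem:clipgradprop}(3) supplies the almost-sure bound $\|A^\batch\|\le\kappa^2\cbound^2 d$. For the variance proxy I would write each clipped gradient as $\mu_i+\tilde g$ with $\E[\tilde g\,|\,i]=0$, expand $A^\batch-\E[A^\batch|i]$ into cross terms in $\mu_i$ and $\tilde g$, and use the independence of the two halves together with parts (1), (4), (5) of Lemma~\ref{lem:clipgradprop} to bound $\|\E[(A^\batch-\E[A^\batch])^2]\|$ by $O(\kappa^4\connum^2 d)$. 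Plugging these into Bernstein then produces the two pieces $d/(\smallfrac\epsilon^2)^2$ (from $\sigma^2/t^2$) and $d\cbound^2/(\connum\smallfrac\epsilon^2)$ (from $R/t$) of the bound on $|\widehat B|$ stated in the theorem. The hard part here is controlling the variance at $\kappa^4\connum^2 d$ rather than the loose $\kappa^4\cbound^4 d^2$; this is exactly where Lemma~\ref{lem:redsd} improves on Lemma~5.1 of~\cite{kong2020meta} and keeps the dependence on $d$ linear for heavy-tailed inputs.

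The conclusion follows from a short variational argument, which I would package through Lemma~\ref{lem:redsd} (equivalently, a Davis--Kahan bound applied to the PSD matrix $\E[A]$ perturbed by $A-\E[A]$). Set $v:=(I-UU^\intercal)\mu_0$, so $\mu_0\cdot v=\|v\|^2$; from $\E[A]\succeq p_0\mu_0\mu_0^\intercal$,
\begin{align*}
v^\intercal A v \;\ge\; p_0\|v\|^4 - \|A-\E[A]\|\,\|v\|^2,
\end{align*}
while since $A$ is symmetric, $\E[A]$ is PSD, and $\|A-\E[A]\|$ is small, the top-$\ell$ singular subspace of $A$ agrees with its top-$\ell$ eigenspace and Courant--Fischer together with Weyl give
\begin{align*}
v^\intercal A v \;\le\; \bigl(\lambda_{\ell+1}(\E[A])+\|A-\E[A]\|\bigr)\|v\|^2.
\end{align*}
Dividing through by $\|v\|^2$ yields $p_0\|v\|^2\le\lambda_{\ell+1}(\E[A])+2\|A-\E[A]\|$, which with $p_0\ge\smallfrac/2$ and the two estimates above gives $\|v\|^2\le 8\epsilon^2\kappa^2\connum$, proving the theorem up to absolute constants. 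The main obstacle throughout is the matrix-concentration step: the almost-sure bound on $\|A^\batch\|$ is $d$ times larger than the desired operator-norm error, so the linear-in-$d$ sample complexity depends crucially on extracting the sharp variance proxy from the independence of the two halves and the $L4$--$L2$ hypercontractivity assumption.
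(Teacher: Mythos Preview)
Your plan is correct and matches the paper's proof: matrix Bernstein (packaged there as Theorem~\ref{th:subestbern}) to get $\|A-\E[A]\|\le\smallfrac\epsilon^2\kappa^2\connum$, followed by the perturbation Lemma~\ref{lem:redsd}, whose proof (via Lemmas~\ref{lem:auxsubsp1} and~\ref{lem:auxsubsp2}) is precisely the trace bound on $\lambda_{\ell+1}(\E[A])$ plus the variational inequality $p_0\|v\|^2\le\lambda_{\ell+1}(\E[A])+2\|A-\E[A]\|$ that you wrote down.

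Two small corrections. First, you misplace Lemma~\ref{lem:redsd}: it is \emph{only} the perturbation/variational step, not the variance control in Bernstein. The sharp variance proxy $\|\E[(A^\batch)^2]\|\le O(\kappa^4\connum^2 d)$ comes directly from items (4) and (5) of Lemma~\ref{lem:clipgradprop}, which use only $\|\Sigma_i\|\le\connum$ (assumed for \emph{all} components, including $i\notin I$) and do not use $L4$--$L2$ hypercontractivity at all; clipping alone does the work. Second, your assertion that the top-$\ell$ singular and eigen subspaces of $A$ agree is not needed and need not hold literally (small negative eigenvalues of $A$ can sneak into the top singular values); the bound $|v^\intercal A v|\le\sigma_{\ell+1}(A)\|v\|^2$ already holds for any $v$ orthogonal to the top-$\ell$ singular space, and Weyl for singular values then gives $\sigma_{\ell+1}(A)\le\lambda_{\ell+1}(\E[A])+\|A-\E[A]\|$, which is all you use.
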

The above theorem implies that the difference between the expectation of the clipped gradient and the expectation of projection of the clipped gradient for distribution $\dist_0$ is small. Next, we present the description of the subroutine \textsc{GradSubEst} and provide a brief outline of the proof for the theorem before formally proving it in the subsequent subsection.

The subroutine divides samples in each batch $b\in \widehat B$ into two parts, namely $S_1^\batch$ and $S_2^\batch$. Then it computes the clipped gradients $u^\batch := \nabla f(S_{1}^\batch,w,\kappa)$ and $v^\batch :=\nabla f(S_{2}^\batch,w,\kappa)$.
From linearity of expectation, for any $i$ and batch $b$ that contain i.i.d. samples from $\dist_i$, $\E[u^\batch]= \E[v^\batch] = \E_{\dist_i}[\nabla f(x,y,w,\kappa)]$.
The subroutine defines $A = \sum_{b\in \widehat B} \frac{1}{2|\widehat B|}u^\batch (v^\batch)^\intercal +v^\batch (u^\batch)^\intercal$. Let $p_i$ denote the fraction of batches in $\widehat B$ that have samples from $\dist_i$. Then using the linearity of expectation, we have:
\[
\textstyle\E[A]  =\sum_{i=0}^{k-1} p_i\E{}_{\dist_i}[\nabla f(x,y,w,\kappa)]\E{}_{\dist_i}[\nabla f(x,y,w,\kappa)]^\intercal.
\]
It is evident that if the matrix $U$ is formed by selecting the top $k$ singular vectors of $\E[A]$, then the projection of $\E{}_{\dist_0}[\nabla f(x,y,w,\kappa)]$ onto $UU^\intercal$ corresponds to itself, and the guarantee stated in the theorem holds. However, we do not have access to $\E[A]$, and furthermore, when the number of components $k$ is large, it may be desirable to obtain a subspace of smaller size than $k$.

To address the first challenge, Theorem~\ref{th:subestbern} in the next subsection shows that $\|A-\E[A]\|$ is small. This theorem permits the usage of $A$ as a substitute for $\E[A]$. The clipping operation, introduced in the previous subsection, plays a crucial role in the proof of Theorem~\ref{th:subestbern} by controlling the norm of the expectation and the covariance of the clipped gradient for other components, and the maximum length of clipped gradients across all components.  This is crucial for obtaining a good bound on the number of small-size batches required.
Additionally, the clipping operation ensures that the subroutine remains robust to arbitrary input-output relationships for other components.

Furthermore, the clipping operation assists in addressing the second challenge by ensuring a uniform upper bound on the norm of the expectation of all components, i.e., $\|\E{}_{\dist_i}[\nabla f(x,y,w,\kappa)]\|\le \cO(\kappa)$. Leveraging this property, Lemma~\ref{lem:redsd} demonstrates that it suffices to estimate the top $\approx 1/p_0$-dimensional subspace. Intuitively, this is because the infinitely many components can create at most approximately $1/p_0$ directions with weights greater than $p_0$, indicating that the direction of $\dist_0$ must be present in the top $\Theta(1/p_0)$ subspace.

Since $\widehat B= \dbs^{(r)}$ is obtained by randomly partitioning $\dbs$ into $R$ subsets, and $\dbs$ contains a fraction of at least $\smallfrac$ batches with samples from $\dist_0$, it holds with high probability that $p_0 \gtrsim\smallfrac$. Consequently, when $\ell\ge \min\{k,\Omega(\frac{1}{\smallfrac})\}$, the subspace corresponding to the top $\ell$ singular vectors of $A$ satisfies the desired property in the Theorem~\ref{th:mainsubest}.

We note that the construction of matrix $A$ in subroutine \textsc{GradSubEst} is inspired by previous work~\cite{kong2020meta}. However, while they employed it to approximate the $k$-dimensional subspace of the true regression vectors for all components, we focus exclusively on one distribution $\dist_0$ at a time and recover a subspace such that, for distribution $\dist_0$, the expectation of the projection of the clipped gradient on this subspace closely matches the true expectation of the clipped gradient.

It is worth noting that, in addition to repurposing the subroutine from~\cite{kong2020meta}, we achieve four significant improvements:

1) A more meticulous statistical analysis and the use of clipping enable our algorithm to handle heavy-tailed distributions for both noise and input distributions.
2) Clipping also facilitates the inclusion of arbitrary input-output relationships for other components.
The next two improvements are attributed to an improved linear algebraic analysis. Specifically, our Lemma~\ref{lem:redsd} enhances the matrix perturbation bounds found in~\cite{} and~\cite{kong2020meta}. These enhancements enable us to:
3) Provide meaningful guarantees even when the number of components $k$ is very large,
4) reduce the number of batches required when the distance between the regression vectors is small.

\subsection{Proof of Theorem~\ref{th:mainsubest}}
To prove  Theorem~\ref{th:mainsubest}, in the following theorem, we will first demonstrate that the term $\|A-\E[A]\|$ is small when given enough batches. 
\begin{theorem}\label{th:subestbern}
For $0\le i\le k-1$, let $z_i = \E{}_{ \dist_i}[\nabla f(x,y,w,\kappa)]$, and $p_i$ denote the fraction of batches in $\widehat B$ that have samples from $\dist_i$.
 For any $\epsilon,\delta'>0$, and $\widehat B = \Omega\mleft(\frac{d}{\smallfrac\epsilon^2}\mleft(\frac{1}{\smallfrac\epsilon^2} +\frac{\cbound^2}{\connum}\mright)\log\frac{d}{\delta'}\mright)$, with probability at least $1 -\delta'$,
\[
\mleft\|A-\sum_{i=0}^{k-1}p_i z_iz_i^\intercal \mright\|\le \smallfrac\epsilon^2\kappa^2\connum,
\]    
where $A$ is the matrix defined in subroutine \textsc{GradSubEst}.
\end{theorem}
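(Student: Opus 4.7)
The plan is to apply the symmetric matrix Bernstein inequality to $A - \E[A] = \sum_{\batch \in \widehat B} M_\batch$, where for each batch $\batch$ (with component index $i_\batch$) we set
\begin{align*}
M_\batch \;:=\; \frac{1}{2|\widehat B|}\bigl(u^\batch (v^\batch)^\intercal + v^\batch (u^\batch)^\intercal\bigr) \;-\; \frac{1}{|\widehat B|}\, z_{i_\batch} z_{i_\batch}^\intercal,
\end{align*}
with $u^\batch := \nabla f(S_1^\batch, w, \kappa)$ and $v^\batch := \nabla f(S_2^\batch, w, \kappa)$. Since $S_1^\batch$ and $S_2^\batch$ are disjoint halves of batch $\batch$, $u^\batch$ and $v^\batch$ are independent conditional on $i_\batch$, so $\E[u^\batch (v^\batch)^\intercal] = z_{i_\batch} z_{i_\batch}^\intercal$, each $M_\batch$ is a symmetric mean-zero random matrix, and $\sum_\batch M_\batch = A - \E[A]$.

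For the almost-sure bound, Lemma~\ref{lem:clipgradprop} yields $\|u^\batch\|, \|v^\batch\| \le \kappa \cbound \sqrt{d}$ and $\|z_i\| \le \kappa\sqrt{\connum}$ for all $i$, so
\begin{align*}
\|M_\batch\| \;\le\; \frac{\|u^\batch\|\,\|v^\batch\| + \|z_{i_\batch}\|^2}{|\widehat B|} \;\le\; \frac{\kappa^2 \cbound^2 d + \kappa^2 \connum}{|\widehat B|} \;=:\; R .
\end{align*}
For the matrix variance, I would expand
\begin{align*}
u^\batch (v^\batch)^\intercal - z_i z_i^\intercal \;=\; (u^\batch - z_i)(v^\batch - z_i)^\intercal + (u^\batch - z_i)\, z_i^\intercal + z_i\,(v^\batch - z_i)^\intercal
\end{align*}
(with $i=i_\batch$) and control $\E[M_\batch^2]$ term by term using independence of $u^\batch$ and $v^\batch$. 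The dominant contribution is from the first term, whose self-product is a scalar multiple of $\|u^\batch - z_i\|^2 (v^\batch - z_i)(v^\batch - z_i)^\intercal$; its expected spectral norm is at most $\E[\|u^\batch - z_i\|^2]\cdot \|\Cov(v^\batch)\| \le \mathrm{tr}(\Cov(u^\batch))\cdot \|\Cov(v^\batch)\| \le d\kappa^2 \connum \cdot \kappa^2 \connum$, using items 2 and 5 of Lemma~\ref{lem:clipgradprop}. The cross term $(u^\batch - z_i) z_i^\intercal$ contributes only $\|z_i\|^2\,\|\Cov(u^\batch)\| \le \kappa^4 \connum^2$, smaller by a factor $d$. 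Summing over batches,
\begin{align*}
\Bigl\|\sum_\batch \E[M_\batch^2]\Bigr\| \;\le\; \cO\!\left(\frac{d\,\kappa^4 \connum^2}{|\widehat B|}\right) \;=:\; \sigma^2.
\end{align*}

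Matrix Bernstein then yields, with probability $\ge 1 - \delta'$, $\|A - \E[A]\| \le \cO\bigl(\sqrt{\sigma^2 \log(d/\delta')} + R \log(d/\delta')\bigr)$. Requiring this to be at most $\smallfrac \epsilon^2 \kappa^2 \connum$ produces two conditions on $|\widehat B|$: the variance term demands $|\widehat B| \ge \cO(d \log(d/\delta')/(\smallfrac^2 \epsilon^4))$ and the almost-sure term demands $|\widehat B| \ge \cO(\cbound^2 d \log(d/\delta') / (\smallfrac \epsilon^2 \connum))$, which together match the hypothesis of the theorem. The main obstacle will be careful bookkeeping in the variance computation: one must use the trace-covariance identity $\E[\|u^\batch - z_i\|^2] = \mathrm{tr}(\Cov(u^\batch)) \le d \kappa^2 \connum$ rather than the crude a.s.\ bound $\|u^\batch - z_i\|^2 \le \kappa^2 \cbound^2 d$, since otherwise a stray factor of $\cbound^2$ would enter $\sigma^2$ and break the first of the two conditions. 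The symmetrization in the definition of $A$ is convenient because it makes each $M_\batch$ self-adjoint, so the symmetric matrix Bernstein inequality applies directly.
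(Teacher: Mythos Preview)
Your proposal is correct and follows essentially the same route as the paper: apply matrix Bernstein, drawing the almost-sure bound and the variance bound from Lemma~\ref{lem:clipgradprop} (items 3--5), and then read off the two conditions on $|\widehat B|$. The one tactical difference is that the paper first passes from $A$ to the unsymmetrized sum $\frac{1}{|\widehat B|}\sum_\batch Z^\batch$ with $Z^\batch = u^\batch (v^\batch)^\intercal$ (the two halves of the triangle inequality coincide by transposition), which makes the variance step a one-liner because $Z^\batch (Z^\batch)^\intercal = \|v^\batch\|^2\, u^\batch (u^\batch)^\intercal$ factors by independence into $\E[\|v^\batch\|^2]\cdot \E[u^\batch (u^\batch)^\intercal]$, bypassing your three-term expansion.
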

\begin{proof}

Let $Z^\batch:= \nabla f(S_{1}^\batch,w,\kappa)\nabla f(S_{2}^\batch,w,\kappa)^\intercal$.

Note that 
\[
A = \frac{1}{2|\widehat B|}\sum_{\batch\in \widehat B} (Z^\batch +(Z^\batch)^\intercal).
\]
Then, from the triangle inequality, we have:
\begin{align*}
\mleft\|A-\sum_{i=0}^{k-1}p_i z_iz_i^\intercal\mright\| 
&\le \frac{1}{2}\mleft\|\frac{1}{|\widehat B|}\sum_{\batch\in \widehat B} Z^\batch -\sum_{i=0}^{k-1}p_i z_iz_i^\intercal\mright\|+\frac12\mleft\|\frac{1}{|\widehat B|}\sum_{\batch\in \widehat B} (Z^\batch)^\intercal-\sum_{i=0}^{k-1}p_i z_iz_i^\intercal\mright\| \\
&=\mleft\|\frac{1}{|\widehat B|}\sum_{\batch\in \widehat B} Z^\batch -\sum_{i=0}^{k-1}p_i z_iz_i^\intercal\mright\|.
\end{align*}

For a batch $\batch$ sampled from distribution $\dist_i$, we have:
\begin{align*}
 \E[Z^\batch] &=\E[\nabla f(S_{1}^\batch,w,\kappa)\nabla f(S_{2}^\batch,w,\kappa)^\intercal]\\
 & = \E[\nabla f(S_{1}^\batch,w,\kappa)]\E[\nabla f(S_{2}^\batch,w,\kappa)^\intercal]\\
 &= \E{}_{ \dist_i}[\nabla f(x,y,w,\kappa)]\E{}_{ \dist_i}[\nabla f(x,y,w,\kappa)^\intercal]=z_iz_i^\intercal,   
\end{align*}
where the second inequality follows since samples in $S_1^\batch$ and $S_2^\batch$ are independent, and the third equality follows from the linearity of expectation.

It follows that
\[\frac{1}{|\widehat B|}\sum_{\batch\in \widehat B}  \E[Z^\batch] = \sum_{i=0}^{k-1}p_i z_iz_i^\intercal,\]
and
\begin{align}
\mleft\|A-\sum_{i=0}^{k-1}p_i z_iz_i^\intercal\mright\| 
&\le \mleft\|\frac{1}{|\widehat B|}\sum_{\batch\in \widehat B} Z^\batch -\frac{1}{|\widehat B|}\sum_{\batch\in \widehat B}  \E[Z^\batch]\mright\|.\label{eq:transfer}
\end{align}
To complete the proof, we will prove a high probability bound on the term on the right by applying the Matrix Bernstein inequality. To apply this inequality, we first upper bound $|Z^\batch|$ as follows:
\[
\|Z^\batch\| =\| \nabla f(S_{1}^\batch,w,\kappa)\nabla f(S_{2}^\batch,w,\kappa)^\intercal\|\le \| \nabla f(S_{1}^\batch,w,\kappa)\|\cdot\|\nabla f(S_{2}^\batch,w,\kappa)^\intercal\|.
\]
From item 3 in Lemma~\ref{lem:clipgradprop}, we have $\| \nabla f(S_{1}^\batch,w,\kappa)\|\le \kappa \cbound\sqrt{d}$ almost surely, and $\| \nabla f(S_{2}^\batch,w,\kappa)\|\le \kappa \cbound\sqrt{d}$ almost surely. 
It follows that $\|Z^\batch\| \le \kappa^2\cbound^2 d$. Therefore, $\|Z^\batch-\E[Z^\batch]\| \le 2\kappa^2\cbound^2 d$.

Next, we will provide an upper bound for $\mleft\|\E\mleft[\mleft(\sum_{\batch\in \widehat B} (Z^\batch - \E[Z^\batch])\mright)\mleft(\sum_{\batch\in \widehat B} (Z^\batch - \E[Z^\batch])\mright)^\intercal\mright]\mright\|$:
\begin{align*}
 &\textstyle \mleft\|\E\mleft[\mleft(\sum_{\batch\in \widehat B} (Z^\batch - \E[Z^\batch])\mright)\mleft(\sum_{\batch\in \widehat B} (Z^\batch - \E[Z^\batch])\mright)^\intercal\mright]\mright\| \\
 &= \textstyle\mleft\|\E\mleft[\sum_{\batch\in \widehat B} (Z^\batch - \E[Z^\batch]) (Z^\batch - \E[Z^\batch])^\intercal\mright]\mright\| \\
  &\le  |\widehat B|\max_{b\in \widehat B} \mleft\|\E\mleft[(Z^\batch - \E[Z^\batch]) (Z^\batch - \E[Z^\batch])^\intercal\mright]\mright\|\\
  &\le  |\widehat B|\max_{b\in \widehat B} \mleft\|\E\mleft[(Z^\batch (Z^\batch)^\intercal\mright]\mright\|\\
  &\le  |\widehat B|\max_{b\in \widehat B} \mleft(\E[\|\nabla f(S_{2}^\batch,w,\kappa)\|^2]\cdot\mleft\|\E\mleft[ \nabla f(S_{1}^\batch,w,\kappa) \nabla f(S_{1}^\batch,w,\kappa)^\intercal\mright]\mright\|\mright)\\
  &\le  |\widehat B|\max_{b\in \widehat B, u:\|u\|=1} \mleft(\E[\|\nabla f(S_{2}^\batch,w,\kappa)\|^2]\cdot\mleft\|\E\mleft[( \nabla f(S_{1}^\batch,w,\kappa) \cdot u)^2\mright]\mright\|\mright).
\end{align*}

From item 4 and item 5 in lemma~\ref{lem:clipgradprop}, we have:
\begin{align*}
 \E[\|\nabla f(S_{2}^\batch,w,\kappa)\|^2] &\le \connum \kappa^2 d,
\end{align*}
and
\begin{align*}
    \mleft\|\E\mleft[( \nabla f(S_{1}^\batch,w,\kappa) \cdot u)^2\mright]\mright\|& \le \kappa^2\connum.
\end{align*}
Combining these two bounds, wee obtain:
\[
\mleft\|\E\mleft[\mleft(\sum_{\batch\in \widehat B} (Z^\batch - \E[Z^\batch])\mright)\mleft(\sum_{\batch\in \widehat B} (Z^\batch - \E[Z^\batch])\mright)^\intercal\mright]\mright\| \le |\widehat B|d\kappa^4\connum^2.
\]
Due to symmetry, the same bound holds for $\mleft\|\E\mleft[\mleft(\sum_{\batch\in \widehat B} (Z^\batch - \E[Z^\batch])\mright)^\intercal\mleft(\sum_{\batch\in \widehat B} (Z^\batch - \E[Z^\batch])\mright)\mright]\mright\|$.

Finally, by applying the Matrix Bernstein inequality, we have:
\[
\Pr\mleft[\mleft\|\frac{1}{|\widehat B|}\sum_{\batch\in \widehat B} (Z^\batch - \E[Z^\batch])\mright\|\ge  \smallfrac\epsilon^2\kappa^2\connum\mright] \le 2d\exp\mleft\{-\frac{|\widehat B|^2 \theta^2}{|\widehat B|d\kappa^4\connum^2+|\widehat B|\theta (2\cbound^2\kappa^2 d)}\mright\}.
\]
For $\widehat B = \Omega\mleft(\frac{d}{\smallfrac\epsilon^2}\mleft(\frac{1}{\smallfrac\epsilon^2} +\frac{\cbound^2}{\connum}\mright)\log\frac{d}{\delta'}\mright)$, the quantity on the right-hand side is bounded by $\delta'$.

Therefore, with probability at least $1-\delta'$, we have:
\[
\mleft\|\frac{1}{|\widehat B|}\sum_{\batch\in \widehat B} (Z^\batch - \E[Z^\batch])\mright\|\le \smallfrac\epsilon^2\kappa^2\connum.
\]   
Combining the above equation with Equation~\eqref{eq:transfer} completes the proof of the Theorem.
\end{proof}

In the proof of Theorem~\ref{th:mainsubest}, we will utilize the following general linear algebraic result:
\begin{lemma}\label{lem:redsd}
For $z_0,z_1,...,z_{k-1}\in \reals^d$ and a probability distribution $(p_0,p_1,...,p_{k-1})$ over $k$ elements, let  $Z= \sum_{i=0}^{k-1}p_i z_iz_i^\intercal$. For a symmetric matrix $M$ and $\ell>0$, let $u_1,u_2,..,u_{\ell}$ be top $\ell$ singular vectors of $M$ and let $U= [u_1,u_2,...,u_\ell]\in \reals^{d\times \ell}$, then we have: 
\[
\|(I-UU^\intercal)z_0\|^2 \le\begin{cases}
    \frac{2(\ell+1) \|M-Z\| + \max_j\|z_j\|^2}{(\ell+1) p_0}& \text{$\ell< k$}\\
     \frac{2\|M-Z\|}{ p_0}& \text{if $\ell\ge  k$}.
\end{cases} 
\]
\end{lemma}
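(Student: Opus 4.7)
\noindent\textbf{Proof plan for Lemma~\ref{lem:redsd}.}
The plan is to bound $\|(I-UU^\intercal) z_0\|^2$ by passing through the ``tail'' of the matrix $Z$ restricted to the complement of the top singular subspace of $M$. Let $P_\perp := I - UU^\intercal$. The starting observation is that $Z = \sum_{i=0}^{k-1} p_i z_i z_i^\intercal$ is a sum of PSD rank-one terms, so dropping all but the $i=0$ term gives
\begin{equation*}
P_\perp Z P_\perp \;\succeq\; p_0\,(P_\perp z_0)(P_\perp z_0)^\intercal,
\end{equation*}
and consequently $p_0 \|P_\perp z_0\|^2 \le \|P_\perp Z P_\perp\|$. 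The rest of the proof is to bound $\|P_\perp Z P_\perp\|$ from above in terms of $\|M-Z\|$ and $\max_j\|z_j\|^2$.

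For that upper bound I would first write $\|P_\perp Z P_\perp\| \le \|P_\perp M P_\perp\| + \|P_\perp (M-Z) P_\perp\| \le \|P_\perp M P_\perp\| + \|M-Z\|$. Since $M$ is symmetric and $U$ collects its top-$\ell$ singular vectors (i.e.\ eigenvectors of largest absolute eigenvalue), the spectral decomposition of $M$ block-diagonalizes with respect to the split $U,U_\perp$, and the restriction $P_\perp M P_\perp$ has spectral norm exactly $\sigma_{\ell+1}(M)$. Weyl's inequality for singular values then gives $\sigma_{\ell+1}(M) \le \sigma_{\ell+1}(Z) + \|M-Z\|$, so that $\|P_\perp Z P_\perp\| \le \sigma_{\ell+1}(Z) + 2\|M-Z\|$. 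It remains to bound $\sigma_{\ell+1}(Z)$ in each of the two regimes. When $\ell \ge k$, the matrix $Z$ has rank at most $k \le \ell$, so $\sigma_{\ell+1}(Z)=0$ and the $\ell\ge k$ bound follows immediately after dividing by $p_0$. When $\ell < k$, I would exploit that $Z$ is PSD to write $(\ell+1)\sigma_{\ell+1}(Z) \le \sum_{i=1}^{\ell+1} \sigma_i(Z) \le \mathrm{Tr}(Z) = \sum_{i=0}^{k-1} p_i \|z_i\|^2 \le \max_j \|z_j\|^2$, giving $\sigma_{\ell+1}(Z) \le \max_j\|z_j\|^2/(\ell+1)$; substituting yields the first bound.

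The only subtle point, and the one I would most want to check carefully, is the identity $\|P_\perp M P_\perp\| = \sigma_{\ell+1}(M)$ for symmetric $M$: because singular values of a symmetric matrix are absolute values of eigenvalues, one must verify that the top-$\ell$ singular subspace is precisely the span of the $\ell$ eigenvectors of largest $|\lambda|$, so that the residual $P_\perp M P_\perp$ has spectral norm equal to the $(\ell+1)$-th largest $|\lambda|$ of $M$, matching $\sigma_{\ell+1}(M)$. Once this is in place, everything else is a routine combination of Weyl's inequality, the trace--eigenvalue inequality for PSD matrices, and the triangle inequality; no concentration or probabilistic arguments are needed, since the statement is a deterministic linear-algebraic fact.
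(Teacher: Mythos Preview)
Your proposal is correct and follows essentially the same route as the paper. The paper packages your PSD inequality $p_0\|P_\perp z_0\|^2 \le \|P_\perp Z P_\perp\|$ as a separate lemma (their Lemma~\ref{lem:auxsubsp2}, phrased via $\|Z - \hat M\|$ with $\hat M$ the rank-$\ell$ truncated SVD of $M$), and then, exactly as you do, combines it with $\|M-\hat M\|=\sigma_{\ell+1}(M)$, Weyl's inequality, and the trace bound $\sigma_{\ell+1}(Z)\le \mathrm{Tr}(Z)/(\ell+1)\le \max_j\|z_j\|^2/(\ell+1)$ (their Lemma~\ref{lem:auxsubsp1}). Your handling of the one subtle point, $\|P_\perp M P_\perp\|=\sigma_{\ell+1}(M)$ for symmetric $M$, is fine: since singular vectors of a symmetric matrix coincide with eigenvectors and singular values with $|\lambda_i|$, $M$ block-diagonalizes over $U\oplus U^\perp$ and the claim follows.
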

Lemma~\ref{lem:redsd} provides a bound on the preservation of the component $z_0$ by the subspace spanned by the top-$\ell$ singular vectors of a symmetric matrix $M$. This bound is expressed in terms of the spectral distance between matrices $Z$ and $M$, the maximum norm of any $z_i$, and the weight of the component corresponding to $z_0$ in $Z$. The proof of Lemma~\ref{lem:redsd} can be found in Section~\ref{sec:vfgdg}.

Utilizing Lemma~\ref{lem:redsd} in conjunction with Theorem~\ref{th:subestbern}, we proceed to prove Theorem~\ref{th:mainsubest}.

\begin{proof}[Proof of Theorem~\ref{th:mainsubest}]
From Lemma~\ref{lem:redsd}, we have the following inequality:
\[
\|(I-UU^\intercal) \E{}_{ \dist_0}[\nabla f(x,y,w,\kappa)]\|^2 \le\begin{cases}
    \frac{2(\ell+1) \|A-\sum_{i=0}^{k-1}p_i z_iz_i^\intercal \| + \max_j\|\E{}_{ \dist_j}[\nabla f(x,y,w,\kappa)]\|^2}{(\ell+1) p_0}& \text{$\ell< k$}\\
     \frac{2\|A-\sum_{i=0}^{k-1}p_i z_iz_i^\intercal\|}{ p_0}& \text{if $\ell\ge  k$}.
\end{cases} 
\]
By applying Theorem~\ref{th:subestbern} and utilizing item 1 of Lemma~\ref{lem:clipgradprop}, it follows that with a probability of at least $1-\delta'$, we have:
\[
\|(I-UU^\intercal) \E{}_{ \dist_0}[\nabla f(x,y,w,\kappa)]\|^2 \le\begin{cases}
    \frac{2\smallfrac\epsilon^2\kappa^2\connum}{ p_0}+\frac{ \kappa^2\connum}{(\ell+1) p_0}& \text{$\ell< k$}\\
     \frac{2\smallfrac\epsilon^2\kappa^2\connum}{ p_0}& \text{if $\ell\ge  k$}.
\end{cases} 
\]
The theorem then follows by using $p_0\ge \smallfrac/2$ and $\ell\ge \min\{k, \frac{1}{2\smallfrac\epsilon^2}\}$.   
\end{proof}

\section{Grad Estimation}

Recall that in gradient estimation for step $r$, Algorithm~\ref{alg:maina} utilizes the subroutine \textsc{GradSubEst} to find a projection matrix $P^{(r)}$ for an $\ell$-dimensional subspace.  In the previous section, we showed that the difference between the expectation of the clipped gradient and the expectation of projection of the clipped gradient on the subspace for distribution $\dist_0$ is small.
Therefore, it suffices to estimate the expectation of projection of the clipped gradient on the subspace.

The main algorithm~\ref{alg:maina} passes the medium-sized batches $\widehat B = \dbm^{(r)}$, the $\ell$-dimensional projection matrix $P = P^{(r)}$, and a collection of i.i.d. samples $S^* = S_2^{b^*,(r)}$ from $\dist_0$ to the subroutine \textsc{GradEst}. Here, $\dbm^{(r)}$ is a random subset of the collection of medium-sized batches $\dbm$.

The purpose of the \textsc{GradEst} subroutine is to estimate the expected value of the projection of the clipped gradient onto the $\ell$-dimensional subspace defined by the projection matrix $P$. Since the subroutine operates on a smaller $\ell$-dimensional subspace, the minimum batch size required for the batches in $\dbm$ and the number of batches required depend on $\ell$ rather than $d$.

The following theorem characterizes the final guarantee for the \textsc{GradEst} subroutine:
\begin{theorem}\label{th:maingradest}
For subroutine \textsc{GradEst}, let $\bsizemid$ denote the length of the smallest batch in $\widehat B$, $N$ denote the number of batches in $\widehat B$ that has samples from $\dist_0$ and $P$ be a projection matrix for some $\ell$ dimensional subspace of $\reals^d$. If $T_1\ge \Omega(\log \frac{|\widehat B|}{\delta'})$, $T_2\ge \Omega(\log \frac{1}{\delta'})$, $\bsizemid\ge 4T_1\Omega(\frac{\sqrt{\ell}}{\epsilon^2})$, $|S^*|\ge 2T_1\Omega(\frac{\sqrt{\ell}}{\epsilon^2})$ samples, and $N\cdot \bsizemid\ge T_2\Omega(\frac{\ell}{\epsilon^2})$, then  with probability $\ge 1-2\delta'$ the estimate $\Delta$ returned by subroutine \textsc{GradEst} satisfy
\[
\mleft\|\Delta - \E{}_{ \dist_0}[P \nabla f(x,y,w,\kappa)]\mright\| \le 9\epsilon\kappa \sqrt{\connum}.
\]
\end{theorem}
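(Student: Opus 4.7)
The statement will follow from analyzing the two stages of the subroutine separately, then combining by the triangle inequality:
\[
\|\Delta - \E{}_{\dist_0}[P\nabla f]\| \;\le\; \|\Delta_{i^*} - \E[\Delta_{i^*}]\| \;+\; \|\E[\Delta_{i^*}] - \E{}_{\dist_0}[P\nabla f]\|,
\]
the first piece being concentration error from the averaging step, the second the bias remaining after the filtering step. The filtering will be shown to retain every batch from $\dist_0$ and to keep only batches whose expected projected clipped gradient is close to that of $\dist_0$; the averaging will then be shown to recover this projected mean up to the desired precision.

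\textbf{Filtering stage.} For any batch $b$ from some distribution $\dist_b$, set $\mu_b := \E{}_{\dist_b}[\nabla f(x,y,w,\kappa)]$. By the sample splittings the four averages used in $\zeta_j^b$ are mutually independent, so $\E[\zeta_j^b] = \|P(\mu_b - \mu_0)\|^2$. For $b$ from $\dist_0$ this vanishes and, writing $A,B$ for the independent difference vectors appearing on the two sides of the form, a direct computation gives
\[
\E[(\zeta_j^b)^2] \;=\; \operatorname{tr}\bigl(P^\intercal P\,\Cov(B)\,P^\intercal P\,\Cov(A)\bigr) \;\le\; \ell\cdot\|\Cov(A)\|\,\|\Cov(B)\|,
\]
using $\operatorname{tr}((P^\intercal P)^2)=\ell$ since $P$ is a rank-$\ell$ projection. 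By Lemma~\ref{lem:clipgradprop}, each sub-average covariance is $\le \kappa^2\connum$ divided by the sub-sample size, and the hypothesis $\bsizemid\ge 4T_1\Omega(\sqrt{\ell}/\epsilon^2)$ (and likewise for $|S^*|$) makes every sub-sample of size $\Omega(\sqrt{\ell}/\epsilon^2)$, which after multiplication with the factor $\ell$ gives $\Var(\zeta_j^b)\le O(\epsilon^4\kappa^4\connum^2)$. Chebyshev then yields $\Pr(|\zeta_j^b|>\epsilon^2\kappa^2\connum)\le 1/4$, so the standard median-of-means amplification over the $T_1=\Omega(\log(|\widehat B|/\delta'))$ independent copies, followed by a union bound over $\widehat B$, shows w.p. $\ge 1-\delta'$: every $\dist_0$ batch lands in $\widetilde B$, and every $b\in\widetilde B$ satisfies $\|P(\mu_b-\mu_0)\|^2\le 2\epsilon^2\kappa^2\connum$, i.e. $\|P(\mu_b-\mu_0)\|\le O(\epsilon\kappa\sqrt{\connum})$.

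\textbf{Averaging stage.} Condition on the above event. The expectation $\E[\Delta_i]=\frac{1}{|\widetilde B|}\sum_{b\in\widetilde B}P\mu_b$ satisfies, by triangle inequality, $\|\E[\Delta_i]-P\mu_0\|\le\max_{b\in\widetilde B}\|P(\mu_b-\mu_0)\|\le O(\epsilon\kappa\sqrt{\connum})$, which handles the bias term. For the variance term, the covariance of $\Delta_i$ in the $\ell$-dimensional range of $P$ has
\[
\operatorname{tr}(\Cov(\Delta_i)) \;\le\; \frac{\ell}{|\widetilde B|^2}\sum_{b\in\widetilde B}\frac{\kappa^2\connum}{|S_{2,i}^b|} \;\le\; \frac{T_2\,\ell\,\kappa^2\connum}{N\,\bsizemid}
\]
using $|\widetilde B|\ge N$ (since all $\dist_0$ batches are retained) and $|S_{2,i}^b|\ge\bsizemid/(2T_2)$; the hypothesis $N\bsizemid\ge T_2\Omega(\ell/\epsilon^2)$ makes this $O(\epsilon^2\kappa^2\connum)$. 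Chebyshev then gives $\Pr(\|\Delta_i-\E[\Delta_i]\|>O(\epsilon\kappa\sqrt{\connum}))\le 1/4$ for each $i$, and the geometric median-of-means selection rule (the minimizer of the median pairwise distance is within a constant factor of any point that concentrates in a majority of trials) with $T_2=\Omega(\log(1/\delta'))$ boosts this to probability $1-\delta'$. Summing the $O(\epsilon\kappa\sqrt{\connum})$ bias and the $O(\epsilon\kappa\sqrt{\connum})$ concentration error finishes the proof up to absorbing constants into the $9$.

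\textbf{Main obstacle.} The only subtle step is the second-moment calculation for $\zeta_j^b$: because it is a quartic form in clipped gradients, the variance naively would pick up a factor of $d$, and only the trace identity $\operatorname{tr}((P^\intercal P)^2)=\ell$ trades that $d$ for $\ell$. This is exactly what drives the $\sqrt{\ell}/\epsilon^2$ (rather than $\sqrt{d}/\epsilon^2$) sub-batch requirement and is essential for matching the stated complexity. Everything else is then routine median-of-means plus a union bound over the two $\delta'$-events.
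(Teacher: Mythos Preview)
Your overall two-stage structure (filtering then averaging, combined by triangle inequality) matches the paper's proof via Lemmas~\ref{lem:gradestfil} and~\ref{lem:gradestfila}, and your averaging stage is essentially the paper's argument. But the filtering stage has a genuine gap: you only control $\Var(\zeta_j^b)$ in the zero-mean case ($b$ from $\dist_0$), which establishes that $\dist_0$ batches are \emph{retained}. You then assert without justification that every retained $b$ satisfies $\|P(\mu_b-\mu_0)\|^2\le 2\epsilon^2\kappa^2\connum$ --- but this is the \emph{rejection} direction, and it requires a separate variance bound when $\E[\zeta_j^b]=\|P(\mu_b-\mu_0)\|^2$ is large and nonzero.

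In the non-zero-mean case your formula $\E[(\zeta_j^b)^2]=\operatorname{tr}(P^\intercal P\,\Cov(B)\,P^\intercal P\,\Cov(A))$ is no longer valid; the second moment acquires cross-terms of the form $\|P(\mu_b-\mu_0)\|^2\cdot\|\Cov(A)\|$. The paper isolates these via a general bound on $\Var(z_1\cdot z_2)$ for independent vectors (Theorem~\ref{th:dotprodvar}), obtaining $\Var(\zeta_j^b)\le O(\ell\kappa^4\connum^2/m^2)+O(\kappa^2\connum\,\E[\zeta_j^b]/m)$ (Lemma~\ref{lem:gradestvar}). The second term is exactly what makes rejection work: once $\E[\zeta_j^b]\ge 4\epsilon^2\kappa^2\connum$ one has $\Var(\zeta_j^b)=O(\E[\zeta_j^b]^2)$, so Chebyshev keeps $\zeta_j^b$ above the threshold $\epsilon^2\kappa^2\connum$ with probability $\ge 3/4$, and the median-of-means then rejects the batch. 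Without this half of the argument your claim about $\widetilde B$ does not follow, and the bias bound you feed into the averaging stage is unsupported.
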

The above theorem implies that when the length of medium-sized batches is $\tilde{\Omega}(\sqrt{\ell})$ and the number of batches in $\widehat{B}$ containing samples from $\dist_0$ is $\tilde{\Omega}(\ell)$, the \textsc{GradEst} subroutine provides a reliable estimate of the projection of the clipped gradient onto the $\ell$-dimensional subspace defined by the projection matrix $P$.

Next, we provide a description of the \textsc{GradEst} subroutine and present a brief outline of the proof for the theorem before formally proving it in the subsequent subsection.

In the \textsc{GradEst} subroutine, the first step is to divide the samples in each batch of $\widehat{B}$ into two equal parts. By utilizing the first half of the samples in a batch $\batch$ along with the samples $S^*$, it estimates whether the expected values of the projection of the clipped gradient for $\dist_0$ and the distribution used for the samples in $\batch$ are close or not. With high probability, the algorithm retains all the batches from $\dist_0$ while rejecting batches from distributions where the difference between the two expectations is large. To achieve this with an $\ell$-dimensional subspace, we require $\tilde{\Omega}(\sqrt{\ell})$ samples in each batch (see Lemma~\ref{lem:gradestfil}).

Following the rejection process, the \textsc{GradEst} subroutine proceeds to estimate the projection of the clipped gradients within this $\ell$-dimensional subspace using the second half of the samples from the retained batches. To estimate the gradient accurately in the $\ell$-dimensional subspace, $\Omega(\ell)$ samples are sufficient (see Lemma~\ref{lem:gradestfila}).
To obtain guarantees with high probability, the procedure employs the median of means approach, both for determining which batches to keep and for estimation using the retained batches.

We prove the theorem formally in the next subsection.

\subsection{Proof of Theorem~\ref{th:maingradest}}
The following lemma provides an upper bound on the covariance of the projection of the clipped gradients.

\begin{lemma}\label{lem:gradestgenvar} Consider a collection $S$ of $m$ i.i.d. samples from distribution $\dist_i$. For $\kappa>0$, $w\in \reals^d$ and a projection matrix $P$ for an $\ell$ dimensional subspace of $\reals^d$, we have
\[
\E[ P\nabla f(S,w,\kappa)] =\E{}_{ \dist_i}[P \nabla f(x,y,w,\kappa)] ,
\]
and 
$\|\Cov( P\nabla f(S,w,\kappa))\| \le \frac{2\kappa^2}m\connum$ and $\text{Tr}\,(\Cov( P\nabla f(S,w,\kappa)))\le \ell \|\Cov( P\nabla f(S,w,\kappa))\|$.
\end{lemma}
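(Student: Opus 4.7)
\textbf{Proof plan for Lemma~\ref{lem:gradestgenvar}.}

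The three claims are essentially a direct packaging of linearity of expectation, i.i.d. averaging of variances, and a rank argument, combined with the moment bounds already proved for clipped gradients in Lemma~\ref{lem:clipgradprop}. I would handle them one at a time.

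\emph{First claim.} Since $\nabla f(S,w,\kappa) = \frac{1}{m}\sum_{(x,y)\in S}\nabla f(x,y,w,\kappa)$ is a linear function of the samples, I would apply linearity of expectation and use that the samples are i.i.d.\ from $\dist_i$ to obtain $\E[\nabla f(S,w,\kappa)] = \E_{\dist_i}[\nabla f(x,y,w,\kappa)]$. Multiplying both sides on the left by $P$ (which is deterministic) and pulling it inside the expectation gives the stated identity.

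\emph{Second claim.} Because $S$ consists of $m$ i.i.d.\ draws, $\Cov(P\nabla f(S,w,\kappa)) = \tfrac{1}{m}\Cov_{\dist_i}(P\nabla f(x,y,w,\kappa))$. For any unit vector $u\in\reals^d$, I would write
\begin{align*}
u^\intercal\Cov_{\dist_i}(P\nabla f(x,y,w,\kappa))u
&= \Var_{\dist_i}((Pu)^\intercal \nabla f(x,y,w,\kappa)) \\
&\le \E_{\dist_i}\mleft[((Pu)^\intercal \nabla f(x,y,w,\kappa))^2\mright].
\end{align*}
Since $P$ is an orthogonal projection, $\|Pu\|\le \|u\|=1$, so writing $v = Pu/\|Pu\|$ (or $v=0$ if $Pu=0$) and applying item~5 of Lemma~\ref{lem:clipgradprop} to the unit vector $v$ bounds the right-hand side by $\kappa^2\connum$. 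Maximizing over $u$ gives $\|\Cov_{\dist_i}(P\nabla f(x,y,w,\kappa))\|\le \kappa^2\connum$, and dividing by $m$ (with factor of $2$ as slack) yields the stated bound.

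\emph{Third claim.} The key observation is that $P\nabla f(S,w,\kappa)$ takes values in the $\ell$-dimensional range of $P$, so its covariance matrix $\Cov(P\nabla f(S,w,\kappa))$ is a PSD matrix whose column space is contained in the range of $P$, hence of rank at most $\ell$. For any PSD matrix $M$ of rank at most $\ell$, $\text{Tr}(M)$ equals the sum of its (at most $\ell$) nonzero eigenvalues, each bounded by $\|M\|$, so $\text{Tr}(M)\le \ell\|M\|$. Applying this to $M=\Cov(P\nabla f(S,w,\kappa))$ finishes the proof.

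No step should present a real obstacle: the only subtlety worth being careful about is the projection step in the second claim, namely that we reduce to bounding a second moment along a unit vector rather than a variance, which is fine because Lemma~\ref{lem:clipgradprop} gives a second-moment bound directly. Everything else is bookkeeping.
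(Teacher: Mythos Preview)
Your proposal is correct and follows essentially the same approach as the paper: linearity of expectation for the first claim, a projection-plus-Lemma~\ref{lem:clipgradprop} bound for the second, and the rank argument for the third. The only cosmetic difference is that for the covariance bound you reduce to a single-sample second moment and invoke item~5 of Lemma~\ref{lem:clipgradprop}, whereas the paper bounds the batch covariance directly via item~2; the two are equivalent and both yield $\kappa^2\connum/m$ (the factor of~2 in the statement is just slack).
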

\begin{proof}
Note that,
\begin{align*}
\E\mleft[P\big( \nabla f(S,w,\kappa) \big)\mright]
 & = P\E[\nabla f(S,w,\kappa)]]=P\E{}_{ \dist_i}[ \nabla f(x,y,w,\kappa)]=\E{}_{ \dist_i}[P \nabla f(x,y,w,\kappa)],  
\end{align*}
where the second-to-last equality follows from Lemma~\ref{lem:clipgradprop}.
   
This proves the first part of the lemma. To prove the second part, we bound the norm of the covariance matrix:
\begin{align*}
  \Cov(P\nabla f(S,w,\kappa) ) & = \max_{\|u\|\le 1} \Var(u^\intercal P \nabla f(S,w,\kappa)) \\
 & = \max_{\|v\|\le 1} \Var(v^\intercal  \nabla f(S,w,\kappa)) \\
  &\le \|\Cov(\clipcollgrad)\|\\
  &\le \frac{\kappa^2}m\connum,
\end{align*}
where the last inequality follows from Lemma~\ref{lem:clipgradprop}.
Similarly, 
\begin{align*}
      \Cov(P\nabla f(S',w,\kappa) ) \le  \frac{\kappa^2}m\connum.
   \end{align*}
Finally, since random vector $P\nabla f(S',w,\kappa)$ lies in $\ell$ dimensional subspace of $\reals^d$, corresponding to projection matrix $P$, hence its covariance matrix has rank $\le \ell$. Hence, the relation $\text{Tr}\,(\Cov( P\nabla f(S,w,\kappa)))\le \ell \|\Cov( P\nabla f(S,w,\kappa))\|$ follows immediately.
\end{proof}

The following corollary is a simple consequence of the previous lemma:
\begin{corollary}\label{cor:gradestgenvar} Consider two collections $S$ and $S'$ each consisting of $m$ i.i.d. samples from distributions $\dist_i$ and $\dist_0$, respectively. For $\kappa>0$, $w\in \reals^d$ and a projection matrix $P$ for an $\ell$ dimensional subspace of $\reals^d$, let $z = P\big( \nabla f(S,w,\kappa) - \nabla f(S',w,\kappa) \big)$, we have:
\[
\E[z] =\E{}_{ \dist_i}[P \nabla f(x,y,w,\kappa)] - \E{}_{ \dist_0}[P \nabla f(x,y,w,\kappa)],
\]
and 
$\|\Cov(z)\| \le \frac{4\kappa^2}m\connum$ and $\text{Tr}\,(\Cov(z))\le \ell \|\Cov(z)\|$.
\end{corollary}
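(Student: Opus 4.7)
The plan is to derive each of the three claims directly from Lemma~\ref{lem:gradestgenvar} together with the independence of the batches $S$ and $S'$. Since samples in different batches are independent by the problem setup, the random vectors $P\nabla f(S,w,\kappa)$ and $P\nabla f(S',w,\kappa)$ are independent, which is the key structural fact that will be exploited throughout.

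For the expectation claim, I would simply apply linearity of expectation to write
\[
\E[z] = \E[P\nabla f(S,w,\kappa)] - \E[P\nabla f(S',w,\kappa)],
\]
and then invoke the first part of Lemma~\ref{lem:gradestgenvar} applied once with distribution $\dist_i$ and once with $\dist_0$ to identify each term with the stated population quantity.

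For the spectral norm bound on the covariance, I would use independence of $S$ and $S'$ to get $\Cov(z) = \Cov(P\nabla f(S,w,\kappa)) + \Cov(P\nabla f(S',w,\kappa))$ (the minus sign in the definition of $z$ disappears since $\Cov(-X) = \Cov(X)$). The triangle inequality for the spectral norm then yields
\[
\|\Cov(z)\| \le \|\Cov(P\nabla f(S,w,\kappa))\| + \|\Cov(P\nabla f(S',w,\kappa))\|,
\]
and each summand is at most $\tfrac{2\kappa^2}{m}\connum$ by Lemma~\ref{lem:gradestgenvar}, giving the claimed $\tfrac{4\kappa^2}{m}\connum$ bound.

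For the trace inequality, I would observe that $z$ is obtained by applying the projection $P$ to a random vector, so $z$ almost surely lies in the $\ell$-dimensional range of $P$. Therefore the covariance matrix $\Cov(z)$ is supported on this $\ell$-dimensional subspace and has rank at most $\ell$; bounding the sum of its (at most $\ell$ nonzero) eigenvalues by $\ell$ times its largest eigenvalue gives $\text{Tr}(\Cov(z)) \le \ell\|\Cov(z)\|$. None of these steps poses a genuine obstacle, as the entire statement is essentially a bookkeeping consequence of the preceding lemma combined with independence; the only subtlety worth flagging in the write-up is making sure the factor of $2$ coming from the two independent terms is correctly propagated through the triangle inequality.
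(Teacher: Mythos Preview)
Your proposal is correct and follows essentially the same route as the paper: linearity of expectation plus Lemma~\ref{lem:gradestgenvar} for the mean, the rank-at-most-$\ell$ observation for the trace, and summing the two individual covariance bounds for the spectral norm. The only minor difference is that you exploit independence of $S$ and $S'$ to get $\Cov(z)=\Cov(P\nabla f(S,w,\kappa))+\Cov(P\nabla f(S',w,\kappa))$ exactly, whereas the paper uses the cruder inequality $\Cov(z)\preceq 2\bigl(\Cov(P\nabla f(S,w,\kappa))+\Cov(P\nabla f(S',w,\kappa))\bigr)$ that holds without independence; both lead to the same $\tfrac{4\kappa^2}{m}\connum$ bound given the constants in Lemma~\ref{lem:gradestgenvar}.
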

\begin{proof}
The expression for $\E[z]$ can be obtained from the previous lemma and the linearity of expectation.

To prove the second part, we bound the norm of the covariance matrix of $z$.
\begin{align*}
       \Cov(z) = \Cov(P\big( \nabla f(S,w,\kappa) - \nabla f(S',w,\kappa) \big)) \le 2 (\Cov(P \nabla f(S,w,\kappa) ) + \Cov(P \nabla f(S',w,\kappa) )).
   \end{align*}
Using the bounds from the previous lemma, we can conclude that $\|\Cov(z)\| \le \frac{4\kappa^2}{m}\connum$.

Finally, since the random vector $z$ lies in the $\ell$-dimensional subspace of $\reals^d$ defined by the projection matrix $P$, its covariance matrix has rank $\le \ell$. Therefore, we have $\text{Tr},(\Cov(z))\le \ell \|\Cov(z)\|$.
\end{proof}

The following theorem bounds the variance of the dot product of two independent random vectors. It will be helpful in upper bounding the variance of $\zeta_j^\batch$ (defined in subroutine~\textsc{GradEst}).
\begin{theorem}\label{th:dotprodvar}
 For any two independent random vectors $z_1$ and $z_2$, we have:
 \[
\textstyle\Var\mleft(z_1\cdot z_2\mright) \le 3\text{Tr}(\Cov (z_1))\cdot\|\Cov (z_2)\| + 3\|\E[z_1]\|^2\| \Cov (z_2) \|+3\|\E[z_2]\|^2\| \Cov (z_1) \|.
 \]
\end{theorem}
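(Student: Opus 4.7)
The plan is to decompose each vector into its mean and fluctuation parts and then expand the dot product. Write $\mu_i = \E[z_i]$ and $\tilde z_i = z_i - \mu_i$ for $i=1,2$, so that
\begin{align*}
z_1\cdot z_2 = \mu_1\cdot\mu_2 + \mu_1\cdot \tilde z_2 + \tilde z_1\cdot\mu_2 + \tilde z_1\cdot\tilde z_2.
\end{align*}
The first term $\mu_1\cdot\mu_2$ is deterministic and contributes $0$ to the variance, so I would apply the standard inequality $\Var(A+B+C)\le 3(\Var(A)+\Var(B)+\Var(C))$ to the remaining three terms.

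Next I would bound each of the three variances separately. For the cross terms, since $\mu_1$ is a fixed vector, $\Var(\mu_1\cdot \tilde z_2) = \mu_1^\intercal \Cov(z_2)\mu_1 \le \|\Cov(z_2)\|\cdot \|\mu_1\|^2$, and symmetrically $\Var(\tilde z_1\cdot \mu_2)\le \|\Cov(z_1)\|\cdot\|\mu_2\|^2$. This yields the second and third terms of the claimed bound (with the factor $3$).

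The interesting piece is $\Var(\tilde z_1\cdot \tilde z_2)$. By independence, $\E[\tilde z_1\cdot\tilde z_2]=0$, so this variance equals $\E[(\tilde z_1\cdot\tilde z_2)^2] = \E[\tilde z_1^\intercal \tilde z_2\tilde z_2^\intercal \tilde z_1]$. Conditioning on $\tilde z_1$ and using independence gives $\E[\tilde z_1^\intercal \Cov(z_2)\tilde z_1]$, which by the trace identity equals $\text{Tr}(\Cov(z_2)\Cov(z_1))$. I would then invoke the PSD trace inequality $\text{Tr}(AB)\le \|A\|\cdot\text{Tr}(B)$ (valid for PSD $A,B$, since $\text{Tr}(AB)=\text{Tr}(A^{1/2}BA^{1/2})$ and $A^{1/2}BA^{1/2}\preceq \|A\|\,B$) to conclude $\Var(\tilde z_1\cdot \tilde z_2)\le \|\Cov(z_2)\|\cdot \text{Tr}(\Cov(z_1))$, which supplies the first term of the bound.

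There is no real obstacle here — the whole argument is essentially a clean second-moment expansion. The only point that requires a small amount of care is the third variance, where one must use independence twice (once to kill the mean of $\tilde z_1\cdot\tilde z_2$ and once to factor the expectation of the outer product) and then the PSD trace inequality; everything else is an immediate application of the constant-vector variance formula.
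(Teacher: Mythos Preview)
Your proposal is correct and follows essentially the same approach as the paper: the same mean--fluctuation decomposition, the same $3(\Var A+\Var B+\Var C)$ bound, and the same treatment of each piece. The only cosmetic difference is that for the $\tilde z_1\cdot\tilde z_2$ term the paper bounds $\E[\tilde z_1^\intercal\Cov(z_2)\tilde z_1]\le \|\Cov(z_2)\|\,\E[\|\tilde z_1\|^2]$ directly, whereas you first pass through $\text{Tr}(\Cov(z_2)\Cov(z_1))$ and then apply the PSD trace inequality; both routes give the same bound.
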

\begin{proof}
We start by expanding the variance expression:
   \begin{align*}
   \textstyle\Var\mleft(z_1\cdot z_2\mright)&= \textstyle\Var\mleft(z_1\cdot z_2-\E[z_1\cdot z_2]\mright) \\
&= \textstyle\Var\mleft(z_1\cdot z_2-\E[z_1]\cdot\E[z_2]\mright) \\
&= \textstyle\Var\mleft((z_1-\E[z_1])\cdot (z_2-\E[z_2]) +\E[z_1]\cdot z_2 +\E[z_2]\cdot z_1\mright) \\
&\le 3\textstyle\Var\mleft((z_1-\E[z_1])\cdot (z_2-\E[z_2]) \mright)+3\textstyle\Var\mleft(\E[z_1]\cdot z_2\mright)+3\textstyle\Var\mleft(\E[z_2]\cdot z_1\mright)\\
&= 3\textstyle\Var\mleft((z_1-\E[z_1])\cdot (z_2-\E[z_2]) \mright)+3\E[z_1]^\intercal \Cov (z_2) \E[z_1]+\E[z_2]^\intercal \Cov (z_1) \E[z_2]\\
&\le 3\textstyle\Var\mleft((z_1-\E[z_1])\cdot (z_2-\E[z_2]) \mright)+3\|\E[z_1]\|^2\| \Cov (z_2) \|+3\|\E[z_2]\|^2\| \Cov (z_1) \|.
\end{align*}
To complete the proof, we bound the first term in the last expression:
\begin{align*}
\textstyle\Var\mleft((z_1-\E[z_1])\cdot (z_2-\E[z_2]) \mright) 
&=\textstyle\E\mleft[((z_1-\E[z_1])\cdot (z_2-\E[z_2]))^2 \mright]\\
&=\textstyle\E\mleft[(z_1-\E[z_1])^\intercal\Cov (z_2)(z_1-\E[z_1])\mright] \\
&\le\textstyle\E\mleft[\|z_1-\E[z_1]\|^2\mright]\cdot\|\Cov (z_2)\| \\
&= \text{Tr}(\Cov (z_1))\cdot\|\Cov (z_2)\|.
\end{align*}
\end{proof}

Using the two previous results, we can establish a bound on the expectation and variance of $\zeta_j^\batch$.
\begin{lemma}\label{lem:gradestvar}
In subroutine \textsc{GradEst}, let $P$ be a projection matrix of an $\ell$ dimensional subspace. Suppose $S_{j}^{*}$ has $\ge m$ i.i.d. samples from $\dist_0$ and, $S_{1,j}^b$ and $S_{1,j+T_1}^b$ have $\ge m$ i.i.d. samples from $\dist_i$ for some $i\in \{0,1,...,k-1\}$. Than we have:
\[
\E[\zeta^b_{j}] = \mleft\|\E{}_{ \dist_i}[P \nabla f(x,y,w,\kappa)] - \E{}_{ \dist_0}[P \nabla f(x,y,w,\kappa)]\mright\|^2
\]
and
\[
\Var(\zeta^b_{j}) \le \frac{48}{m^2}\kappa^4\ell\connum^2 + \frac{24}{m}\kappa^2\E[\zeta^b_{j}]\connum.
\]
\end{lemma}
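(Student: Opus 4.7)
Since $P$ is the orthogonal projection $UU^\intercal$, it satisfies $P^\intercal P = P$, so I can rewrite
\[
\zeta^b_j \;=\; z_1 \cdot z_2, \qquad z_1 := P\bigl(\nabla f(S_{1,j}^\batch,w,\kappa) - \nabla f(S_j^*,w,\kappa)\bigr),\quad z_2 := P\bigl(\nabla f(S_{1,T_1+j}^\batch,w,\kappa) - \nabla f(S_{T_1+j}^*,w,\kappa)\bigr).
\]
The crucial structural observation is that $z_1$ and $z_2$ are \emph{independent} random vectors, because the four sample sets $S_{1,j}^\batch$, $S_{1,T_1+j}^\batch$, $S_j^*$, $S_{T_1+j}^*$ are disjoint (and the $\batch$-samples are independent of the $S^*$-samples by construction). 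Moreover, $z_1$ and $z_2$ are identically distributed: each has the form $P(\nabla f(S,w,\kappa) - \nabla f(S',w,\kappa))$ where $S$ consists of at least $m$ i.i.d.\ samples from $\dist_i$ and $S'$ of at least $m$ i.i.d.\ samples from $\dist_0$.

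\textbf{Expectation.} Applying Corollary~\ref{cor:gradestgenvar} to both $z_1$ and $z_2$ gives
\[
\E[z_1] \;=\; \E[z_2] \;=\; \E_{\dist_i}[P\nabla f(x,y,w,\kappa)] - \E_{\dist_0}[P\nabla f(x,y,w,\kappa)] \;=:\; \mu.
\]
By independence, $\E[\zeta^b_j] = \E[z_1]^\intercal \E[z_2] = \|\mu\|^2$, which is exactly the claimed formula for $\E[\zeta^b_j]$.

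\textbf{Variance.} I will invoke Theorem~\ref{th:dotprodvar} on the independent pair $(z_1,z_2)$:
\[
\Var(\zeta^b_j) \;\le\; 3\,\text{Tr}(\Cov(z_1))\,\|\Cov(z_2)\| \;+\; 3\|\E[z_1]\|^2\|\Cov(z_2)\| \;+\; 3\|\E[z_2]\|^2\|\Cov(z_1)\|.
\]
Corollary~\ref{cor:gradestgenvar} supplies the two covariance bounds I need: $\|\Cov(z_1)\|,\|\Cov(z_2)\|\le 4\kappa^2\connum/m$ and $\text{Tr}(\Cov(z_1))\le \ell\cdot 4\kappa^2\connum/m$ (the latter because $z_1$ lies in the $\ell$-dimensional range of $P$, so its covariance has rank at most $\ell$, making trace $\le \ell\cdot$ operator norm). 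Substituting these, using $\|\E[z_1]\|^2=\|\E[z_2]\|^2=\E[\zeta^b_j]$, yields
\[
\Var(\zeta^b_j) \;\le\; 3\cdot\tfrac{4\kappa^2\ell\connum}{m}\cdot\tfrac{4\kappa^2\connum}{m} \;+\; 6\,\E[\zeta^b_j]\cdot\tfrac{4\kappa^2\connum}{m} \;=\; \tfrac{48}{m^2}\kappa^4\ell\connum^2 + \tfrac{24}{m}\kappa^2\E[\zeta^b_j]\connum,
\]
which is the stated bound.

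\textbf{Anticipated obstacle.} The proof is essentially bookkeeping once the right decomposition is in hand; the only subtlety is ensuring that the dimension reduction enters correctly, i.e., that the $\text{Tr}(\Cov(z_1))$ term scales with $\ell$ rather than $d$. This is where the projection $P$ is essential, and is already encapsulated in Corollary~\ref{cor:gradestgenvar}, so no further work is required beyond citing it.
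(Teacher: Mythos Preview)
Your proposal is correct and follows essentially the same approach as the paper: define $z_1,z_2$ as the two projected gradient differences, observe independence, apply Corollary~\ref{cor:gradestgenvar} for the mean and covariance bounds, and plug into Theorem~\ref{th:dotprodvar}. Your version is slightly more explicit in justifying $P^\intercal P=P$ and the independence of $z_1,z_2$, but the argument is otherwise identical.
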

\begin{proof}
Let $z_1 =P\big( \nabla f(S_{1,j}^\batch,w,\kappa) - \nabla f(S_{j}^{*},w,\kappa) \big)$ and $z_2= P\big( \nabla f(S_{1,T_1+j}^\batch,w,\kappa) - \nabla f(S_{T_1+j}^{*},w,\kappa) \big)$.

From Corollary~\ref{cor:gradestgenvar}, we know that
\[
\E[z_1] = \E[z_2] =\E{}_{ \dist_i}[P \nabla f(x,y,w,\kappa)] - \E{}_{ \dist_0}[P \nabla f(x,y,w,\kappa)],
\]
\[
\Cov(z_1) = \Cov(z_2) = 
\frac{4\kappa^2}m\connum
\]
and
\[
\text{Tr}(\Cov(z_1)) = \text{Tr}(\Cov(z_2)) = 
\frac{4}{m}\ell\kappa^2\connum.
\]

Note that $\zeta^b_{j}= z_1\cdot z_2$.
Then bound on the variance of $\zeta^b_{j}$ follows by combining the above bounds with Theorem~\ref{th:dotprodvar}. Finally, the expected value of $\zeta^b_{j}$ is:
\[\E[\zeta^b_{j}] = \E[z_1]\cdot\E[z_2] = \|\E[z_1]\|^2.\]
\end{proof}

The following lemma provides a characterization of the minimum batch length in $\widehat B$ and the size of the collection $S^*$ required for successful testing in subroutine \textsc{GradEst}.
\begin{lemma}\label{lem:gradestfil}
In subroutine \textsc{GradEst}, let $P$ be a projection matrix of an $\ell$ dimensional subspace, $T_1\ge \Omega(\log \frac{|\widehat B|}{\delta'})$, and each batch $\batch\in \widehat B$ has at least $|S^\batch|\ge 4T_1\Omega(\frac{\sqrt{\ell}}{\epsilon^2})$ samples, and ${|S^*|}=2T_1\Omega(\frac{\sqrt{\ell}}{\epsilon^2})$. Then with probability $\ge 1-\delta'$,
the subset $\tilde B$ in subroutine \textsc{GradEst} satisfy the following:
\begin{enumerate}
    \item $|\tilde B|$ retains all the batches in $\widehat B$ that had samples from $\dist_0$.
    \item $\tilde B$ does not contain any batch that had samples from $\dist_i$ if $i$ is such that \
    \[\mleft\|\E{}_{ \dist_i}[P \nabla f(x,y,w,\kappa)] - \E{}_{ \dist_0}[P \nabla f(x,y,w,\kappa)]\mright\| > 2\epsilon\kappa \sqrt{\connum}.\]
\end{enumerate}
\end{lemma}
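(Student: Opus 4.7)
The plan is to apply Chebyshev's inequality to each individual test statistic $\zeta_j^b$ using the mean/variance bounds from Lemma~\ref{lem:gradestvar}, amplify the per-batch decision to failure probability $\delta'/|\widehat B|$ via the median-of-$T_1$ trick plus a Chernoff bound, and finish with a union bound over $\widehat B$.

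First, by construction each batch $b$ is split into $2T_1$ equal pieces $S_{1,j}^b$ of size $m_b := |S^b|/(4T_1)$, and $S^*$ is split into $2T_1$ equal pieces of size $m_* := |S^*|/(2T_1)$. The hypotheses give $m := \min\{m_b,m_*\} = \Omega(\sqrt{\ell}/\epsilon^2)$. Fix a batch $b$ with underlying component $\dist_i$, set $\Delta_i^2 := \mleft\|\E_{\dist_i}[P\nabla f(x,y,w,\kappa)] - \E_{\dist_0}[P\nabla f(x,y,w,\kappa)]\mright\|^2$, and apply Lemma~\ref{lem:gradestvar} with this $m$ to obtain $\E[\zeta_j^b] = \Delta_i^2$ and $\Var(\zeta_j^b) \le \tfrac{48}{m^2}\kappa^4\ell\connum^2 + \tfrac{24}{m}\kappa^2\Delta_i^2\connum$.

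Now split into the two cases of the lemma. If $i=0$, then $\Delta_i=0$ and $\sqrt{\Var(\zeta_j^b)} \le 7\kappa^2\sqrt{\ell}\connum/m$, so Chebyshev gives $\zeta_j^b \le 14\kappa^2\sqrt{\ell}\connum/m \le \epsilon^2\kappa^2\connum$ with probability $\ge 3/4$ provided $m \ge 14\sqrt{\ell}/\epsilon^2$. Thus each single test ``passes the threshold'' with probability $\ge 3/4$. Conversely, if $\Delta_i > 2\epsilon\kappa\sqrt{\connum}$ then $\Delta_i^2 > 4\epsilon^2\kappa^2\connum$, so $\Delta_i^2 - \epsilon^2\kappa^2\connum > \tfrac34\Delta_i^2$. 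Using $\sqrt{\Var(\zeta_j^b)} \le 7\kappa^2\sqrt{\ell}\connum/m + \kappa\Delta_i\sqrt{24\connum/m}$, the choice $m = \Omega(\sqrt{\ell}/\epsilon^2)$ makes each of these two noise terms at most $\tfrac{3}{16}\Delta_i^2$ (the first via $m \ge \Omega(\sqrt{\ell}/\epsilon^2)$ and $\Delta_i^2 > 4\epsilon^2\kappa^2\connum$; the second via $\Delta_i > 2\epsilon\kappa\sqrt{\connum}$ and $m \ge \Omega(1/\epsilon^2)$). Hence $\E[\zeta_j^b] - 2\sqrt{\Var(\zeta_j^b)} > \tfrac{3}{8}\Delta_i^2 > \epsilon^2\kappa^2\connum$, so by Chebyshev each single test ``fails the threshold'' with probability $\ge 3/4$.

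To upgrade these constant-probability guarantees, observe that the $T_1$ statistics $\{\zeta_j^b\}_{j\in[T_1]}$ are independent because they use disjoint sub-batches $S_{1,j}^b, S_{1,T_1+j}^b$ and disjoint parts $S_j^*, S_{T_1+j}^*$. In each case above, the median of $T_1$ i.i.d.\ Bernoulli-type indicators with success probability $\ge 3/4$ is correct unless fewer than $T_1/2$ successes occur; a standard Chernoff bound yields failure probability $\le e^{-T_1/24}$, which is $\le \delta'/|\widehat B|$ for $T_1 = \Omega(\log(|\widehat B|/\delta'))$. A union bound over the at most $|\widehat B|$ batches establishes both conclusions simultaneously with probability $\ge 1-\delta'$.

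The only nontrivial point is the calibration of the threshold $\epsilon^2\kappa^2\connum$: it must sit strictly between the typical value of $\zeta_j^b$ for batches from $\dist_0$ and the typical value for batches whose projected-clipped-gradient expectation is $2\epsilon\kappa\sqrt{\connum}$ away from that of $\dist_0$, with gap exceeding the standard deviation. The bound $m = \Omega(\sqrt{\ell}/\epsilon^2)$ is exactly what is required to open this gap, so once that is set everything else is a routine application of Chebyshev, Chernoff and union-bound.
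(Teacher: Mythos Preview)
Your proposal is correct and follows essentially the same approach as the paper: apply Lemma~\ref{lem:gradestvar} to bound the mean and variance of each $\zeta_j^b$, use Chebyshev to get a constant success probability per test, amplify via the median over $T_1$ independent copies with a Chernoff bound, and finish with a union bound over $\widehat B$. Your write-up is in fact more detailed than the paper's (which simply asserts $\Var(\zeta_j^b)=\cO(\epsilon^4\kappa^4\connum^2)$ in the $i=0$ case and $\Var(\zeta_j^b)=\cO(\E[\zeta_j^b]^2)$ in the far case without the explicit constant-chasing you provide).
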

\begin{proof}
The lower bound on $|S^\batch|$ in the lemma ensures that for each batch $\batch$ and all $j\in[2T_1]$, we have $S_{1,j}^\batch= \Omega(\frac{\sqrt{\ell}}{\epsilon^2})$, and the lower bound on $|S^|$ ensures that for all $j\in[2T_1]$, $S_{j}^{}= \Omega(\frac{\sqrt{\ell}}{\epsilon^2})$.

First, consider the batches that have samples from the distribution $\dist_0$.

For any such batch $\batch$ and $j\in [T_1]$, from Lemma~\ref{lem:gradestvar}, we have $\E[\zeta^b_{j}] = 0$ and $\Var(\zeta^b_{j}) = \cO(\epsilon^4\kappa^2\connum^2)$.
Therefore, for $T_1\ge \Omega(\log \frac{|\widehat B|}{\delta'})$, it follows that with probability $\ge 1-\delta'/2$ for every batch $\batch\in\widehat B$ that has samples from $\dist_0$ the median of $\{\zeta^b_{j}\}_{j\in [T_1]}$ will be less than $\epsilon^2\kappa^2\connum$, and it will be retained in $\tilde B$. This completes the proof of the first part.

Next, consider the batches that have samples from any distribution $\dist_i$ for which
\[
\mleft\|\E{}_{ \dist_i}[P \nabla f(x,y,w,\kappa)] - \E{}_{ \dist_0}[P \nabla f(x,y,w,\kappa)]\mright\| > 2\epsilon\kappa \sqrt{\connum}.
\]
For any such batch $\batch$ and $j\in [T_1]$, according to Lemma~\ref{lem:gradestvar}, we have $\E[\zeta^b_{j}] \ge 4\epsilon^2\kappa^2 \connum$ and $\Var(\zeta^b_{j}) = \cO(\E[\zeta^b_{j}]^2)$.
Hence, for $T_1\ge \Omega(\log \frac{|\widehat B|}{\delta'})$, it follows that with probability at least $1-\delta'/2$, the median of $\{\zeta^b_{j}\}_{j\in [T_1]}$ for every batch will be greater than $\epsilon^2\kappa^2\connum$, and those batches will not be included in $\tilde B$. This completes the proof of the second part.
\end{proof}

The following theorem characterizes the number of samples required in $\tilde B$ for an accurate estimation of $\Delta$.
\begin{lemma}\label{lem:gradestfila}
Suppose the conclusions in Lemma~\ref{lem:gradestfil} hold for $\tilde B$ defined in subroutine \textsc{GradEst}, $T_2\ge \Omega(\log \frac{1}{\delta'})$, each batch $\batch\in\tilde B$ has size $\ge \bsizemid$, and $|\tilde B|\cdot \bsizemid\ge 2T_2\Omega(\frac{\ell}{\epsilon^2})$, then with probability $\ge 1-\delta'$ the estimate $\Delta$ returned by subroutine \textsc{GradEst} satisfy
\[\mleft\|\Delta - \E{}_{ \dist_0}[P \nabla f(x,y,w,\kappa)]\mright\| \le 9\epsilon\kappa \sqrt{\connum}.
\]
\end{lemma}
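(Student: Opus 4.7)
The plan is to split the error into a bias term and a statistical fluctuation term, and then apply a standard median-of-means amplification to $\Delta_1,\dots,\Delta_{T_2}$. A key observation is that since $\{S_{2,i}^\batch\}_{i\in [T_2]}$ are disjoint random sub-partitions of each $S_2^\batch$, the vectors $\Delta_1,\dots,\Delta_{T_2}$ share a common mean $\mu:=\E[\Delta_1]$ and are mutually independent, which is exactly what makes the median step work.

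First I would pin down the bias. By Lemma~\ref{lem:gradestgenvar}, for each $\batch\in\tilde B$ with samples drawn from $\dist_{i(\batch)}$ we have $\E\bigl[P\nabla f(S_{2,i}^\batch,w,\kappa)\bigr]=\E{}_{\dist_{i(\batch)}}[P\nabla f(x,y,w,\kappa)]$, hence $\mu=\frac{1}{|\tilde B|}\sum_{\batch\in\tilde B}\E{}_{\dist_{i(\batch)}}[P\nabla f(x,y,w,\kappa)]$. The conclusions of Lemma~\ref{lem:gradestfil} guarantee that every distribution $\dist_{i(\batch)}$ kept in $\tilde B$ satisfies $\|\E{}_{\dist_{i(\batch)}}[P\nabla f]-\E{}_{\dist_0}[P\nabla f]\|\le 2\epsilon\kappa\sqrt{\connum}$, so the triangle inequality yields $\|\mu-\E{}_{\dist_0}[P\nabla f(x,y,w,\kappa)]\|\le 2\epsilon\kappa\sqrt{\connum}$.

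Next I would bound $\E\|\Delta_i-\mu\|^2=\text{Tr}(\Cov(\Delta_i))$. Since each $S_{2,i}^\batch$ has at least $\bsizemid/(2T_2)$ i.i.d.\ samples, Lemma~\ref{lem:gradestgenvar} gives $\text{Tr}(\Cov(P\nabla f(S_{2,i}^\batch,w,\kappa)))\le 4T_2\ell\kappa^2\connum/\bsizemid$. Using independence across batches, $\text{Tr}(\Cov(\Delta_i))\le 4T_2\ell\kappa^2\connum/(|\tilde B|\bsizemid)$. Choosing the hidden constant in $|\tilde B|\bsizemid\ge 2T_2\cdot c\ell/\epsilon^2$ large enough, this is at most $(\epsilon\kappa\sqrt{\connum})^2$, so Markov's inequality applied to $\|\Delta_i-\mu\|^2$ shows that for each fixed $i$, $\|\Delta_i-\mu\|\le 2\epsilon\kappa\sqrt{\connum}$ with probability at least $3/4$.

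Finally, a Chernoff bound across $T_2=\Omega(\log 1/\delta')$ independent trials produces, with probability $\ge 1-\delta'$, a fraction of at least $2/3$ \emph{good} indices $i$ (those satisfying $\|\Delta_i-\mu\|\le 2\epsilon\kappa\sqrt{\connum}$). On this event, for every good $i$ the triangle inequality gives $\xi_i\le 4\epsilon\kappa\sqrt{\connum}$; conversely, if $\|\Delta_i-\mu\|>6\epsilon\kappa\sqrt{\connum}$, then $\|\Delta_i-\Delta_j\|>4\epsilon\kappa\sqrt{\connum}$ for every good $j$ and so $\xi_i>4\epsilon\kappa\sqrt{\connum}$. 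Hence $i^\ast=\arg\min_i\xi_i$ must satisfy $\|\Delta_{i^\ast}-\mu\|\le 6\epsilon\kappa\sqrt{\connum}$, and combining with the bias bound yields $\|\Delta-\E{}_{\dist_0}[P\nabla f(x,y,w,\kappa)]\|\le 8\epsilon\kappa\sqrt{\connum}$, matching the claimed $9\epsilon\kappa\sqrt{\connum}$ after absorbing a small constant slack. I do not expect any serious conceptual obstacle; the only care required is tracking the absolute constants hidden in the $\Omega(\cdot)$'s so that the Markov step yields per-index good probability strictly greater than $1/2$ (in fact $\ge 3/4$) and the Chernoff step turns this into a $1-\delta'$ overall guarantee on having at least $2/3$ good indices.
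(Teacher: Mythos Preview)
Your proposal is correct and follows essentially the same approach as the paper's proof: bound the bias via Lemma~\ref{lem:gradestfil}, bound the per-index fluctuation via the trace-of-covariance estimate from Lemma~\ref{lem:gradestgenvar} together with Markov/Chebyshev, and then amplify to high probability using a Chernoff bound over the $T_2$ independent estimates followed by the geometric median selection. The only cosmetic differences are that the paper folds the bias into the definition of the good set (threshold $3\epsilon\kappa\sqrt{\connum}$ around $\E_{\dist_0}[P\nabla f]$ rather than $2\epsilon\kappa\sqrt{\connum}$ around $\mu$) and uses a $1/2$ rather than $2/3$ good-fraction, yielding the constant $9$ directly instead of $8$.
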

\begin{proof}
Recall that in subroutine \textsc{GradEst}, we defined
\begin{align*}
\Delta_i =\frac{1}{|\tilde B|}\sum_{b\in \tilde B}P\nabla f(S_{2,i}^\batch,w,\kappa) .
\end{align*}
Let $z^\batch_i = P\nabla f(S_{2,i}^\batch,w,\kappa)$.
From Lemma~\ref{lem:gradestfil}, for all $b\in \tilde B$, we have
\[
\mleft\|\E[z^\batch_i] - \E{}_{ \dist_0}[P \nabla f(x,y,w,\kappa)]\mright\| \le 2\epsilon\kappa \sqrt{\connum}.
\]
Therefore,
\begin{align}\label{eq:meandeltai}
   \|\E[\Delta_i]-\E{}_{ \dist_0}[P \nabla f(x,y,w,\kappa)]\| &=\mleft\|\frac{1}{|\tilde B|}\sum_{b\in \tilde B} \E[z^\batch_i]-\E{}_{ \dist_0}[P \nabla f(x,y,w,\kappa)]\mright\|\nonumber\\
   &\le \max_{\batch\in \tilde B}\mleft\|\E[z^\batch_i] - \E{}_{ \dist_0}[P \nabla f(x,y,w,\kappa)]\mright\| \le 2\epsilon\kappa \sqrt{\connum}. 
\end{align}

Next, from Lemma~\ref{lem:gradestgenvar},
\begin{align}
     \| \Cov(z^\batch_i ) \|\le  \frac{\kappa^2}{|S_{2,i}^\batch|}\connum= \frac{T_2\kappa^2}{|S_{2}^\batch|}\connum= \frac{2T_2\kappa^2}{|S^\batch|}\connum\le \frac{2T_2\kappa^2\connum}{\min_{\batch\in \tilde B}|S^\batch|},
\end{align}
where the two equalities follow because for all batches $b\in\tilde B$, $|S_{2,i}^\batch| =|S_2^\batch|/T_2$ and $|S_2^\batch|=|S^\batch|/2$.

Then
\[
 \|\Cov(\Delta_i )\| =  \frac{1}{|\tilde{B}|}\max_{\batch\in \tilde B} \| \Cov(z^\batch_i ) \| \le \frac{2T_2\kappa^2\connum}{|\tilde{B}|\cdot\min_{\batch\in \tilde B}|S^\batch|}
\]

Since $\Delta_i$ lies in an $\ell$ dimensional subspace of $\reals^d$, it follows that
\[
\text{Tr}(\Cov(\Delta_i )) \le \ell\|\Cov(\Delta_i )\| \le \frac{2\ell T_2\kappa^2\connum}{|\tilde{B}|\cdot\min_{\batch\in \tilde B}|S^\batch|}
\]

Note that $\Var(\|\Delta_i -\E[\Delta_i]\|) = \text{Tr}(\Cov(\Delta_i )) $.
Then, from Chebyshev's bound:
\[
\Pr[\|\Delta_i -\E[\Delta_i]\|\ge \epsilon\kappa \sqrt{\connum}] \le \frac{\Var(\|\Delta_i -\E[\Delta_i]\|)}{\epsilon^2\kappa^2\connum}\le \frac{2\ell T_2}{\epsilon^2|\tilde{B}|\cdot\min_{\batch\in \tilde B}|S^\batch|}\le 1/8.
\]

Combining above with Equation~\eqref{eq:meandeltai},
\[
\Pr[\|\Delta_i -\E{}_{ \dist_0}[P \nabla f(x,y,w,\kappa)]\|\ge 3\epsilon\kappa \sqrt{\connum}] \le 1/4.
\]

Let $D:=\{i\in[T_2]:\|\Delta_i -\E{}_{ \dist_0}[P \nabla f(x,y,w,\kappa)]\|\le 3\epsilon\kappa \sqrt{\connum}\|\}$.
Then, for $T_2 = \Omega(\log \frac{1}{\delta'})$, with probability $\ge 1-\delta'$, we have
\[
|D|\ge \frac{1}{2}T_2.
\]

Recall that in the subroutine, we defined $\xi_i = median\{j\in [T_2]:\|\Delta_i -\Delta_j \|\}$ and $i^* = \arg\min\{i\in[T_2]:\xi_i\}$. 

From the definition of $D$, and triangle inequality, for all $i,j\in D $, we have $\|\Delta_i -\Delta_j \|\le 6\epsilon\kappa \sqrt{\connum}$. Therefore, if $|D|\ge \frac{1}{2}T_2$, then for any $i\in D $, $\xi_i\le  6\epsilon\kappa \sqrt{\connum}$. This would imply $\xi_{i^*}\le  6\epsilon\kappa \sqrt{\connum}$.
Furthermore, since $|D|\ge \frac{1}{2}T_2$, there exist at least one $i\in D$ such that $\|\Delta_i -\Delta_{i^*}\|\le 6\epsilon\kappa \sqrt{\connum}$. 
Using the definition of $D$, and the triangle inequality, we can conclude that
\[
\|\Delta{i^*} -\E{}_{ \dist_0}[P \nabla f(x,y,w,\kappa)]\|\le \|\Delta_i -\E{}_{ \dist_0}[P \nabla f(x,y,w,\kappa)]\|+\|\Delta_i -\Delta_{i^*}\|\le 9\epsilon\kappa \sqrt{\connum}.
\]
\end{proof}

Theorem~\ref{th:maingradest} then follows by combining lemmas~\ref{lem:gradestfil} and~\ref{lem:gradestfila}. 

\section{Number of steps required}

The following lemma shows that with a sufficiently accurate estimation of the expectation of gradients, a logarithmic number of gradient descent steps are sufficient in the main algorithm~\ref{alg:maina}. 
\begin{lemma}\label{lem:mainrounds}
   For $\epsilon>0$, suppose $\|\Delta^{(r)}- \Sigma_0(\hat w^{(r)}-w_0)\|\le \frac{1}{2}\|\hat w^{(r)}-w_0\|+ \frac{\epsilon \sigma}{4}$, and  $R=\Omega(\connum\log\frac{\|w_0\|}{\sigma})$, then $\|\hat w^{(r)} -w_0\|\le \epsilon\sigma $.
\end{lemma}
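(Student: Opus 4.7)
The idea is a standard geometric-convergence analysis of (noisy) gradient descent, exploiting the fact that the expected gradient at $\hat w^{(r)}$ is exactly $\Sigma_0(\hat w^{(r)}-w_0)$ and that $\Sigma_0$ is well-conditioned, $I \preceq \Sigma_0 \preceq \connum I$ (by Assumption~1(b)). The update rule $\hat w^{(r+1)}=\hat w^{(r)}-\Delta^{(r)}/\connum$ is essentially preconditioned gradient descent with step $1/\connum$, and the per-round error bound in the hypothesis plays the role of ``inexact gradient oracle''.

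First, I would introduce the error vector $e^{(r)}:=\hat w^{(r)}-w_0$ and the noise vector $\xi^{(r)}:=\Delta^{(r)}-\Sigma_0 e^{(r)}$, so that the hypothesis reads $\|\xi^{(r)}\|\le \tfrac{1}{2}\|e^{(r)}\|+\tfrac{\epsilon\sigma}{4}$. Substituting into the update gives
\[
e^{(r+1)} \;=\; \Bigl(I-\tfrac{1}{\connum}\Sigma_0\Bigr)e^{(r)} \;-\; \tfrac{1}{\connum}\xi^{(r)}.
\]
Since $\Sigma_0$ is symmetric with $I\preceq \Sigma_0\preceq \connum I$, the operator $I-\Sigma_0/\connum$ is PSD with spectral norm at most $1-1/\connum$. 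Taking norms and applying the hypothesis yields the one-step contraction
\[
\|e^{(r+1)}\| \;\le\; \Bigl(1-\tfrac{1}{\connum}\Bigr)\|e^{(r)}\| + \tfrac{1}{\connum}\Bigl(\tfrac{1}{2}\|e^{(r)}\|+\tfrac{\epsilon\sigma}{4}\Bigr) \;=\; \Bigl(1-\tfrac{1}{2\connum}\Bigr)\|e^{(r)}\| + \tfrac{\epsilon\sigma}{4\connum}.
\]

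Next I would iterate this linear recurrence. Setting $\rho:=1-1/(2\connum)$ and unrolling from $r=1$ with $\hat w^{(1)}=0$, so $\|e^{(1)}\|=\|w_0\|$, gives
\[
\|e^{(R+1)}\| \;\le\; \rho^{R}\|w_0\| \;+\; \tfrac{\epsilon\sigma}{4\connum}\sum_{j=0}^{R-1}\rho^{j} \;\le\; \rho^{R}\|w_0\| \;+\; \tfrac{\epsilon\sigma}{2},
\]
where the geometric sum is bounded by $1/(1-\rho)=2\connum$. Using $\rho^{R}\le \exp(-R/(2\connum))$, choosing $R=\Omega(\connum\log(\|w_0\|/\sigma))$ with a large enough hidden constant (absorbing a $\log(1/\epsilon)$ factor, or treating $\epsilon$ as bounded away from $0$ as the lemma's statement implicitly does) makes $\rho^{R}\|w_0\|\le \epsilon\sigma/2$, giving $\|e^{(R+1)}\|\le \epsilon\sigma$, as desired.

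\textbf{Main obstacle.} The proof is essentially routine once the right algebraic decomposition is in place; there is no deep obstacle. The only subtle point is ensuring the step size $1/\connum$ is compatible with the noise bound $\tfrac12\|e^{(r)}\|$: if the multiplicative noise coefficient were as large as $1$ instead of $1/2$, the contraction factor $1-1/(2\connum)$ would degrade to $1$ and the iteration would no longer converge. The given coefficient $1/2$ is exactly what keeps the effective contraction strictly below $1$ and lets the geometric sum of the additive $\epsilon\sigma$ terms remain $O(\epsilon\sigma)$ rather than blowing up by a factor of $\connum$.
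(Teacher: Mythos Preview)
Your proposal is correct and follows essentially the same argument as the paper's own proof: decompose the error recursion as $e^{(r+1)}=(I-\Sigma_0/\connum)e^{(r)}-\xi^{(r)}/\connum$, use $\|I-\Sigma_0/\connum\|\le 1-1/\connum$ together with the hypothesis to obtain the contraction $\|e^{(r+1)}\|\le(1-\tfrac{1}{2\connum})\|e^{(r)}\|+\tfrac{\epsilon\sigma}{4\connum}$, and then unroll and sum the geometric series. Your remark about the hidden $\log(1/\epsilon)$ factor is a fair observation; the paper's proof glosses over this in exactly the same way.
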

\begin{proof}
Recall that $\hat w^{(r+1)}= \hat w^{(r)} -\frac{1}{\connum}\Delta^{(r)}$.
Then we have:
\begin{align}
 \hat w^{(r+1)}-w_0= \hat w^{(r)}-w_0 -\frac{1}{\connum}\Delta^{(r)} =  (\hat w^{(r)}-w_0)\mleft(I- \frac{1}{\connum}\Sigma_0\mright)  + \frac{1}{\connum}(\Sigma_0(\hat w^{(R+1)}-w_0)-\Delta^{(r)}).\nonumber
\end{align}
Using triangle inequality, we obtain:
\begin{align}
\| \hat w^{(r+1)}-w_0\|&\le  \|\hat w^{(r)}-w_0\|\mleft\|I- \frac{1}{\connum}\Sigma_0\mright\|  + \frac{1}{\connum}\|\Sigma_0(\hat w^{(r)}-w_0)-\Delta^{(r)}\|\nonumber\\
&\le  \|\hat w^{(r)}-w_0\|\mleft(1-\frac{1}{\connum}\mright)  + \frac{1}{\connum}\mleft(\frac{\|\hat w^{(r)}-w_0\|}{2}+ \frac{\epsilon \sigma}{4}\mright)\nonumber\\
&\le  \|\hat w^{(r)}-w_0\|\mleft(1-\frac{1}{2\connum}\mright)  + \frac{\epsilon\sigma}{4\connum}\nonumber.
\end{align}

Using recursion, we have:
\begin{align*}
 \| \hat w^{(R+1)}-w_0\|&\le  \|\hat w^{(1)}-w_0\|\mleft(1-\frac{1}{2\connum}\mright)^{R}  + \sum_{i=0}^{R-1}\mleft(1-\frac{1}{2\connum}\mright)^{i} \frac{\epsilon\sigma}{4\connum}  \\
 &\le  \|\hat w^{(1)}-w_0\|\exp\mleft(-\frac{R}{2\connum}\mright)  + 2\connum \frac{\epsilon\sigma}{4\connum}\\ 
 &\le  \epsilon\sigma, 
\end{align*}
where the second inequality follows from the upper bound on the sum of infinite geometric series and the last inequality follows from the bound on $R$ and $\hat w^{(1)}=0$.
\end{proof}

\section{Final Estimation Guarantees}
\label{app:mainproof}

\begin{proof}[Proof of Theorem~\ref{thm:main}]
We show that with probability $\ge 1-\delta$, for each $r\in [R]$, the gradient computed by the algorithm satisfies  $\|\Delta^{(r)}- \Sigma_0(\hat w^{(r)}-w_0)\|\le \frac{1}{2}\|\hat w^{(r)}-w_0\|+ \frac{\epsilon \sigma}{4}$. Lemma~\ref{lem:mainrounds} then implies that for $R=\Omega(\connum\log\frac{M}{\sigma})$, the output returned by the algorithm $\hat w = \hat w^{(R+1)}$ satisfy $\|\hat w-w_0\|\le \epsilon\sigma$.

To show this, we fix $r$, and for this value of $r$, we show that with probability $\ge 1-\delta/R$, $\|\Delta^{(r)}- \Sigma_0(\hat w^{(r)}-w_0)\|\le \frac{1}{2}\|\hat w^{(r)}-w_0\|+ \frac{\epsilon \sigma}{4}$.
Since each round uses an independent set of samples, the theorem then follows by applying the union bound. 

First, we determine the bound on the clipping parameter.
From Theorem~\ref{thm:mainclip}, for $|S^{b^*}|/R=  \Omega(C^2\log 1/\delta')$, with probability $\ge 1-\delta'$, we have
\begin{align}\label{eq:finalkappa}
   \sqrt{\frac{8(C+1){\connum}\mleft(\E{}_{\dist_0}[(y-x \cdot w^{(r)})^2] \mright)}{\epsilon_1}}\le  \kappa^{(r)} \le 28\sqrt{\frac{2(C+1){\connum}\mleft(\E{}_{\dist_0}[(x \cdot (w^{(r)}-w_0))^2] +\sigma^2 \mright)}{\epsilon_1}}.
\end{align}
Next, employing Theorem~\ref{th:mainclipbias} and utilizing the lower bound on the clipping parameter in the above equation, we obtain the following bound on the norm of the expected difference between clipped and unclipped gradients:
\begin{align}
\mleft\|\E{}_{\dist_0}[(\nabla f(x,y,w^{(r)},\kappa^{(r)})- \Sigma_0(w^{(r)}-w_0)\mright\|\le \epsilon_1 \|w^{(r)}-w_0\| . 
\end{align}

Recall that in $\dbs$, at least $\smallfrac$ fraction of the batches contain samples from $\dist_0$. When $\dbs$ is divided into $R$ equal random parts, w.h.p. each part $\dbs^{(r)}$ will have at least $\smallfrac$ fraction of the batches containing samples from $\dist_0$. 

From Theorem~\ref{th:mainsubest}, if $|\dbs^{(r)}| = \frac{|\dbs|}{R} = \Omega\mleft(\frac{d}{\smallfrac\epsilon_2^2}\mleft(\frac{1}{\smallfrac\epsilon_2^2} +\frac{\cbound^2}{\connum}\mright)\log\frac{d}{\delta'}\mright)$, then with probability $\ge 1-\delta'$, the projection matrix $P^{(r)}$ satisfies
\begin{align}
 \|\E{}_{ \dist_0}[\nabla f(x,y,w^{(r)},\kappa^{(r)})]-P^{(r)}\E{}_{ \dist_0}[\nabla f(x,y,w^{(r)},\kappa^{(r)})]\| \le 4\epsilon_2\kappa^{(r)}\sqrt{\connum}.
\end{align}
The above equation shows subroutine \textsc{GradSubEst} finds projection matrix $P^{(r)}$ such that the expected value of clipped gradients projection is roughly the same as the expected value of the clipped gradient.

Next, we show that subroutine \textsc{GradEst} provides a good estimate of the expected value of clipped gradients projection.
Let $N$ denote the number of batches in $\dbm$ that have samples from $\dist_0$. If $N\ge \Omega(R + \log1/\delta')$ then with probability $\ge 1-\delta'$, $\dbm^{(r)}$ has $\Theta(N/R)$ batches sampled from $\dist_0$.
If each batch in $\dbm$ and batch $b^*$ has more than $\bsizemid$ samples, $\frac{\bsizemid}{R}=\Omega(\frac{\sqrt{\ell}}{\epsilon_2^2}\log(\frac{|\dbm|}{\delta'}))$, and $\frac{N\cdot \bsizemid}{R}\ge \Omega(\frac{\ell}{\epsilon_2^2}\log 1/\delta')$, then from Theorem~\ref{th:maingradest}, with probability $\ge 1-\delta'$
\begin{align}
    \mleft\|\Delta^{(r)} - \E{}_{ \dist_0}[P^{(r)} \nabla f(x,y,w^{(r)},\kappa^{(r)})]\mright\| \le 9\epsilon_2\kappa^{(r)} \sqrt{\connum}.
\end{align}

Combining the above three equations using triangle inequality,
\begin{align}
  \|\Delta^{(r)}- \Sigma_0(\hat w^{(r)}-w_0)\| \le  13\epsilon_2\kappa^{(r)} \sqrt{\connum} + \epsilon_1 \|w^{(r)}-w_0\|,
\end{align}
with probability $\ge 1-5\delta'$.

In equation~\eqref{eq:finalkappa} using the upper bound, $\E{}_{\dist_0}[(x \cdot (w^{(r)}-w_0))^2]\le \|w^{(r)}-w_0\|^2\|\Sigma_0\|\le \connum \|w^{(r)}-w_0\|^2$ we get
\[
 \kappa^{(r)} \le 28\sqrt{\frac{2(C+1){\connum}^2\|w^{(r)}-w_0\|^2+(C+1){\connum}\sigma^2}{\sqrt{\epsilon_1}}}\le \frac{28\sqrt{2(C+1)}}{\sqrt{\epsilon_1}}(\connum \|w^{(r)}-w_0\|+\sqrt{\connum}\sigma).
\]
Combining the two equations, 
\begin{align}
  \|\Delta^{(r)}- \Sigma_0(\hat w^{(r)}-w_0)\| \le  \frac{364\epsilon_2\sqrt{2(C+1)\connum}}{\sqrt{\epsilon_1}}(\connum \|w^{(r)}-w_0\|+\sqrt{\connum}\sigma) + \epsilon_1 \|w^{(r)}-w_0\|,
\end{align}
with probability $\ge 1-5\delta'$
There exist universal constants $c_1,c_2>0$ such that for $\epsilon_1 = c_1$ and $\epsilon_2 = \frac{c_2}{\connum\sqrt{C+1}}\mleft(\epsilon+\frac{1}{\sqrt{\connum}}\mright)$, the quantity on the right is bounded by $\|w^{(r)}-w_0\|/2+\epsilon\sigma/4$.
We choose these values for $\epsilon_1$ and $\epsilon_2$ and $\delta' = \frac{\delta}{5R}$.

From the above discussion, it follows that if $|\dbs| = \tilde\Omega\mleft(\frac{d}{\smallfrac^2\epsilon^4}\mright)$, $\bsizemid\ge \tilde\Omega(\frac{\sqrt{\ell}}{\epsilon^2})$,   and $\dbm$ has $\ge \frac{1}{\bsizemid}\tilde \Omega\mleft(\frac{\ell}{\epsilon^2}\mright)$ batches sampled from $\dist_0$, then with probability $\ge 1-\delta/R$,
\[
\|\Delta^{(r)}- \Sigma_0(\hat w^{(r)}-w_0)\|\le \frac{1}{2}\|\hat w^{(r)}-w_0\|+ \frac{\epsilon \sigma}{4}.
\]
Using $\ell =\min\{{k},\frac{1}{\epsilon^2{\smallfrac}}\}$, we get the bounds on the number of samples and batches required by the algorithm.
\end{proof}

\section{Proof of Lemma~\ref{lem:redsd}}\label{sec:vfgdg}
To establish the lemma, we first introduce and prove two auxiliary lemmas.

\begin{lemma}\label{lem:auxsubsp1}
For $k> 0$, and a probability distribution $(p_0,p_1,...,p_{k-1})$ over $k$ elements, let $Z = \sum_{i=0}^{k-1}p_0 z_iz_i^\intercal$, where $z_i$ are $d$-dimensional vectors. Then for all $\ell\ge 0 $, $\ell^{\text{th}}$ largest singular value of $Z$ is bounded by $\max_i\|z_i\|^2/\ell$.
\end{lemma}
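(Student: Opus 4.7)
The plan is to exploit the fact that $Z$ is positive semidefinite, so its singular values coincide with its (nonnegative) eigenvalues, and then use the elementary fact that $\ell$ times the $\ell$-th largest eigenvalue is bounded by the trace.

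First I would observe that $Z = \sum_{i=0}^{k-1} p_i z_i z_i^\intercal$ is PSD, since each $z_i z_i^\intercal$ is rank-one PSD and the $p_i$'s are nonnegative. Consequently the singular values of $Z$ equal its eigenvalues $\lambda_1 \ge \lambda_2 \ge \cdots \ge \lambda_d \ge 0$. Next I would bound the trace by using $\text{Tr}(z_i z_i^\intercal) = \|z_i\|^2$ together with $\sum_i p_i = 1$:
\[
\text{Tr}(Z) \;=\; \sum_{i=0}^{k-1} p_i \, \text{Tr}(z_i z_i^\intercal) \;=\; \sum_{i=0}^{k-1} p_i \|z_i\|^2 \;\le\; \max_i \|z_i\|^2.
\]

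Finally, since the $\ell$-th largest eigenvalue is no larger than the average of the top $\ell$ eigenvalues, and the sum of all eigenvalues of a PSD matrix equals its trace,
\[
\ell \cdot \lambda_\ell \;\le\; \sum_{j=1}^{\ell}\lambda_j \;\le\; \sum_{j=1}^{d}\lambda_j \;=\; \text{Tr}(Z) \;\le\; \max_i \|z_i\|^2,
\]
which after dividing by $\ell$ yields the claimed bound on the $\ell$-th singular value.

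There is no real obstacle: the result is essentially a one-line consequence of the trace bound for PSD matrices. The only thing to be careful about is reading the statement as $Z = \sum_i p_i z_i z_i^\intercal$ (the expression $p_0$ inside the sum in the excerpt appears to be a typo for $p_i$), since otherwise the normalization $\sum_i p_i = 1$ is what drives the trace bound.
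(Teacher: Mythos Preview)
Your proof is correct and essentially identical to the paper's own argument: the paper also bounds the sum of singular values by $\sum_i p_i\|z_i\|^2 \le \max_i\|z_i\|^2$ (which is just the trace computation you wrote) and then uses $\ell\cdot a_\ell \le \sum_{i\le\ell} a_i$. Your observation that $p_0$ in the displayed definition of $Z$ is a typo for $p_i$ is also correct.
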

\begin{proof}
Note that $Z$ is a symmetric matrix, so its left and right singular values are the same. Let $v_1,v_2,...$ be the singular vectors in the SVD decomposition of $Z$, and let $a_1 \le a_2 \le a_3 \le ...$ be the corresponding singular values. Using the properties of SVD, we have:
\begin{align*}
    \sum_{i}a_i= \sum_{i} v_i^\intercal Z v_i =   \sum_{i} v_i^\intercal\mleft(\sum_{j=0}^{k-1}p_jz_jz_j^\intercal\mright) v_i= \sum_{j=0}^{k-1} p_j \sum_{i} (v_i\cdot z_j)^2 \le \sum_{j=0}^{k-1}p_j\|z_j\|^2\le \max_j\|z_j\|^2.
\end{align*} 
Next, we have:
\[
\sum_{i}a_i\ge \sum_{i\le \ell}a_i \ge \sum_{i\le \ell}a_\ell = \ell \cdot a_\ell.
\]
Combining the last two equations yields the desired result.
\end{proof}

\begin{lemma}\label{lem:auxsubsp2}
Let $u_1,u_2,..,u_\ell\in \reals^d$ be $\ell$ mutually orthogonal unit vectors, and let $U = [u_1,u_2,...,u_\ell]\in \reals^{d\times \ell}$. 
For any set of $k$ vectors $z_0,z_1,...,z_{k-1}\in \reals^d$, non-negative reals $p_0,p_1,...,p_{k-1}$, and reals $a_1,a_2,...,a_\ell$, we have:
\[
\|(I-UU^\intercal)z_0\|^2 \le \frac{\mleft\|\sum_{i=1}^{k-1}p_i z_iz_i^\intercal - \sum_{j\in [\ell]}a_j u_ju_j^\intercal\mright\|}{p_0}.
\]
\end{lemma}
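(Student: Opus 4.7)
\textbf{Proof plan for Lemma~\ref{lem:auxsubsp2}.}

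The plan is to exploit the fact that $v := (I - UU^\intercal) z_0$ lies in the orthogonal complement of the span of $u_1, \ldots, u_\ell$, and then sandwich the quadratic form $v^\intercal Z v$ between two estimates, where $Z := \sum_{i} p_i z_i z_i^\intercal$ and $M := \sum_{j \in [\ell]} a_j u_j u_j^\intercal$. (I will take the indexing in $Z$ to run over all $i$ including $i=0$; if the statement as written omits the $i=0$ term, the argument below still works by moving that term to the right-hand side before applying the spectral-norm bound.)

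The first step is to observe the two structural facts that drive everything. Since $v$ is orthogonal to each $u_j$, we have $u_j^\intercal v = 0$ for every $j \in [\ell]$, so $v^\intercal M v = \sum_j a_j (u_j \cdot v)^2 = 0$. Moreover, because $(I - UU^\intercal)$ is an orthogonal projection, it is idempotent, so $v^\intercal z_0 = z_0^\intercal (I - UU^\intercal) z_0 = z_0^\intercal (I - UU^\intercal)^2 z_0 = \|v\|^2$.

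Using these, I would get the spectral-norm upper bound by writing
\[
v^\intercal Z v \;=\; v^\intercal (Z - M) v \;\le\; \|Z - M\|\,\|v\|^2,
\]
and the matching lower bound by expanding $Z$ and keeping only the $i=0$ term:
\[
v^\intercal Z v \;=\; \sum_{i} p_i (z_i \cdot v)^2 \;\ge\; p_0 (z_0 \cdot v)^2 \;=\; p_0 \|v\|^4,
\]
where the last equality uses the projection identity above. Combining the two estimates gives $p_0 \|v\|^4 \le \|Z - M\|\,\|v\|^2$, and dividing by $\|v\|^2$ (the case $\|v\|=0$ is trivial) yields the claimed bound.

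There is no real obstacle here; the only thing one has to spot is the right decomposition, namely that replacing $Z$ by $Z - M$ in the upper bound costs nothing because $v^\intercal M v$ vanishes, while the lower bound is just keeping a single term in the sum together with the projection identity $v^\intercal z_0 = \|v\|^2$. The argument is entirely linear-algebraic and requires no concentration or distributional input.
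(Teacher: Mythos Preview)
Your proposal is correct and follows essentially the same argument as the paper: define $v=(I-UU^\intercal)z_0$, use orthogonality of $v$ to each $u_j$ to replace $Z$ by $Z-M$ in the quadratic form, lower bound $v^\intercal Z v$ by $p_0(z_0\cdot v)^2=p_0\|v\|^4$ via the projection identity, and divide through by $\|v\|^2$. Your parenthetical remark about the sum index is also on point: the paper's own proof takes the sum from $i=0$, so the ``$i=1$'' in the displayed statement is a typo.
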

\begin{proof}
Let $v = (I-UU^\intercal)z_0$.
First we show that for all $j\in [\ell]$, the vectors $v$ and $u_j$ are orthogonal,
\[
\textstyle u_j^\intercal(I-UU^\intercal)z_0 = (u_j^\intercal\cdot z_0) - (u_j^\intercal\cdot z_0)  = 0. 
\]
Then,    
\[
\textstyle\mleft\|v^\intercal\mleft(\sum_{i=0}^{k-1}p_i z_iz_i^\intercal - \sum_{j\in [\ell]}a_j u_ju_j^\intercal\mright)v\mright\|=\mleft\|v^\intercal\mleft(\sum_{i=0}^{k-1}p_i z_iz_i^\intercal \mright)v\mright\|=\mleft\|\sum_{i=0}^{k-1}p_i(z_i^\intercal v)^2\mright\|\ge \mleft\|p_0(z_0^\intercal v)^2\mright\|
\]
Next, we have:
 \[
 \textstyle z_0^\intercal v = z_0^\intercal (I-UU^\intercal) v +z_0 UU^\intercal v =  z_0^\intercal (I-UU^\intercal) v = v^\intercal v = \|v\|^2
 \]

Combining the last two equations, we obtain:
\[
\textstyle\|v\|^2\cdot \mleft\|\sum_{i=0}^{k-1}p_i z_iz_i^\intercal - \sum_{j\in [\ell]}a_j u_ju_j^\intercal\mright\|\ge \mleft\|v^\intercal\mleft(\sum_{i=0}^{k-1}z_iz_i^\intercal - \sum_{j\in [\ell]}a_j u_ju_j^\intercal\mright)v\mright\|\ge p_0\|v\|^4.
\]
Dividing both sides by $\|v\|^2$ completes the proof.
\end{proof}

Next, combining the above two auxiliary lemmas we prove Lemma~\ref{lem:redsd}.

\begin{proof}[Proof of Lemma~\ref{lem:redsd}]
Let $\Lambda_{i}(\cdot )$ denote the $i^{th}$ largest singular value of a matrix. 
Let $\hat M$ be rank $\ell$ truncated-SVD of $M$, then it follows that,
\[
\|M-\hat M\| = \Lambda_{\ell+1}(M).
\]

First, we consider the case $\ell<k$.
By applying Weyl's inequality for singular values, we have
\[
\Lambda_{\ell+1}(M) \le \Lambda_{\ell+1}(Z) + \Lambda_{1}(M-Z) \le \frac{\max_j\|z_j\|^2}{\ell+1} +\|M-Z\|,
\]
where the last equation follows from Lemma~\ref{lem:auxsubsp1}.

First applying the triangle inequality, and then using the above two equations, we have
\[
\|\hat M-Z\|\le \|M-\hat M\| +\|M-Z\| \le \frac{\max_j\|z_j\|^2}{\ell+1} +2\|M-Z\|.
\]
Combining the above equation with Lemma~\ref{lem:auxsubsp2}, we have:
\[
\|(I-UU^\intercal)z_0\|^2 \le \frac{2(\ell+1) \|M-Z\| + \max_j\|z_j\|^2}{(\ell+1) p_0}.
\]
This completes the proof for $\ell<k$. To prove for the case $\ell>k$, we use $\Lambda_{\ell+1}(Z) = 0 $ in place of the bound $\Lambda_{\ell+1}(Z)\le \frac{\max_j\|z_j\|^2}{\ell+1}$ in the above proof for the case $\ell<k$.   
\end{proof}

\section{Removing the Additional Assumptions}
\label{sec:assump}

To simplify our analysis, we made two assumptions about the data distributions. We now argue that these assumptions are not limiting.

The first additional assumption was that there exists a constant $\cbound>0$ such that for all components $i\in \{0,1,\dots,k-1\}$ and random samples $(x,y)\sim \dist_i$, we have $\|x\|\le \cbound\sqrt{d}$ almost surely. In the non-batch setting, Cherapanamjeri et al. (2020)~\cite{cherapanamjeri2020optimal} have shown that this assumption is not limiting. 
They showed that if other assumptions are satisfied, then there exists a constant $\cbound$ such that with probability $\ge 0.99$, we have $\|x\|\le \cbound\sqrt{d}$. Therefore, disregarding the samples for which $|x|> \cbound\sqrt{d}$ does not significantly reduce the data size. Moreover, it has minimal impact on the covariance matrix and hypercontractivity constants of the distributions. This argument easily extends to the batch setting. In the batch setting, we first exclude samples from batches where $\|x\|> \cbound\sqrt{d}$. Then we remove small-sized batches with fewer than or equal to 2 samples and medium-sized batches that have been reduced by more than 10$\%$ of their original size.
It is easy to show that w.h.p. the fraction of medium and small size batches that gets removed for any component is at most $10\%$. THence, this assumption can be removed with a small increase in the batch size and the number of required samples in our main results.

Next, we address the assumption that the noise distribution is symmetric. We can handle this by employing a simple trick. Consider two independent and identically distributed (i.i.d.) samples $(x_1,y_1)$ and $(x_2,y_2)$, where $y_i = w^* \cdot x_i +\eta_i$. We define $x=(x_1-x_2)/\sqrt{2}$, $y=(y_1-y_2)/\sqrt{2}$, and $\eta=(\eta_1-\eta_2)/\sqrt{2}$. It is important to note that the distribution of $\eta$ is symmetric around 0, and the covariance of $x$ is the same as that of $x_i$, while the variance of $\eta$ is the same as that of $\eta_i$. Furthermore, we have $y=w^* \cdot x+\eta$. Therefore, the new sample $(x,y)$ obtained by combining two i.i.d. samples satisfies the same distributional assumptions as before, and in addition, the noise distribution is symmetric. We can combine every two samples in a batch using this approach, which only reduces the batch size of each batch by a constant factor of 1/2. Thus, the assumption of symmetric noise can be eliminated by increasing the required batch sizes in our theorems by a factor of 2.

\section{More Simulation Details}
\label{app:sim}

{\bf Setup.} We have sets $\dbs$ and $\dbm$ of small and medium size batches and $k$ distributions $\dist_i$ for $i\in \{0,1,\dots,k-1\}$. For a subset of indices $I\subseteq \{0,1,\dots,k-1\}$, both $\dbs$ and $\dbm$ have a fraction of $\alpha$ batches that contain i.i.d. samples from $\dist_i$ for each $i\in I$. And for each $i\in \{0,1,\dots,k-1\}\setminus I$ in the remaining set of indices, $\dbs$ and $\dbm$ have $(1-|I|/16)/(k-|I|)$ fraction of batches, that have i.i.d samples from $\dist_i$. In all figures the output noise is distributed as $\cN(0,1)$.

All small batches have $2$ samples each, while medium-size batches have $\bsizemid$ samples each, which we vary from $4$ to $32$, as shown in the plots. We fix data dimension $d=100$, $\alpha=1/16$, number of small batches to $|\dbs| = \min\{8dk^2,8d/\alpha^2\}$ and the number of medium batches to $|\dbm| = 256$.  In all the plots, we average our 10 runs.

{\bf Evaluation.} Our objective is to recover a small list containing good estimates for the regression vectors of $\dist_i$ for each $i\in I$.
We compare our proposed algorithm's performance with that of the algorithm in \cite{kong2020meta}. We generate lists of regression vector estimates $L_{\mathrm{Ours}}$ and $L_{\mathrm{KSSKO}}$ using our algorithm and \cite{kong2020meta}, respectively. Then, we create 1600 new batches, each containing $n_{new}$ i.i.d samples randomly drawn from the distribution $\dist_i$, where for each batch, index $i$ is chosen randomly from $I$.

Each list enables the clustering of the new sample batches. To cluster a batch using a list, we assign it to the regression vector in the list that achieves the lowest mean square error (MSE) for its samples.

To evaluate the average MSE for each algorithm,  for each clustered batch, we generate additional samples from the distribution that the batch was generated from and calculate the error achieved by the regression vector in the list that the batch was assigned to. We then take the average of this error over all sets. We evaluate both algorithms' performance for new batch sizes $n_{new}=4$ and $n_{new}=8$, as shown in the plots.

{\bf Minimum distance between regression vectors.} Our theoretical analysis suggests that our algorithm is robust to the case when the minimum distance between the regression vectors are much smaller than their norms. In order to test this, in Figure~\ref{Fig:2}, we generate half of the regression vectors with elements independently and randomly distributed in $U[9,11]$, and the other half with elements independently and randomly distributed in $U[-11,-9]$. Notably, the minimum gap between the vectors, in this case, is much smaller than their norm. It can be seen that the performance gap between our algorithm and the one in~\cite{kong2020meta} increases significantly as we deviate from the assumptions required for the latter algorithm to work.

{\bf Number of different distributions.} Our algorithm can notably handle very large $k$ (even infinite) while still being able to recover regression vectors for the subgroups that represent sufficient fraction of the data.  In the last plot, we set $k=100$ and $I=\{0,1,2,3\}$ to highlight this ability. In this case, the first four distributions each generate a $1/16$ fraction of batches, and the remaining 96 distributions each generate a $1/128$ fraction of batches. We provide the algorithm with one additional medium-size batch from $\dist_i$ for each $i\in I$ for identification of a list of size $I$. The results are plotted in Figure~\ref{Fig:4}, where we can see that the performance gets better with medium batch size as expected. Note that the algorithm in~\cite{kong2020meta} cannot be applied to this scenario.

\begin{figure}[!htb]
    \begin{minipage}{0.48\textwidth}
     \centering
     \includegraphics[width=.85\linewidth]{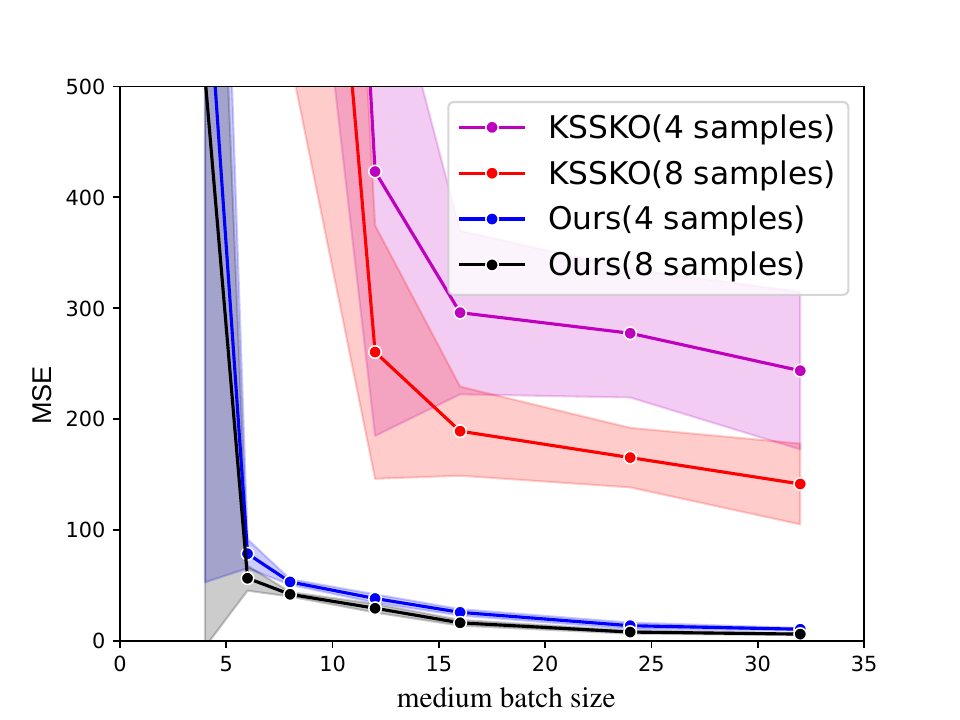}
    \caption{Same input dist. (standard normal), $k=16$, {small minimum distance between regression vectors}, recovering all}\label{Fig:2}
   \end{minipage}\hfill
   \begin{minipage}{0.48\textwidth}
     \centering
     \includegraphics[width=.85\linewidth]{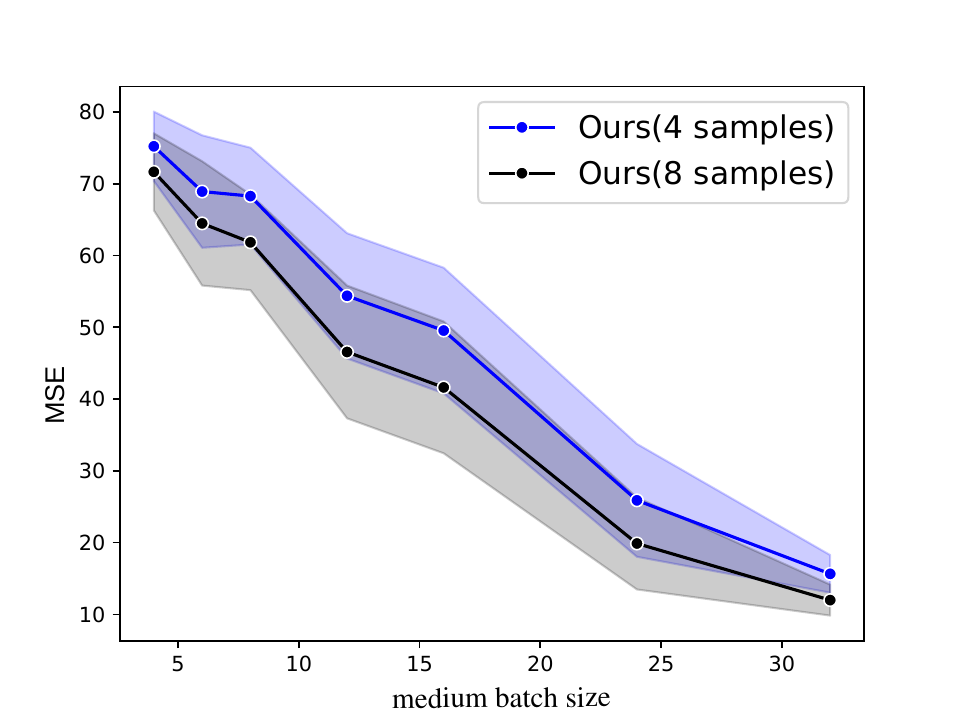}
     \caption{Different input dist, {$k=100$}, large minimum distance between regression vectors, recovering 4 components that have $1/16$ fraction of batches each}\label{Fig:4}
   \end{minipage}
\end{figure}

\end{document}